\newcolumntype{x}[1]{>{\centering\let\newline\\\arraybackslash\hspace{0pt}}p{#1}}
\newcommand{\sref}[1]{(\ref{#1})}
\newcommand{\refmainpaper}[2]{\autoref{#2}}
\newcommand{\eg}{\textit{e.g.},\ }
\newcommand{\R}{\ensuremath{\mathbb{R}}}
\newcommand{\T}{^\top}
\newcommand*\dif{\mathop{}\!\mathrm{d}}
\newcommand{\gauss}{\mathcal{N}}
\newcommand{\man}{\mathcal{M}}
\newcommand{\ef}{\phi}
\newcommand{\ev}{\lambda}
\newcommand{\inner}[1]{\langle #1 \rangle}
\newcommand{\fat}[1]{\mathbf{#1}}
\newcommand{\pp}{\fat{p}}
\newcommand{\qq}{\fat{q}}
\newcommand{\xx}{\fat{x}}
\newcommand{\yy}{\fat{y}}
\newcommand{\dd}{\fat{d}}
\newcommand{\uu}{\fat{u}}
\DeclareMathOperator{\hntk}{h_{\text{NTK}}}
\DeclareMathOperator{\kntk}{k_{\text{NTK}}}
\DeclareMathOperator{\hg}{h_{\gamma}}
\DeclareMathOperator{\expec}{\mathbb{E}}
\newcommand{\sphere}{\mathbb{S}}
\newcommand{\shspace}[2]{\mathbf{H}^{#1}_{#2}}
\newsavebox\CBox
\def\textBF#1{\sbox\CBox{#1}\resizebox{\wd\CBox}{\ht\CBox}{\textbf{#1}}}
\newcommand\notsotiny{\@setfontsize\notsotiny\@vipt\@viipt}
\newcommand{\pseudoparagraph}[1]{\paragraph{#1. }}
\definecolor{tscolor}{RGB}{117,112,179}
\definecolor{TableColor}{RGB}{204,229,255}
\newcolumntype{a}{>{\columncolor{TableColor}}c}
\DeclareMathOperator{\nn}{\mathcal{F}}
\DeclareMathOperator{\nnT}{\nn_{\theta}}
\DeclareMathOperator{\nnE}{\nn_{\theta}^E}
\newcommand*\Laplace{\mathop{}\!\mathbin\bigtriangleup}
\DeclareMathOperator{\comp}{\circ}
\begin{document}

\title{Intrinsic Neural Fields: Learning Functions\texorpdfstring{\\}{ }on Manifolds}

\titlerunning{Intrinsic Neural Fields: Learning Functions on Manifolds}
\author{Lukas Koestler \inst{1\ast} \and
Daniel Grittner\inst{1\ast} \and
Michael Moeller\inst{2} \and
Daniel Cremers \inst{1} \and
Zorah L\"ahner\inst{2}}
\authorrunning{L.~Koestler, D.~Grittner et al.}
\institute{Technical University of Munich\\
\email{\{lukas.koestler,daniel.grittner,cremers\}@tum.de} \and
University of Siegen\\
\email{\{michael.moeller,zorah.laehner\}@uni-siegen.de}\\
$^\ast$ Equal contribution}
\maketitle

\begin{abstract}
Neural fields have gained significant attention in the computer vision community due to their excellent performance in novel view synthesis, geometry reconstruction, and generative modeling. Some of their advantages are a sound theoretic foundation and an easy implementation in current deep learning frameworks. 
While neural fields have been applied to signals on manifolds, \eg for texture reconstruction, their representation has been limited to extrinsically embedding the shape into Euclidean space. The extrinsic embedding ignores known intrinsic manifold properties and is inflexible wrt. transfer of the learned function.
To overcome these limitations, this work introduces intrinsic neural fields, a novel and versatile representation for neural fields on manifolds. Intrinsic neural fields combine the advantages of neural fields with the spectral properties of the Laplace-Beltrami operator.
We show theoretically that intrinsic neural fields inherit many desirable properties of the extrinsic neural field framework but exhibit additional intrinsic qualities, like isometry invariance. 
In experiments, we show intrinsic neural fields can reconstruct high-fidelity textures from images with state-of-the-art quality and are robust to the discretization of the underlying manifold.
We demonstrate the versatility of intrinsic neural fields by tackling various applications: texture transfer between deformed shapes \& different shapes, texture reconstruction from real-world images with view dependence, and discretization-agnostic learning on meshes and point clouds.
\end{abstract}

\begin{figure}
    \centering
    \begin{overpic}[trim={0cm, 1cm, 0cm, 2cm},clip,width=\linewidth]{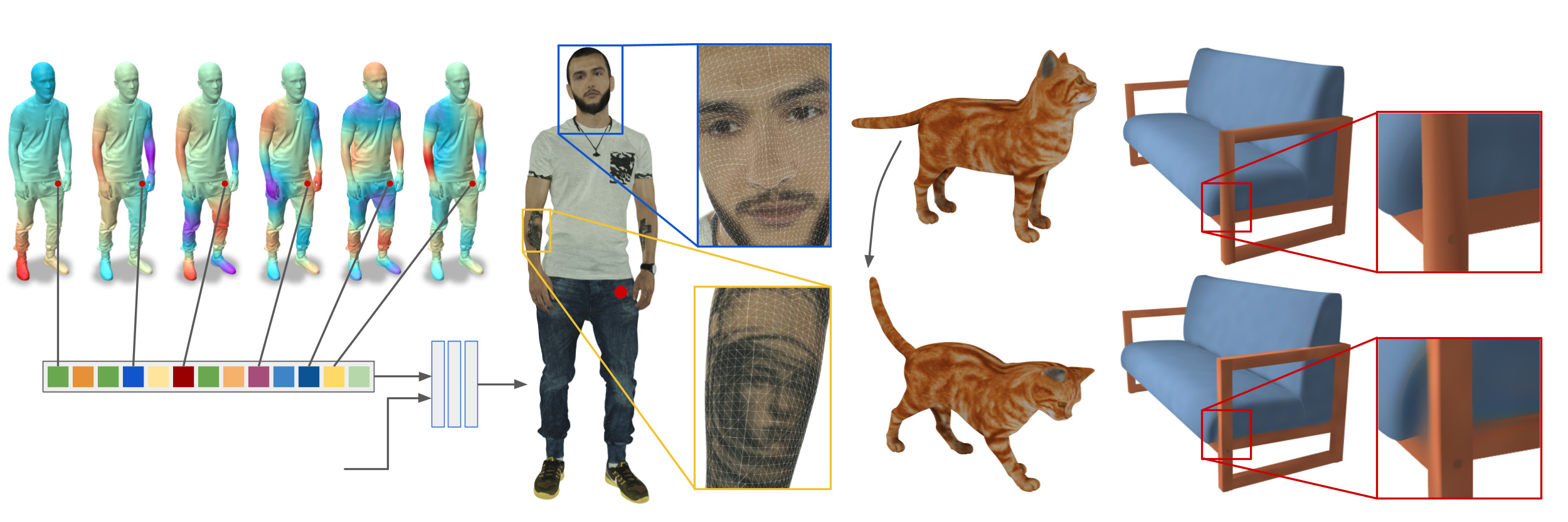}
    \put(6,5){\tiny{embedding $\gamma(\pp)$}}
    \put(3,2){\tiny{optional parameters,}}
    \put(3,0){\tiny{like view direction}}
    \put(27.5,2.5){\scriptsize{$f_\theta$}}
    \put(56.5,15){\tiny{texture transfer}}
    \put(56.5,13.5){\tiny{without retraining}}
    \put(2,30){\tiny{Laplace-Beltrami eigenfunctions }}
    \put(4.5,20){\scriptsize{$\pp$}}
    \put(4,27){\tiny{$\phi_0$}}
    \put(9,27){\tiny{$\phi_3$}}
    \put(14.5,27){\tiny{$\phi_5$}}
    \put(20,27){\tiny{$\phi_8$}}
    \put(25,27){\tiny{$\phi_{10}$}}
    \put(30,27){\tiny{$\phi_{11}$}}
    \put(85,29){\tiny{Ours}}
    \put(75,-1.5){\tiny{Random Fourier Features}}
    \put(0,28){\scriptsize{a)}}
    \put(54,28){\scriptsize{b)}}
    \put(71,28){\scriptsize{c)}}
    \put(35.5,15){\tiny{$\mathcal{F}_\theta(\pp)$}}
    \end{overpic}
    \caption{(a) Overview of our method. We use the eigenfunctions $\ef_i$ of the Laplace-Beltrami operator (LBO) at each point as a point embedding $\gamma(\pp)$. This overcomes the spectral bias of the multilayer perceptron (MLP) $f_{\theta}$, and hence the combined \emph{intrinsic neural field} $\nnT$ can represent a high-frequency function on the surface. Notice that $\pp$ can be inside a triangle, and the function is clearly more detailed than the discretization (\textit{insets}). (b) An intrinsic neural texture field trained on one shape (\textit{top}) can be transferred to a new shape (\textit{bottom}) without retraining. (c) Due to our intrinsic approach (LBO eigenfunctions) local geometry is maintained in close but separate parts, whereas an extrinsic approach (Random Fourier Features~\cite{DBLP:conf/nips/TancikSMFRSRBN20}) shows bleeding artifacts when trained with sparse supervision.}
    \label{fig:teaser}
\end{figure}

\section{Introduction}

Neural fields have grown incredibly popular for novel view synthesis since the breakthrough work by Mildenhall et al.~\cite{DBLP:conf/eccv/MildenhallSTBRN20}.
They showed that neural radiance fields together with differentiable volume rendering can be used to reconstruct scenes and often yield photorealistic renderings from novel viewpoints.
This inspired work in related fields, \eg human shape modeling \cite{palafox2021npm}, shape and texture generation from text \cite{DBLP:journals/corr/abs-2112-03221}, and texture representation on shapes \cite{DBLP:conf/iccv/OechsleMNSG19,DBLP:conf/rt/BaatzGPRN21}, where neural fields are able to generate a wide variety of functions with high fidelity.

These methods use neural fields as functions from a point in Euclidean space to the quantity of interest. 
While this is valid for many applications, for others, the output actually lives on a general manifold. 
For example, texture mappings define a high-frequency color function on the surface of a 3D object. 
TextureFields~\cite{DBLP:conf/iccv/OechsleMNSG19} and Text2Mesh~\cite{DBLP:journals/corr/abs-2112-03221} solve this discrepancy by defining a mapping of each surface point to its Euclidean embedding and then learning the neural field there.
Both show that they can achieve detail preservation above the discretization level, but the detour to Euclidean space has drawbacks.
The Euclidean and geodesic distance between points can differ significantly. 
This is important on intricate shapes with fine geometric details that overlap because the local geometry prior is lost.
Further, extrinsic representations cannot be used in the presence of surface deformations without retraining or applying heuristics.

Similar challenges have been solved in geometry processing by using purely intrinsic representations, most famously properties derived from the Laplace-Beltrami operator (LBO).
Some of the main advantages of the LBO are its invariance under rigid and isometric deformations and reparametrization. 
We follow this direction by defining intrinsic neural fields on manifolds independent of the extrinsic Euclidean embedding and thus inherit the favorable properties of intrinsic representations. This is enabled by the fact that random Fourier features~\cite{DBLP:conf/nips/TancikSMFRSRBN20}, an embedding technique that enabled the recent success of Euclidean neural fields, have an intrinsic analog based on the LBO.
The result is a fully differentiable method that can learn high-frequency information on any 3D geometry representation that admits the computation of the LBO.
A schematic overview of our method can be found in Figure~\ref{fig:teaser}.
Our main theoretical and experimental \textbf{contributions} are:

\begin{itemize}
    \item We introduce \textbf{intrinsic neural fields}, a novel and versatile representation for neural fields on manifolds. Intrinsic neural fields combine the advantages of neural fields with the spectral properties of the Laplace-Beltrami operator.
    \item We extend the \textbf{neural tangent kernel analysis} of \cite{DBLP:conf/nips/TancikSMFRSRBN20} to the manifold setting. This yields a proof characterizing the stationarity of the kernel induced by intrinsic neural fields and insight into their spectral properties.
    \item We show that intrinsic neural fields can \textbf{reconstruct high-fidelity textures} from images with state-of-the-art quality. 
    \item We demonstrate the versatility of intrinsic neural fields by tackling \textbf{various applications}: texture transfer between isometric and non-isometric shapes, texture reconstruction from real-world images with view dependence, and discretization-agnostic learning on meshes and point clouds.
\end{itemize}

\noindent We will release the full source code for all experiments and the associated data along with the final publication.

This work studies how a \emph{neural field} can be defined on a manifold. Current approaches use the \emph{extrinsic Euclidean embedding} and define the neural field on the manifold as a Euclidean neural field in the extrinsic embedding space -- we describe this approach in \autoref{sec:euclidean_neural_fields}. In contrast, our approach uses the well-known Laplace-Beltrami Operator (LBO), which we briefly introduce in \autoref{sec:lbo}. The final definition of \emph{intrinsic neural fields} is given in \autoref{sec:intrinsic_neural_fields}. The experimental results are presented in \autoref{sec:applications}.

\section{Related Work}
This work investigates neural fields for learning on manifolds, and we will only consider directly related work in this section. We point interested readers to the following overview articles: 
neural fields in visual computing \cite{xie2021neuralfield}, advances in neural rendering \cite{DBLP:journals/corr/abs-2111-05849}, and an introduction into spectral shape processing \cite{spectralgeometrytut}.

\pseudoparagraph{Neural Fields}
While representing 3D objects and scenes with coordinate-based neural networks, or neural fields, has already been studied more than two decades ago \cite{DBLP:conf/rt/GarganN98,DBLP:conf/3dim/PiperakisK01,DBLP:conf/graphite/PengS04}, the topic has gained increased interest following the breakthrough work by Mildenhall et al.~\cite{DBLP:conf/eccv/MildenhallSTBRN20}. They show that a Neural Radiance Field (NeRF) %
often yields photorealistic renderings from novel viewpoints. One key technique underlying this success is positional encoding, which transforms the three-dimensional input coordinates into a higher dimensional space using sines and cosines with varying frequencies. This encoding overcomes the low-frequency bias of neural networks \cite{DBLP:conf/icml/RahamanBADLHBC19,DBLP:conf/icml/BasriGGJKK20} and thus enables high-fidelity reconstructions. The aforementioned phenomenon is analyzed using the neural tangent kernel~\cite{DBLP:conf/nips/JacotHG18} by Tancik et al.~\cite{DBLP:conf/nips/TancikSMFRSRBN20}, and our analysis extends theirs from Euclidean space to manifolds. Simultaneously to Tancik et al., Sitzmann et al.~\cite{DBLP:conf/nips/SitzmannMBLW20} use periodic activation functions for neural scene representation, which is similar to the above-mentioned positional encoding \cite{DBLP:conf/wacv/BenbarkaHRZ22}.
Additionally, many other works \cite{DBLP:journals/corr/abs-2112-01917,DBLP:journals/corr/abs-2112-11577,DBLP:journals/corr/abs-2111-15135,DBLP:journals/corr/abs-2110-13572,DBLP:journals/corr/abs-2111-08918,hertz2021sape,DBLP:journals/corr/abs-2107-02561,DBLP:conf/ijcai/WangLYT21,DBLP:journals/corr/abs-2104-03960} offer insights into neural fields and their embedding functions. However, none of these works considers neural fields on manifolds.

\pseudoparagraph{Neural Fields on Manifolds}
Prior works \cite{DBLP:conf/iccv/OechsleMNSG19,DBLP:conf/eccv/ChibaneP20,DBLP:conf/rt/BaatzGPRN21,DBLP:conf/cvpr/XiangXHHS021,palafox2021npm,DBLP:conf/cvpr/MorrealeAKM21,yifan2022geometryconsistent,DBLP:journals/corr/abs-2112-03221,DBLP:journals/corr/abs-2110-05433} use neural fields to represent a wide variety of quantities on manifolds.
Oechsle et al.~\cite{DBLP:conf/iccv/OechsleMNSG19} use the extrinsic embedding of the manifold to learn textures as multilayer perceptrons. Their Texture Fields serve as an important baseline for this work.
NeuTex by Xiang et al.~\cite{DBLP:conf/cvpr/XiangXHHS021} combines neural, volumetric scene representations with a 2D texture network to facilitate interpretable and editable texture learning. To enable this disentanglement, their method uses mapping networks from the 3D space of the object to the 2D space of the texture and back. We compare with an adapted version of their method that utilizes the known geometry of the object.
Baatz et al.~\cite{DBLP:conf/rt/BaatzGPRN21} introduce NeRF-Tex, a combination of neural radiance fields (NeRFs) and classical texture maps. Their method uses multiple small-scale NeRFs to cover the surface of a shape and represent mesoscale structures, such as fur, fabric, and grass. Because their method focuses on mesoscale structures and artistic editing, we believe that extending the current work to their setting is an interesting direction for future research.

Additionally, neural fields have been used to represent quantities other than texture on manifolds.
Palafox et al.~\cite{palafox2021npm} define a neural deformation field that maps points on a canonical shape to their location after the shape is deformed. This model is applied to generate neural parametric models, which can be used similarly to traditional parametric models like SMPL~\cite{DBLP:journals/tog/LoperM0PB15}.
Yifan et al.~\cite{yifan2022geometryconsistent} decompose a neural signed distance function (SDF) into a coarse SDF and a high-frequency implicit displacement field.
Morreale et al.~\cite{DBLP:conf/cvpr/MorrealeAKM21} define neural surface maps, which can be used to define surface-to-surface correspondences among other applications.
Text2Mesh~\cite{DBLP:journals/corr/abs-2112-03221} uses a coarse mesh and a textual description to generate a detailed mesh and associated texture as neural fields.

\pseudoparagraph{Intrinsic Geometry Processing}
Intrinsic properties are a popular tool in geometry processing, especially in the analysis of deformable objects.
Two of the most basic intrinsic features are Gauss curvature and intrinsic point descriptors based on the Laplace-Beltrami operator (LBO). They have been heavily used since the introduction of the global point signature \cite{rustamov2007laplace} and refined since then \cite{sun09hks,aubry11wks}.
Intrinsic properties are not derived from a manifold's embedding into its embedding space but instead arise from the pairwise geodesic distance on the surface. 
These are directly related to natural kernel functions on manifolds, \eg shown by the efficient approximation of the geodesic distance from the heat kernel \cite{crane17geoinheat}. 
Kernel functions as a measure of similarity between points are very popular in geometry processing. They have been used in various applications, \eg in shape matching \cite{vestner17kernel,burghard17embedding,liu08kernel}, parallel transport \cite{sharp19vectorheat}, and 
robustness wrt.\ discretization \cite{vaxman10heat,sharp2021diffusion}.
Manifold kernels naturally consider the local and global geometry \cite{boscaini16anisotropic}, and our approach follows in this direction by showing a natural extension of neural fields on manifolds.

\section{Background}
Differential geometry offers two viewpoints onto manifolds: intrinsic and extrinsic. 
The extrinsic viewpoint studies the manifold $\man$ through its \emph{Euclidean embedding}
where each point $\pp \in \man$ is associated with its corresponding point in Euclidean space.
In contrast, the intrinsic viewpoint considers only properties of points independent of the extrinsic embedding, such as, the geodesic distance between a point pair.
Both can have advantages depending on the method and application. 
An intrinsic viewpoint is by design invariant against certain deformations in the Euclidean embedding, like rigid transformations but also pose variations that are hard to characterize in the extrinsic view.

\subsection{Neural Fields for Euclidean Space} \label{sec:euclidean_neural_fields}
A Euclidean neural field $\nnE : \R^m \to \R^o$ is a neural network that maps points in Euclidean space to vectors and is parametrized by weights $\theta \in \R^p$. The network is commonly chosen to be a multilayer perceptron (MLP).
Let $\man \subset \R^m$ be a manifold with a Euclidean embedding into $\R^m$.
Naturally, the restriction of $\nnE$ to $\man$ leads to a neural field on a manifold: 
$\nnT: \man \to \R^o \,, \nnT(x) = \nnE(x) \,$.

Natural signals, such as images and scenes, are usually quite complex and contain high-frequency variations. Due to spectral bias, standard neural fields fail to learn high-frequency functions from low dimensional data \cite{DBLP:conf/nips/TancikSMFRSRBN20,DBLP:conf/nips/SitzmannMBLW20} and generate blurry reconstructions. With the help of the neural tangent kernel, it was proven that the composition $\nnE \comp \gamma$ of a higher dimensional Euclidean neural field and a random Fourier feature (RFF) encoding $\gamma$ helps to overcome the spectral bias and, consequently, enables the neural field to better represent high-frequency signals. The RFF encoding $\gamma: \mathbb{R}^m \rightarrow \mathbb{R}^d$ with $d \gg m$ is defined as 
\begin{equation} \label{eqn:general_fourier_feature_encoding}
    \gamma(\xx) = [a_1\cos(\fat{b}_1^\top \xx),\, a_1\sin(\fat{b}_1^\top \xx), \dots,\, a_{d/2}\cos(\fat{b}_{d/2}^\top \xx), a_{d/2}\sin(\fat{b}_{d/2}^\top \xx)],
\end{equation}
where the coefficients $\fat{b}_i \in \R^m$ are randomly drawn from the multivariate normal distribution $\gauss(\fat{0}, (2 \pi \sigma)^2 \fat{I})$. The factors $a_i$ are often set to one for all $i$ and $\sigma > 0$ is a hyperparameter that offers a trade-off between reconstruction fidelity and overfitting of the training data. 

\subsection{The Laplace-Beltrami Operator} \label{sec:lbo}

In the following, we briefly introduce the Laplace-Beltrami operator (LBO) and refer the interested reader to \cite{rustamov2007laplace} for more details.
The LBO $\Laplace_\man$ is the generalization of the Euclidean Laplace operator on general closed compact manifolds. 
Its eigenfunctions $\ef_i: \man \to \R$ and eigenvalues $\ev_i \in \R$ are the non-trivial solutions of the equation
$\Laplace_\man \ef_i = \ev_i \ef_i \,$.
The eigenvalues are non-negative and induce a natural ordering which we will use for the rest of the paper.
The eigenfunctions are orthonormal to each other, build an optimal basis for the space of square-integrable functions \cite{parlett98eigenvalue}, and are frequency ordered allowing a low-pass filtering by projecting onto the first $k$ eigenfunctions.
Hence, a function $f: \man \to \R \in L^2(\man)$ can be expanded in this basis:
\begin{equation} \label{eqn:laplace_beltrami_expansion}
    f = \sum_{i=0}^\infty c_i \ef_i = \sum_{i=0}^\infty \inner{f, \ef_i} \ef_i \approx \sum_{i=0}^k \inner{f, \ef_i} \ef_i\,,
\end{equation}
where the quality of the last $\approx$ depends on the amount of high-frequency information in $f$.
The projection onto the LBO basis is similar to the Fourier transform, allowing the same operations, and thus we use the LBO basis as the replacement for Fourier features. %
In fact, if $[0,1]^2$ is considered as a manifold, its LBO eigenfunctions with different boundary conditions are exactly combinations of sines and cosines.
Furthermore, the eigenfunctions of the LBO are identical up to sign ambiguity for isometric shapes since the LBO is entirely intrinsic.

\section{Intrinsic Neural Fields}\label{sec:intrinsic_neural_fields}
We introduce \emph{Intrinsic Neural Fields} based on the eigenfunctions of the Laplace-Beltrami operator (LBO) which can represent detailed surface information, like texture, directly on the manifold.
In the presence of prior geometric surface information, it is more efficient than using the extrinsic Euclidean embedding space which is often mainly empty. 
Additionally, this representation is naturally translation and rotation invariant, surface representation invariant, as well as invariant to isometric deformations.

\begin{definition}[Intrinsic Neural Field]\label{def:intrinsicneuralfield}
Let $\man \subset \R^m$ be a closed, compact manifold and $\ef_1, \dots, \ef_d$ be the first $d$ Laplace-Beltrami eigenfunctions of $\man$. 
We define an \textbf{intrinsic neural field} $\nnT : \man \to \R^o$ as
\begin{equation} \label{eqn:intrinsic_neural_fields}
    \nnT(\pp) = (f_\theta \comp \gamma)(\pp) = f_\theta(a_1 \ef_1(\pp), \dots, a_d \ef_d(\pp)) \,.
\end{equation} 
where $\gamma: \man \to \R^d, \gamma(\pp) = (a_1 \ef_1(\pp), \dots, a_d \ef_d(\pp))$, with $a_i \geq 0$ and $\lambda_i = \lambda_j \Rightarrow a_i = a_j$, is our embedding function and $f_\theta : \R^d \to \R^o$ represents a neural network with weights $\theta \in \R^p$.
\end{definition}

Within this work, we will use $a_i = 1$, which has proven sufficient in praxis, and multilayer perceptrons (MLPs) for $f_\theta$, as this architectural choice is common for Euclidean neural fields \cite{xie2021neuralfield}. 
A detailed description of the architecture can be found in \autoref{fig:intrinsic_network_arch}.
It is possible to choose different embedding functions $\gamma$ but we choose the LBO eigenfunctions as they have nice theoretical properties (see Section~\ref{sec:stationary}) and are directly related to Fourier features.

In \autoref{fig:theory_1d}, we apply intrinsic neural fields to the task of signal reconstruction on a 1D manifold to give an intuition about how it works and what its advantages are. 
The results show that the neural tangent kernel (NTK) for intrinsic neural fields exhibits favorable properties, which we prove in \autoref{sec:stationary}.
We show that we can represent high-frequency signals on manifold surfaces that go far beyond the discretization level. 
In \autoref{sec:applications}, we apply the proposed intrinsic neural fields to a variety of tasks including texture reconstruction from images, texture transfer between shapes without retraining, and view-dependent appearance modeling. 

\begin{figure}
\centering
\begin{subfigure}[b]{0.2057142857142857\textwidth}
\centering
\includegraphics[width=\textwidth]{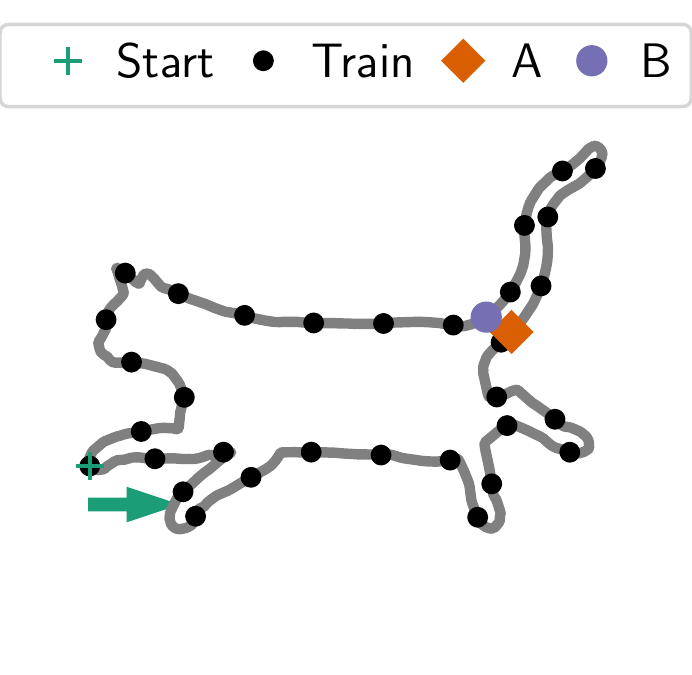}
\caption{Manifold}
\label{fig:theory_1d_sketch}
\end{subfigure}
\hfill
\begin{subfigure}[b]{0.3771428571428571\textwidth}
\centering
\includegraphics[width=\textwidth]{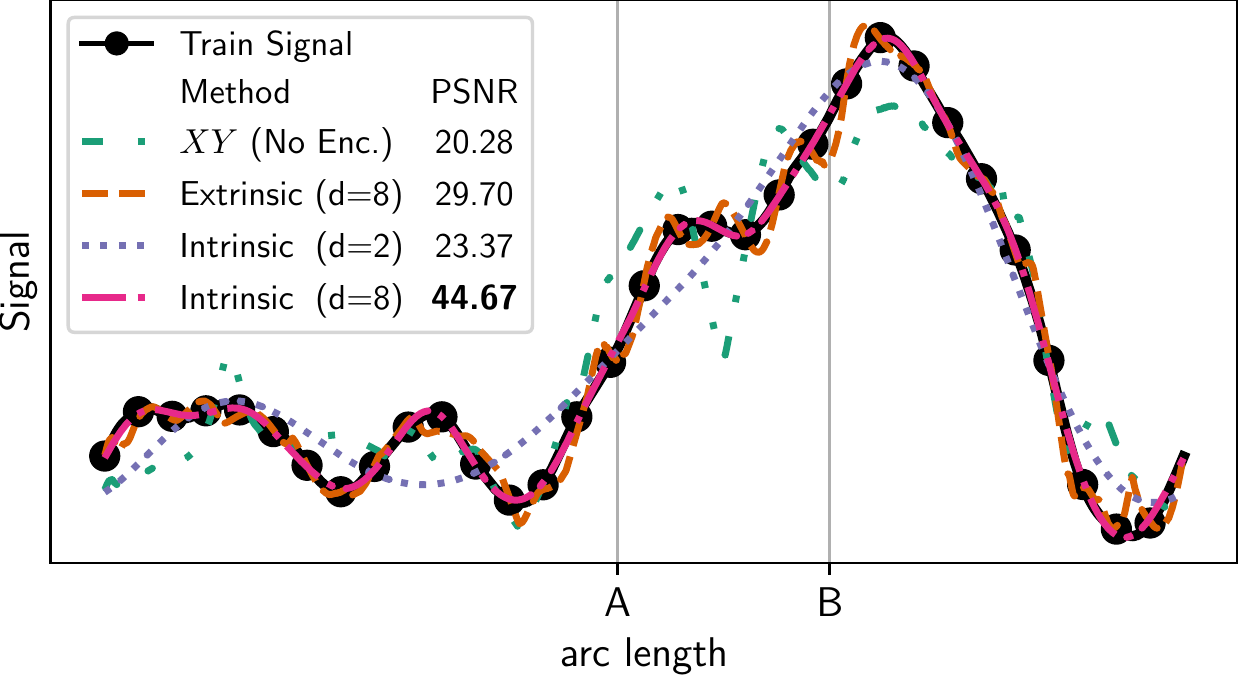}
\caption{Reconstructed Signals}
\label{fig:theory_1d_signal}
\end{subfigure}
\hfill
\begin{subfigure}[b]{0.3771428571428571\textwidth}
\centering
\includegraphics[width=\textwidth]{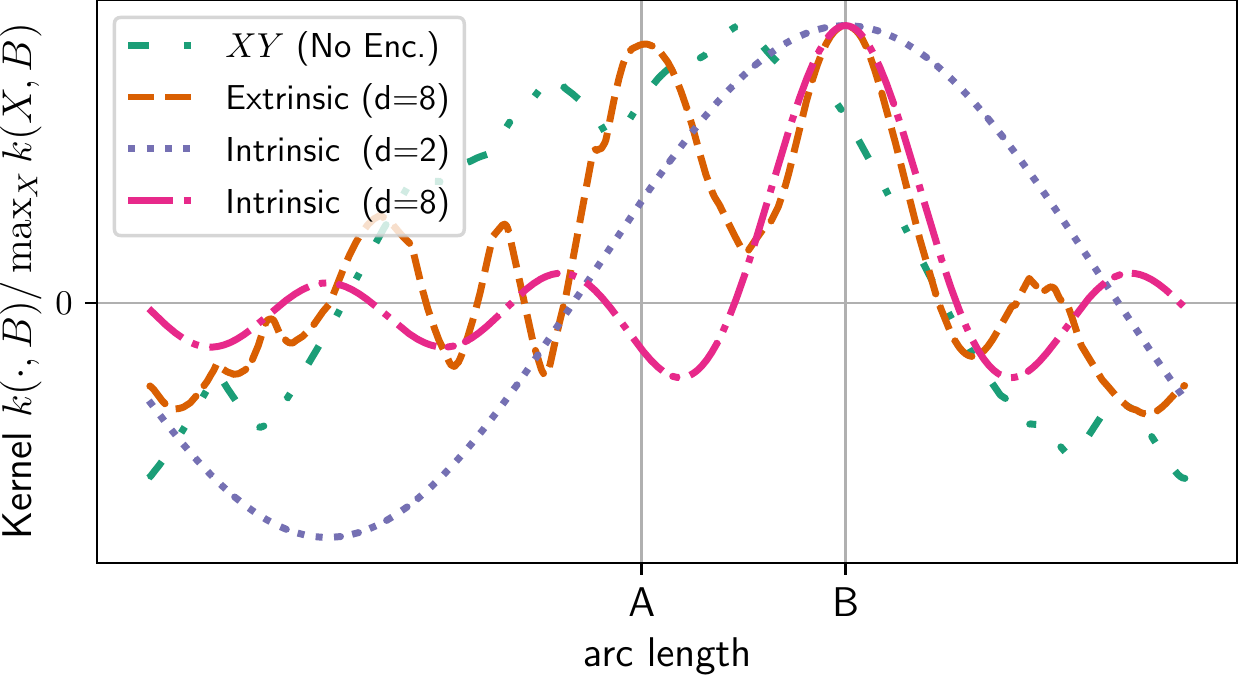}
\caption{Normalized Kernels around B}
\label{fig:theory_1d_kernel}
\end{subfigure}
\\[1.1ex]
\begin{subfigure}[b]{0.22\textwidth}
\centering
\includegraphics[width=\textwidth]{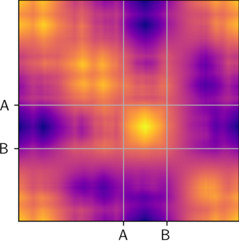}
\caption{\vspace{-0.2cm} XY}
\label{fig:cat_2d_kernels_xy}
\end{subfigure}
\hfill
\begin{subfigure}[b]{0.22\textwidth}
\centering
\includegraphics[width=\textwidth]{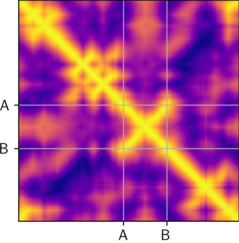}
\caption{\vspace{-0.2cm}RFF ($\sigma{=}\frac{1}{2},d{=}8$)}
\label{fig:cat_2d_kernels_ff}
\end{subfigure}
\hfill
\begin{subfigure}[b]{0.22\textwidth}
\centering
\includegraphics[width=\textwidth]{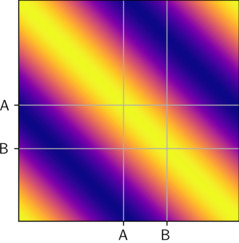}
\caption{\vspace{-0.2cm} Ours ($d{=}2$)}
\label{fig:cat_2d_kernels_intr_d2}
\end{subfigure}
\hfill
\begin{subfigure}[b]{0.22\textwidth}
\centering
\includegraphics[width=\textwidth]{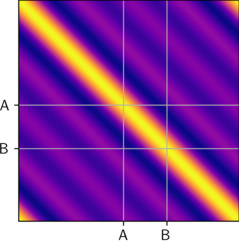}
\caption{\vspace{-0.2cm} Ours ($d{=}8$)}
\label{fig:cat_2d_kernels_intr_d8}
\end{subfigure}
\caption{Signal reconstruction. \sref{fig:theory_1d_signal} The target is sampled at $32$ points and MLPs with three layers, $1024$ channels, and different embeddings are trained using $L^2$ loss.
The intrinsic neural field with $d{=}8$ eigenfunctions performs best. Using only two eigenfunctions leads to oversmoothing. The reconstruction with the extrinsic embedding and random Fourier features (RFF) \cite{DBLP:conf/nips/TancikSMFRSRBN20} can capture the high-frequency details, but introduces artifacts when the Euclidean distance is not a good approximation of the geodesic distance, for example, at points A \& B.
(\ref{fig:cat_2d_kernels_xy}-\ref{fig:cat_2d_kernels_intr_d8}) The second row of subfigures shows the pairwise neural tangent kernel (NTK)~\cite{DBLP:conf/nips/JacotHG18,neuraltangents2020} between all points on the manifold.
\sref{fig:cat_2d_kernels_xy} The NTK using the extrinsic Euclidean embedding is not maximal along the diagonal. 
\sref{fig:cat_2d_kernels_ff} For the NTK with RFF embedding the maximum is at the diagonal because each point's influence is maximal onto itself. However, it has many spurious correlations between points that are close in Euclidean space but not along the manifold, for example, around B. (\ref{fig:cat_2d_kernels_intr_d2},\ref{fig:cat_2d_kernels_intr_d8}) The NTK with our intrinsic embedding is localized correctly and is stationary (c.f.~\autoref{thm:stationarity}), which makes it most suitable for interpolation.
}
\label{fig:theory_1d}
\end{figure}

\begin{figure}
\centering
\begin{subfigure}[b]{0.19\textwidth}
\centering
\includegraphics[width=\textwidth]{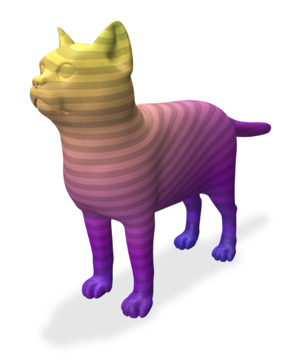}
\caption{XYZ: $S_1$}
\label{fig:cat_3d_kernels_xyz}
\end{subfigure}
\hfill
\begin{subfigure}[b]{0.19\textwidth}
\centering
\includegraphics[width=\textwidth]{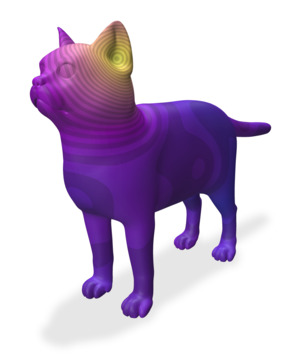}
\caption{RFF: $S_1$}
\label{fig:cat_3d_kernels_ff}
\end{subfigure}
\hfill
\begin{subfigure}[b]{0.19\textwidth}
\centering
\includegraphics[width=\textwidth]{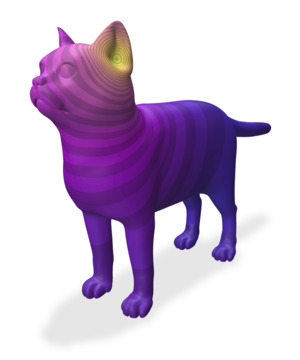}
\caption{Ours: $S_1$}
\label{fig:cat_3d_kernels_intr_A}
\end{subfigure}
\hfill
\begin{subfigure}[b]{0.19\textwidth}
\centering
\includegraphics[width=\textwidth]{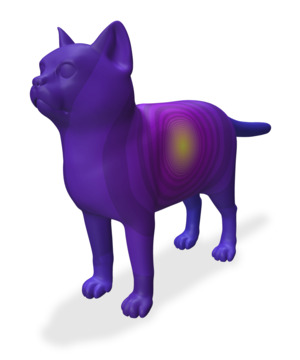}
\caption{Ours: $S_2$}
\label{fig:cat_3d_kernels_intr_B}
\end{subfigure}
\hfill
\begin{subfigure}[b]{0.19\textwidth}
\centering
\includegraphics[width=\textwidth]{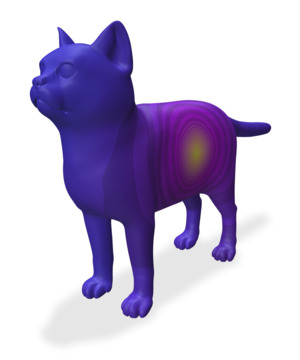}
\caption{Ours: $S_3$}
\label{fig:cat_3d_kernels_intr_C}
\end{subfigure}
\caption{Neural tangent kernels (NTKs)~\cite{DBLP:conf/nips/JacotHG18,neuraltangents2020} with different embedding functions. The source $S_1$ lies directly inside the ear of the cat. \sref{fig:cat_3d_kernels_xyz} The NTK using the extrinsic Euclidean embedding is not maximal at the source. \sref{fig:cat_3d_kernels_ff} The NTK using random Fourier features (RFF)~\cite{DBLP:conf/nips/TancikSMFRSRBN20} is localized correctly, but shows wrong behavior on the cat's body. \sref{fig:cat_3d_kernels_intr_A} The NTK with our intrinsic embedding is localized correctly and adapts to the local and global geometry. (\ref{fig:cat_3d_kernels_intr_B},\ref{fig:cat_3d_kernels_intr_C}) Additionally, the NTK with our intrinsic embedding is nearly shift-invariant, if the local geometry is approximately Euclidean: When the source is shifted from $S_2$ to $S_3$ the kernel is approximately shifted as well.}
\label{fig:cat_3d_kernels}
\end{figure}

\subsection{Theory} \label{sec:theory}\label{sec:stationary}
In this section, we will prove that the embedding function $\gamma$ proposed in \autoref{def:intrinsicneuralfield} generalizes the stationarity result of \cite{DBLP:conf/nips/TancikSMFRSRBN20} to certain manifolds. 
Stationarity is a desirable property if the kernel is used for interpolation, for example, in novel view synthesis \cite[App. C]{DBLP:conf/nips/TancikSMFRSRBN20}.
Fourier features induce a stationary (shift-invariant) neural tangent kernel (NTK). Namely, the composed NTK for two points in Euclidean space $\xx, \yy \in \R^m$ is given by
$\kntk(\xx, \yy) = (\hntk \comp \hg) (\xx - \yy) \,$
where $\hntk: \R \to \R$ is a scalar function related to the NTK of the MLP and $\hg: \R^m \to \R$ is a scalar function related to the Fourier feature embedding \cite[Eqn.~7,8]{DBLP:conf/nips/TancikSMFRSRBN20}. 
Extending this result to cover inputs $\pp, \qq \in \man$ on a manifold is challenging because the point difference $\pp - \qq$ and, therefore, the concept of stationary is not defined intrinsically. 
\pseudoparagraph{Stationarity on Manifolds} While one could use the Euclidean embedding of the manifold to define the difference $\pp - \qq$, this would ignore the local connectivity and can change under extrinsic deformations.
Instead, we make use of an equivalent definition from Bochner's theorem which implies that for Euclidean space any continuous, \emph{stationary} kernel is the Fourier transform of a non-negative measure \cite[Thm. 1]{DBLP:conf/nips/RahimiR07}. 
This definition can be directly used on manifolds, and we define a kernel $k: \man \times \man \to \R$ to be \textbf{stationary} if it can be written as
\begin{equation} \label{def:stationary}
    k(\pp, \qq) = \sum_i \hat{k}(\ev_i) \ef_i(\pp) \ef_i(\qq)\,, \qquad\qquad \hat{k}(\ev_i)  \geq 0 \;\; \forall i \,,
\end{equation}
where the function $\hat{k}: \R \to \R^{+}_0$ is akin to the Fourier transform.
This implies that $\hat{k}(\ev_i)$ and $\hat{k}(\ev_j)$ for identical eigenvalues $\ev_i = \ev_j$ must be identical.

First, we want to point out that for inputs with $\| \xx \| = \| \yy \|= r$ the result of $\kntk(\xx, \yy) = \hntk(\inner{\xx, \yy})$ shown by \cite{DBLP:conf/nips/JacotHG18} for $r = 1$ and used in \cite{DBLP:conf/nips/TancikSMFRSRBN20} still holds. 
This slight extension is given as \autoref{thm:ntk_of_normed}.
It is a prerequisite for the following theorem which requires the same setting as used in \cite{DBLP:conf/nips/JacotHG18}.

\begin{theorem} \label{thm:stationarity}
Let $\man$ be $\mathbb{S}^n$ or a closed 1-manifold. Let $(\ev_i, \ef_i)_{i = 1, \dots, d}$ be the positive, non-decreasing eigenvalues with associated eigenfunctions of the Laplace-Beltrami operator on $\man$. Let $a_i \geq 0$ be coefficients s.t.\ ${\ev_i = \ev_j \Rightarrow a_i = a_j}$, which define the embedding function $\gamma: \man \to \R^d$ with ${\gamma(\pp) = (a_1 \ef_1(\pp), \dots, a_d \ef_d(\pp))}$. Then, the composed neural tangent kernel $\kntk: \man \times \man \to \R$ of an MLP with the embedding $\gamma$ is \emph{stationary} as defined in \autoref{def:stationary}. 
\end{theorem}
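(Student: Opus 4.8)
\medskip
\noindent\emph{Proof strategy.} The plan is to reduce the composed kernel $\kntk$ to a \emph{zonal} kernel on $\man$ (rotation--invariant on $\mathbb{S}^n$, shift--invariant on a closed $1$--manifold) and then read off non-negativity of its spectral coefficients from positive semi-definiteness. The first step is to show that $\pp \mapsto \|\gamma(\pp)\|$ is constant on $\man$. Grouping $\ef_1,\dots,\ef_d$ by eigenvalue and using the hypothesis $\ev_i = \ev_j \Rightarrow a_i = a_j$, the coefficients factor out of each eigenspace, so $\|\gamma(\pp)\|^2 = \sum_\ell a_{(\ell)}^2 \sum_{\ef_i \in E_\ell} \ef_i(\pp)^2$, where $E_\ell$ is the eigenspace of the $\ell$-th distinct eigenvalue. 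For $\man = \mathbb{S}^n$ the inner sum is the diagonal of the orthogonal projector onto $E_\ell$, which by the addition theorem for spherical harmonics equals the constant $\dim E_\ell / |\mathbb{S}^n|$; for a closed $1$-manifold (a circle of length $L$) each nonzero eigenspace is spanned by $\cos(\omega_\ell s),\sin(\omega_\ell s)$ and the inner sum is the constant $2/L$ via $\cos^2+\sin^2=1$. (This uses that the embedding realizes \emph{complete} eigenspaces; I would flag this assumption on $d$ explicitly.) With $r := \|\gamma(\pp)\|$, \autoref{thm:ntk_of_normed} then applies to the inputs $\gamma(\pp),\gamma(\qq)$ and gives $\kntk(\pp,\qq) = \hntk(\inner{\gamma(\pp),\gamma(\qq)})$. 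The same grouping and addition theorem (resp.\ the identity $\cos\cos+\sin\sin=\cos(\cdot-\cdot)$ on the circle) show $\inner{\gamma(\pp),\gamma(\qq)} = \sum_\ell a_{(\ell)}^2 Z_\ell(\inner{\pp,\qq})$ depends only on the extrinsic inner product $\inner{\pp,\qq}$ on $\mathbb{S}^n$ (resp.\ only on $s-s'$ on the circle), so $\kntk$ is a continuous zonal kernel $\kappa(\inner{\pp,\qq})$ (resp.\ $\kappa(s-s')$).

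The second step is to invoke positive semi-definiteness. In the infinite-width limit the NTK is the Gram kernel $\inner{\nabla_\theta f_\theta(\gamma(\pp)),\nabla_\theta f_\theta(\gamma(\qq))}$ of the MLP evaluated on $\gamma(\man)$, hence positive semi-definite as a kernel on $\man$, and continuous. For $\man = \mathbb{S}^n$, Schoenberg's theorem says a continuous zonal kernel is positive semi-definite iff its Gegenbauer expansion $\kappa = \sum_\ell b_\ell\, C_\ell^{(n-1)/2}$ has $b_\ell \ge 0$; re-expanding each Gegenbauer term via the addition theorem yields $\kntk(\pp,\qq) = \sum_i \hat{k}(\ev_i)\,\ef_i(\pp)\ef_i(\qq)$ with $\hat{k}(\ev_i) = b_{\ell(i)}\cdot(\text{positive constant}) \ge 0$, which is exactly the form of \autoref{def:stationary}. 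For a closed $1$-manifold, a continuous positive semi-definite shift-invariant kernel has Fourier coefficients $\ge 0$ (Herglotz/Bochner), and rewriting in the $\{1,\cos(\omega_\ell s),\sin(\omega_\ell s)\}$ basis again gives non-negative coefficients on the LBO eigenfunctions. Since $\hat k$ obtained this way automatically satisfies $\ev_i = \ev_j \Rightarrow \hat k(\ev_i) = \hat k(\ev_j)$, the kernel is stationary in the sense of \autoref{def:stationary}.

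The main obstacle is the first step: making the ``constant norm / zonality'' reduction airtight. It rests on the addition theorem for spherical harmonics and, more delicately, on the embedding not splitting an eigenspace, so I would state explicitly that $d$ is taken to span full eigenspaces (for generic $d$ the norm $\|\gamma(\pp)\|$ need not be constant and \autoref{thm:ntk_of_normed} no longer applies verbatim). A secondary point is that \autoref{thm:ntk_of_normed} must be used with input norm $r = \|\gamma(\cdot)\|$, not necessarily $1$; this is precisely what the ``slight extension'' of \cite{DBLP:conf/nips/JacotHG18} provides. An alternative to Schoenberg's theorem would be to expand $\hntk$ in a power series and use that non-negative combinations and (by the Schur product theorem) products of stationary kernels are stationary; that route, however, additionally needs the NTK's power-series coefficients to be non-negative, a less transparent fact, so I would prefer the Schoenberg/Bochner argument above.
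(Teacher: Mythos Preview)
Your proposal is correct and follows essentially the same route as the paper: constant embedding norm via the addition theorem, reduction to $\hntk(\inner{\gamma(\pp),\gamma(\qq)})$ via \autoref{thm:ntk_of_normed}, zonality of the inner product, and then Schoenberg (resp.\ Bochner on the circle) together with positive (semi-)definiteness of the NTK to obtain non-negative spectral coefficients. Your explicit flag that $d$ must span complete eigenspaces is a point the paper uses tacitly (its sums run over $m=1,\dots,\dim\shspace nl$) but does not state; it is worth keeping.
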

\begin{proof}
Let $\man = \sphere^n$ and let $\shspace nl$ be the space of degree $l$ spherical harmonics on $\sphere^n$. Let $Y_{lm} \in \shspace nl$ be the $m$-th real spherical harmonic of degree $l$ with ${m=1, \dots, \dim\shspace nl}$. Notice that the spherical harmonics are the eigenfunctions of the LBO. We will use $j$ to linearly index the spherical harmonics and $l(j)$ for the degree.
Spherical harmonics of the same degree have the same eigenvalues, thus we use $c_{l(j)} = a_j = a_i$ for $\lambda_i = \lambda_j$ to denote the equal coefficients for same degree harmonics.
First, the norm of the embedding function is constant:
\begin{align}\label{eqn:stat_proof_norm}
\|  \gamma(\qq)\|^2
= \sum_j c_{l(j)}^2 \ef_j^2(\qq)
= \sum_l c_{l}^2 \sum_{m = 1}^{\dim \shspace nl} Y_{lm}^2(\qq) \overset{(a)}{=} \sum_l c_{l}^2 Z_l(\qq,\qq) \overset{(b)}{=} \text{const} \,.
\end{align}
Here, $Z_l(\qq,\qq)$ is the degree $l$ zonal harmonic and (a), (b) are properties of zonal harmonics \cite[Lem.~1.2.3, Lem.~1.2.7]{dai2013approximation}. Due to \autoref{eqn:stat_proof_norm} and \autoref{thm:ntk_of_normed} $\kntk(\gamma(\pp), \gamma(\qq)) = \hntk(\inner{\gamma(\pp), \gamma(\qq)}) \; \forall \pp, \qq \in \man$ holds. We can rewrite the scalar product as follows
\begin{align}
\inner{\gamma(\pp), \gamma(\qq)} =& 
\sum_j c_{l(j)}^2 \ef_j(\pp) \ef_j(\qq) 
= \sum_l  c_{l}^2 \sum_{m = 1}^{\dim \shspace nl} Y_{lm}(\pp) Y_{lm}(\qq) \\
\overset{(c)}{=}& \sum_l c_{l}^2 Z^l_{\pp}(\qq)
\overset{(d)}{=} \sum_l c_{l}^2 \, (1 + l/\alpha) \, C^{\alpha}_l(\inner{\pp, \qq}) \,,
\end{align}
where $C^\alpha_l: [-1, 1] \to \R$ are the Gegenbauer polynomials which are orthogonal on $[-1,1]$ for the weighting function $w_{\alpha}(z) = (1-z^2)^{\alpha - 1/2}$ with $\alpha = (n-1)/2$  \cite[B.2]{dai2013approximation}. Equality (c) holds again due to \cite[Lem.~1.2.3]{dai2013approximation}. 
Equality (d) holds due to a property of Gegenbauer polynomials \cite[Thm.~1.2.6]{dai2013approximation}, here $\inner{\pp, \qq}$ denotes the extrinsic Euclidean inner product. For the composed NTK we obtain
\begin{equation}
\kntk(\gamma(\pp), \gamma(\qq))
= \hntk \left( {\textstyle \sum_l} c_{l}^2 \, (1 + l/\alpha) \, C^{\alpha}_l(\inner{\pp, \qq}) \right)\,.
\end{equation}
We see that $\kntk(\gamma(\pp), \gamma(\qq))$ is a function depending only on $\inner{\pp, \qq}$. 
Because the Gegenbauer polynomials are orthogonal on $[-1,1]$, this function can be expanded %
with coefficients $\hat{c}_l \in \R$%
, which yields
\begin{align}
\kntk(\gamma(\pp), \gamma(\qq)) =& \sum_l \hat{c}_l \,  (1 + l/\alpha) \, C^{\alpha}_l(\inner{\pp, \qq}) = \sum_l \hat{c}_l \, Z^l(\pp,\qq) \\
=& \sum_l \hat{c}_l \sum_{m = 1}^{\dim \shspace nl} Y_{lm}(\pp) Y_{lm}(\qq)
= \sum_j \hat{c}_{l(j)} \ef_j(\pp) \ef_j(\qq) \,.
\end{align}
The coefficients $\hat{c}_{l(j)}$ are non-negative as a consequence of the positive definiteness of the NTK~\cite[Prop.~2]{DBLP:conf/nips/JacotHG18} and a classic result by Schoenberg~\cite[Thm.~14.3.3]{dai2013approximation}. This shows that $\kntk(\gamma(\pp), \gamma(\qq))$ is stationary as defined in Equation \ref{def:stationary}. \qed
\end{proof}
The adapted proof for 1-manifolds can be found in \autoref{sec:theory_thm_man1}.
A qualitative example of the stationary kernels can be seen in \autoref{fig:theory_1d}.
The theorem does not hold for general manifolds but we do not consider this a shortcoming. The composed kernel adapts to the intrinsic geometry of the underlying manifold and is approximately shift-invariant between points that share a similar local neighborhood, see \autoref{fig:cat_3d_kernels}. Hence, the proposed method naturally and unambiguously integrates the geometry of the manifold based on an intrinsic representation.

\section{Experiments} \label{sec:applications}
Due to the large number of experiments presented within this section, we refer to \autoref{sup:sec:details_experiments}
for all experimental details and hyperparameter settings as well as further results. 
To facilitate fair comparisons, all methods use the same hyperparameters like learning rate, optimizer, number of training epochs, or MLP architecture except when noted otherwise.
For baselines using random Fourier features (RFF), we follow \cite{DBLP:conf/nips/TancikSMFRSRBN20} and tune the standard deviation~$\sigma$ (c.f.~\autoref{eqn:general_fourier_feature_encoding}) of the random frequency matrix to obtain optimal results.

\subsection{Texture Reconstruction from Images} \label{sec:app:texture_representation}
To investigate the representation power of the proposed intrinsic neural fields, we consider the task of texture reconstruction from posed images as proposed by Oechsle et al.~\cite{DBLP:conf/iccv/OechsleMNSG19} in \autoref{tab:texture_reconstruction} and \autoref{fig:texture_representation}. The input to our algorithms is a set of five $512{\times}512$ images with their camera poses and the triangle mesh of the shape. After fitting the intrinsic neural field to the data, we render images from $200$ novel viewpoints and compare them to ground-truth images for evaluation.

For each pixel, we perform ray mesh intersection between the ray through the pixel and the mesh. The eigenfunctions of the Laplace-Beltrami operator are defined only on vertices of the mesh \cite{DBLP:journals/cgf/SharpC20}. Within triangles, we use barycentric interpolation. We employ the mean $L^1$ loss across a batch of rays and the RGB color channels.
The eigenfunction computation and ray-mesh intersection are performed once at the start of training. Hence, our training speed is similar to the baseline method that uses random Fourier features. Training takes approx.\ one hour on an Nvidia Titan X with 12 GB memory.

\pseudoparagraph{Comparison with State of the Art Methods}
\sloppy
We compare against Texture Fields~\cite{DBLP:conf/iccv/OechsleMNSG19} enhanced with random Fourier features (RFF) \cite{DBLP:conf/nips/TancikSMFRSRBN20}. Additionally, we compare against NeuTex~\cite{DBLP:conf/cvpr/XiangXHHS021}, which uses a network to map a shape to the sphere and represents the texture on this sphere. We adapt NeuTex s.t.\ it takes advantage of the given geometry, see \autoref{sup:sec:neutex_mod}.
\autoref{tab:texture_reconstruction} and \autoref{fig:texture_representation} show that intrinsic neural fields can reconstruct texture with state-of-the-art quality. This is also true if the number of training epochs is decreased from $1000$ to $200$.

\pseudoparagraph{Ablation Study}
We investigate the effect of different hyperparameters on the quality of the intrinsic neural texture field. The results in \autoref{tab:ablation_study} show that the number of eigenfunctions is more important than the size of the MLP, which is promising for real-time applications. A model using only $64$ eigenfunctions and $17k$ parameters\footnote{For reference: a $80\times80$ 3-channel color texture image has over $17k$ pixel values.} still achieves a PSNR of $29.20$ for the cat showing that intrinsic neural fields can be a promising approach for compressing manifold data.

\begin{table}[tb]
\scriptsize
\caption{Texture reconstruction from images.
Our intrinsic neural fields show state-of-the-art performance (\textit{first row block}), which also holds for fewer training epochs (\textit{Ep.\,$\downarrow$, second row block}).
For a fair comparison, we improve the original Texture Fields by employing the same MLP architecture as our model and by additionally using random Fourier features (\textit{TF+RFF}). NeuTex already has more parameters than our model and we increase the embedding size (\textit{Em.\,$\uparrow$}). We adapt NeuTex s.t.\ it takes advantage of the given geometry, which we detail in \autoref{sup:sec:neutex_mod}.
The methods are evaluated on novel views using the PSNR, DSSIM~\cite{wang2004image}, and LPIPS~\cite{DBLP:conf/cvpr/ZhangIESW18}.
DSSIM and LPIPS are scaled by $100$. For each row block, the best number is in bold font. The intrinsic representation shows better results than the extrinsic representation (\textit{TF+RFF}) as well as when mapping to a sphere and representing the texture there (\textit{NeuTex}). For qualitative results, see \autoref{fig:texture_representation}.
}
\label{tab:texture_reconstruction}

\newlength{\extrinsiclength}
\settowidth{\extrinsiclength}{TF+RFF}

\newlength{\extrinsiclengthparen}
\settowidth{\extrinsiclengthparen}{TF+RFF ($\sigma{=}8$)}

\begin{center}
\begin{tabular}{p{3.3cm} >{\centering\arraybackslash}p{0.8cm} >{\centering\arraybackslash}p{0.8cm} @{\hspace{0.2cm}} c c c c @{\hspace{0.2cm}} c c c}
\toprule
&
\multirow{2}{*}{\makecell{Em.}}&
\multirow{2}{*}{Ep.}&
\multicolumn{3}{c}{cat} & &
\multicolumn{3}{c}{human} \\

\cmidrule{4-6}
\cmidrule{8-10}

&
&
&
\makecell{\notsotiny PSNR\,$\uparrow$}&
\makecell{\notsotiny DSSIM\,$\downarrow$}&
\makecell{\notsotiny LPIPS\,$\downarrow$}& &
\makecell{\notsotiny PSNR\,$\uparrow$}&
\makecell{\notsotiny DSSIM\,$\downarrow$}&
\makecell{\notsotiny LPIPS\,$\downarrow$}\\
\midrule

NeuTex~\cite{DBLP:conf/cvpr/XiangXHHS021}                     & \phantom{10}63                             & 1000                           & 31.60                          & 0.242                          & 0.504                         &  & 29.49                          & 0.329                          & 0.715                         \\
NeuTex Em.\,$\uparrow$                  & 1023                           & 1000                           & 31.96                          & 0.212                          & 0.266                         &  & 29.22                          & 0.306                          & 0.669                         \\
TF+RFF ($\sigma{=}4$)~\cite{DBLP:conf/iccv/OechsleMNSG19,DBLP:conf/nips/TancikSMFRSRBN20}              & 1023                           & 1000                           & 33.86                          & 0.125                          & 0.444                         &  & 32.04                          & 0.130                          & 0.420                         \\
TF+RFF ($\sigma{=}16$)                 & 1023                           & 1000                           & 34.19                          & 0.105                          & 0.167                         &  & 31.53                          & 0.193                          & 0.414                         \\
TF+RFF ($\sigma{=}8$)                       & 1023                           & 1000                           & 34.39                          & 0.097                          & 0.205                         &  & 32.26                          & 0.129                          & 0.336                         \\
\makebox[\extrinsiclength][l]{Intrinsic} (Ours)                & 1023                           & 1000                           & \textBF{34.82}                 & \textBF{0.095}                 & \textBF{0.153}                &  & \textBF{32.48}                 & \textBF{0.121}                 & \textBF{0.306}                \\

\midrule
\makebox[\extrinsiclengthparen][l]{NeuTex} Ep.\,$\downarrow$               & 1023                           & \phantom{1}200                            & 30.96                          & 0.290                          & 0.355                         &  & 28.02                          & 0.418                          & 0.900                         \\
TF+RFF ($\sigma{=}8$) Ep.\,$\downarrow$ & 1023                           & \phantom{1}200                            & 34.07                          & 0.116                          & 0.346                         &  & 31.85                          & 0.142                          & 0.427                         \\
\makebox[\extrinsiclengthparen][l]{Intrinsic (Ours)} Ep.\,$\downarrow$ & 1023                           & \phantom{1}200                            & \textBF{34.79}                 & \textBF{0.100}                 & \textBF{0.196}                &  & \textBF{32.37}                 & \textBF{0.126}                 & \textBF{0.346}                \\

\bottomrule

\end{tabular}
\end{center}
\vspace{-0.5cm}
\end{table}

\begin{figure}
\centering
\begin{subfigure}[b]{0.23\textwidth}
\centering
\includegraphics[width=\textwidth]{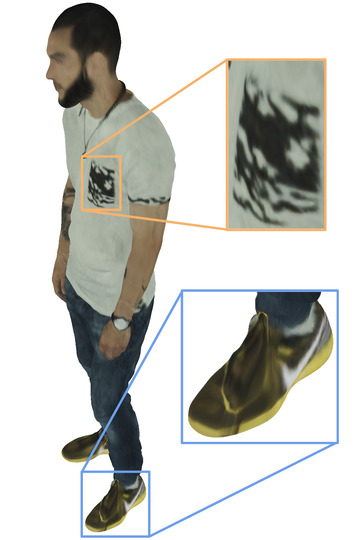}
\caption{\vspace{-0.2cm}NeuTex~\cite{DBLP:conf/cvpr/XiangXHHS021}}
\label{fig:human_texture_recon_neutex}
\end{subfigure}
\hfill
\begin{subfigure}[b]{0.23\textwidth}
\centering
\includegraphics[width=\textwidth]{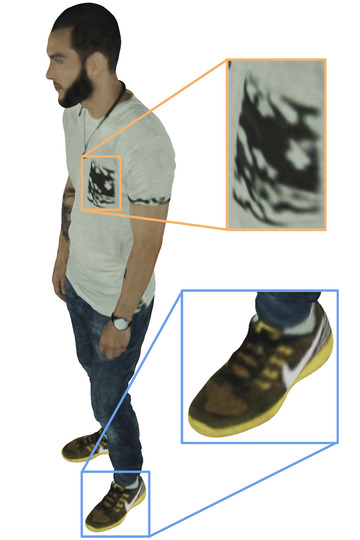}
\caption{\vspace{-0.2cm}TF ($\sigma{=}8$)~\cite{DBLP:conf/iccv/OechsleMNSG19,DBLP:conf/nips/TancikSMFRSRBN20}}
\label{fig:human_texture_recon_rff8}
\end{subfigure}
\hfill
\begin{subfigure}[b]{0.23\textwidth}
\centering
\includegraphics[width=\textwidth]{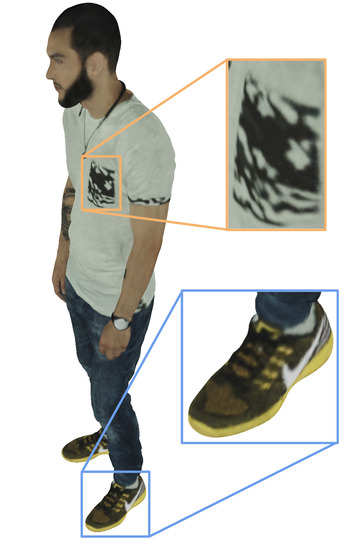}
\caption{\vspace{-0.2cm}Ours}
\label{fig:human_texture_recon_ours}
\end{subfigure}
\hfill
\begin{subfigure}[b]{0.23\textwidth}
\centering
\includegraphics[width=\textwidth]{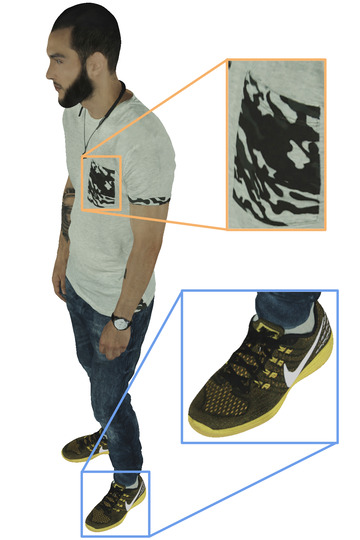}
\caption{\vspace{-0.2cm}GT Image}
\label{fig:human_texture_recon_gt}
\end{subfigure}
\caption{Texture reconstruction from images. \sref{fig:human_texture_recon_neutex} NeuTex uses a network to map from the shape to the sphere and represents the texture on the sphere, which yields distortions around the shoe.
\sref{fig:human_texture_recon_rff8} Texture Fields (TF)~\cite{DBLP:conf/iccv/OechsleMNSG19} with random Fourier Features (RFF)~\cite{DBLP:conf/nips/TancikSMFRSRBN20} learns the texture well and only around the breast pocket our method shows slightly better results. \sref{fig:human_texture_recon_ours} Intrinsic neural fields can reconstruct texture from images with state-of-the-art quality, which we show quantitatively in \autoref{tab:texture_reconstruction}.
}
\label{fig:texture_representation}
\end{figure}

\begin{table}[tb]
\scriptsize
\caption{Ablation study based on the texture reconstruction experiment (c.f.~\autoref{sec:app:texture_representation}). The number of eigenfunctions is more important than the size of the MLP which is promising for real-time applications. A model using only $64$ eigenfunctions and only 17k parameters still achieves a PSNR of $29.20$ for the cat, which shows that intrinsic neural fields can be a promising approach for compressing manifold data.}
\label{tab:ablation_study}

\begin{center}
\begin{tabular}{p{3.5cm} >{\centering\arraybackslash}p{1.2cm} >{\centering\arraybackslash}p{0.8cm} @{\hspace{0.2cm}} c c c c @{\hspace{0.2cm}} c c c}
\toprule
&
\multirow{2}{*}{\#Params}&
\multirow{2}{*}{\#$\phi$}&
\multicolumn{3}{c}{cat} & &
\multicolumn{3}{c}{human} \\

\cmidrule{4-6}
\cmidrule{8-10}

&
&
&
\makecell{\notsotiny PSNR\,$\uparrow$}&
\makecell{\notsotiny DSSIM\,$\downarrow$}&
\makecell{\notsotiny LPIPS\,$\downarrow$}& &
\makecell{\notsotiny PSNR\,$\uparrow$}&
\makecell{\notsotiny DSSIM\,$\downarrow$}&
\makecell{\notsotiny LPIPS\,$\downarrow$}\\
\midrule

Full model & 329k & 1023                 & \textBF{34.82}                 & \textBF{0.095}                 & \textBF{0.153}                &  & \textBF{32.48}                 & \textBF{0.121}                 & \textBF{0.306} \\
Smaller MLP & 140k & 1023     & 34.57                          & 0.108                          & 0.205                         &  & 32.20                          & 0.134                          & 0.379                         \\
Fewer eigenfunctions & 83k & 64 & 31.18                          & 0.284                          & 0.927                         &  & 28.95                          & 0.312                         & 1.090                         \\
Smaller MLP \& fewer efs & 17k & 64 & 29.20                          & 0.473                          & 1.428                         &  & 26.72                          & 0.493                          & 1.766                         \\
Just $4$ eigenfunctions & 68k & 4 & 22.84                          & 1.367                          & 3.299                         &  & 20.60                          & 1.033                          & 2.756                         \\

\bottomrule

\end{tabular}
\end{center}
\vspace{-0.5cm}
\end{table}

\subsection{Discretization-agnostic Intrinsic Neural Fields} \label{sec:app:geometry_representation}
For real-world applications, it is desirable that intrinsic neural fields can be trained for different discretizations of the same manifold. First, the training process of the intrinsic neural field should be robust to the sampling in the discretization. Second, it would be beneficial if an intrinsic neural field trained on one discretization could be transferred to another, which we show in \autoref{sec:app:texture_transfer}. To quantify the discretization dependence of intrinsic neural fields, we follow the procedure proposed by Sharp et al.~\cite[Sec.~5.4]{sharp2021diffusion} and rediscretize the meshes used in \autoref{sec:app:texture_representation}. The qualitative results in \autoref{fig:discretization} and the quantitative results in \autoref{tab:discretization} show that intrinsic neural fields work across various  discretizations. Furthermore, \autoref{fig:texture_transfer} shows that transferring pre-trained intrinsic neural fields across discretizations is possible with minimal loss in visual quality.

\begin{table}[tb]
\scriptsize
\caption{Discretization-agnostic intrinsic neural fields. We employ the procedure proposed by Sharp et al.~\cite[Sec.~5.4]{sharp2021diffusion} to generate different discretizations of the original meshes (\textit{orig}): uniform isotropic remeshing (\textit{iso}), densification around random vertices (\textit{dense}), refinement and subsequent quadric error simplification \cite{DBLP:conf/siggraph/GarlandH97} (\textit{qes}), and point clouds sampled from the surfaces with more points than vertices (\textit{cloud}\,$\uparrow$) and with fewer points (\textit{cloud}\,$\downarrow$). The discretizations are then used for texture reconstruction as in \autoref{sec:app:texture_representation}. For the point clouds, we use local triangulations \cite[Sec.\ 5.7]{DBLP:journals/cgf/SharpC20} for ray-mesh intersection. This table and the qualitative results in \autoref{fig:discretization} show that intrinsic neural fields can be trained for a wide variety of discretizations. Furthermore, pre-trained intrinsic neural fields can be transferred across discretizations as shown in \autoref{fig:texture_transfer}.}
\label{tab:discretization}
\begin{center}
\begin{tabular}{p{1.7cm} c c c c c c c@{\hspace{0.4cm}}  c c c c c c}
\toprule

&
\multicolumn{6}{c}{cat}
&
&
\multicolumn{6}{c}{human} \\

\cmidrule{2-7}
\cmidrule{9-14}

Method &
\makecell{orig} &
\makecell{iso}&
\makecell{dense}&
\makecell{qes}&
\makecell{cloud\,$\uparrow$}&
\makecell{cloud\,$\downarrow$}&
&
\makecell{orig} &
\makecell{iso}&
\makecell{dense}&
\makecell{qes}&
\makecell{cloud\,$\uparrow$}&
\makecell{cloud\,$\downarrow$}\\

\midrule

PSNR $\uparrow$ & 34.82 & 34.85                          & 34.74                          & \textBF{35.07}                 & 34.91                          & 33.17 & & 32.48 & \textBF{32.63}                 & 32.57                          & 32.49                          & 32.45                          & 31.99 \\
DSSIM $\downarrow$ & 0.095 & \textBF{0.093}                 & 0.096                          & 0.096                          & 0.096                          & 0.130 & & 0.121 & \textBF{0.117}                 & 0.120                          & 0.121                          & 0.123                          & 0.135 \\
LPIPS $\downarrow$ & 0.153 & 0.152                          & 0.159                          & \textBF{0.147}                 & 0.152                          & 0.220 & & 0.306 & 0.300                          & 0.301                          & \textBF{0.297}                 & 0.307                          & 0.323 \\

\bottomrule

\end{tabular}
\end{center}
\vspace{-0.5cm}
\end{table}

\begin{figure}
\centering
\begin{subfigure}[b]{0.19\textwidth}
\centering
\includegraphics[width=\textwidth]{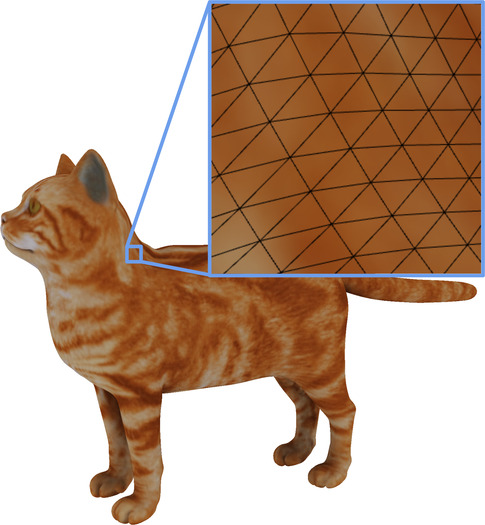}
\caption{\vspace{-0.2cm}orig}
\label{fig:disc_orig}
\end{subfigure}
\hfill
\begin{subfigure}[b]{0.19\textwidth}
\centering
\includegraphics[width=\textwidth]{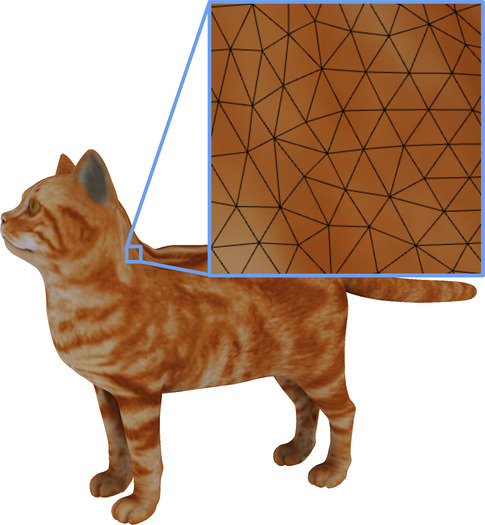}
\caption{\vspace{-0.2cm}iso}
\label{fig:disc_iso}
\end{subfigure}
\hfill
\begin{subfigure}[b]{0.19\textwidth}
\centering
\includegraphics[width=\textwidth]{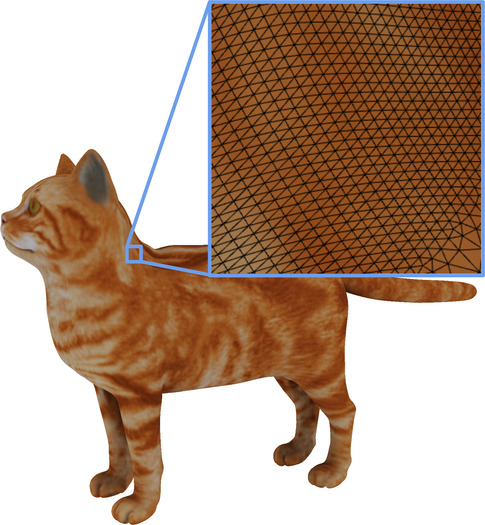}
\caption{\vspace{-0.2cm}dense}
\label{fig:disc_dens}
\end{subfigure}
\hfill
\begin{subfigure}[b]{0.19\textwidth}
\centering
\includegraphics[width=\textwidth]{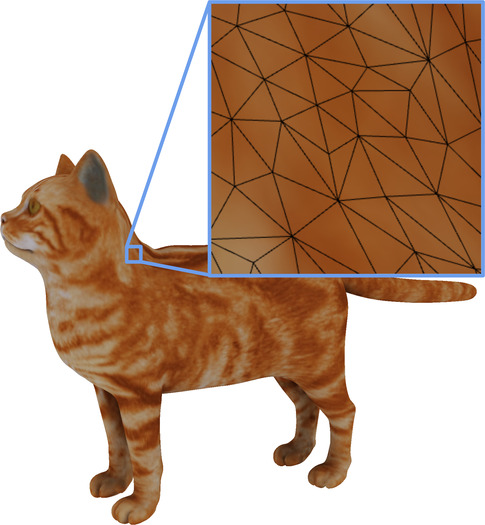}
\caption{\vspace{-0.2cm}qes}
\label{fig:disc_qes}
\end{subfigure}
\hfill
\begin{subfigure}[b]{0.19\textwidth}
\centering
\includegraphics[width=\textwidth]{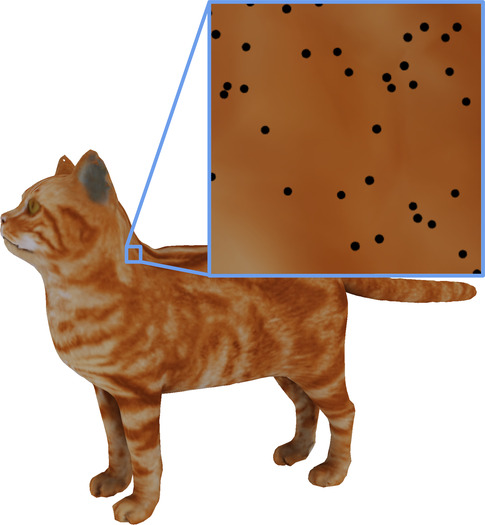}
\caption{\vspace{-0.2cm}cloud\,$\downarrow$}
\label{fig:disc_cloud10}
\end{subfigure}
\caption{
Discretization-agnostic intrinsic neural fields. Our method produces identical results for a wide variety of triangular meshings and even point cloud data.
For the point cloud, we use local triangulations \cite[Sec.\ 5.7]{DBLP:journals/cgf/SharpC20} for ray-mesh intersection. Pre-trained intrinsic neural fields can be transferred across discretizations as shown in \autoref{fig:texture_transfer}.
}
\label{fig:discretization}
\end{figure}

\begin{figure}
\centering
\begin{subfigure}[b]{0.19\textwidth}
\centering
\includegraphics[width=\textwidth]{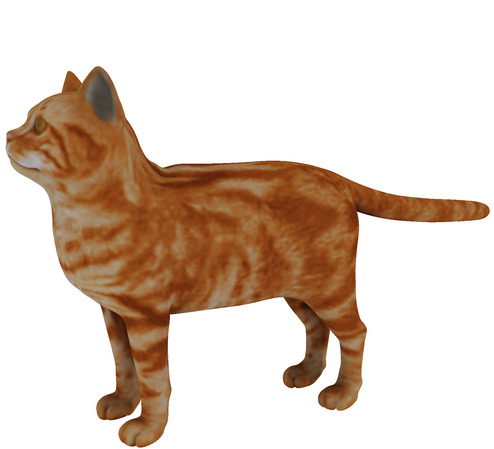}
\caption{\vspace{-0.2cm}source}
\label{fig:texture_transfer_source}
\end{subfigure}
\hfill
\begin{subfigure}[b]{0.19\textwidth}
\centering
\includegraphics[width=\textwidth]{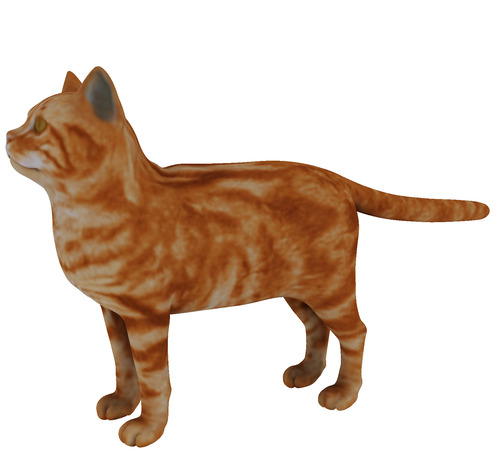}
\caption{\vspace{-0.2cm}dense}
\label{fig:texture_transfer_disc}
\end{subfigure}
\hfill
\begin{subfigure}[b]{0.19\textwidth}
\centering
\includegraphics[width=\textwidth]{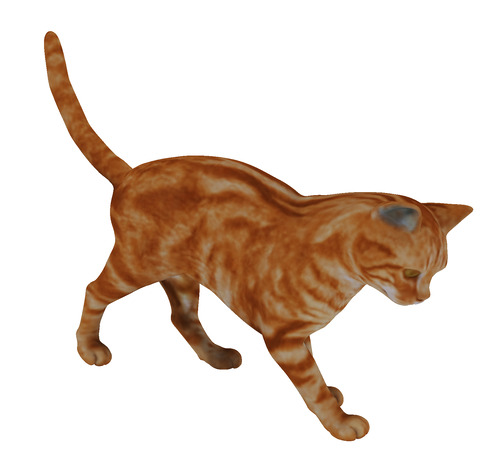}
\caption{\vspace{-0.2cm}ARAP~\cite{DBLP:conf/sgp/SorkineA07}}
\label{fig:texture_transfer_arap}
\end{subfigure}
\hfill
\begin{subfigure}[b]{0.19\textwidth}
\centering
\includegraphics[width=\textwidth]{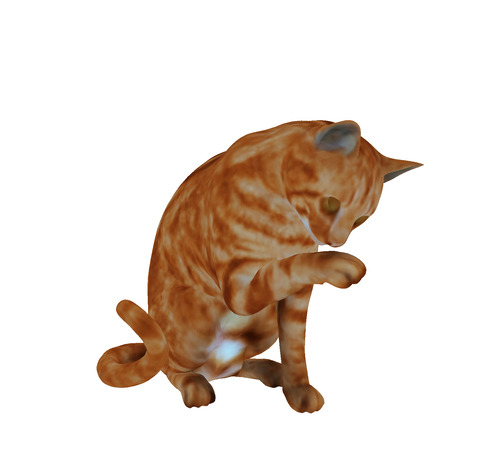}
\caption{\vspace{-0.2cm}TOSCA cat 2}
\label{fig:texture_transfer_tosca_cat}
\end{subfigure}
\hfill
\begin{subfigure}[b]{0.19\textwidth}
\centering
\includegraphics[width=\textwidth]{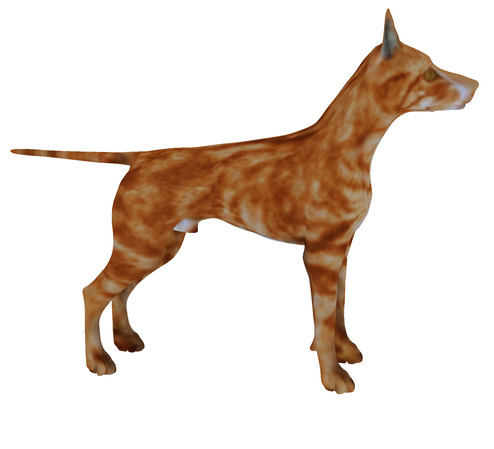}
\caption{\vspace{-0.2cm}TOSCA dog 0}
\label{fig:texture_transfer_tosca_dog}
\end{subfigure}
\caption{Intrinsic neural field transfer. \sref{fig:texture_transfer_source} The pre-trained intrinsic neural texture field from the source mesh is transferred to the target shapes using functional maps \cite{ovsjanikov12funmaps,DBLP:conf/cvpr/EisenbergerLC20}. (\ref{fig:texture_transfer_disc},\ref{fig:texture_transfer_arap}) The transfer across rediscretization (c.f.~\autoref{fig:discretization}) and deformation gives nearly perfect visual quality. (\ref{fig:texture_transfer_tosca_cat},\ref{fig:texture_transfer_tosca_dog}) As a proof of concept, we show artistic transfer to a different cat shape and a dog shape from the TOSCA dataset~\cite{bronstein2008numerical}. Both transfers work well but the transfer to the dog shows small visual artifacts in the snout area due to locally different geometry. Overall, the experiment shows the advantage of the intrinsic formulation which naturally incorporates field transfer through functional maps.}
\label{fig:texture_transfer}
\end{figure}

\subsection{Intrinsic Neural Field Transfer} \label{sec:app:texture_transfer}
One advantage of the Laplace-Beltrami operator is its invariance under isometries which allows for transferring a pre-trained intrinsic neural field from one manifold to another. However, this theoretic invariance does not always perfectly hold in practice, for example, due to discretization artifacts as discussed by Kovnatsky et al.~\cite{kovnatsky2013coupled}. 
Hence, we employ functional maps \cite{ovsjanikov12funmaps} to transfer the eigenfunctions of the source to the target shape, computed with the method proposed by Eisenberger et al.~\cite{DBLP:conf/cvpr/EisenbergerLC20}.
\autoref{fig:texture_transfer} shows that intrinsic neural fields can be transferred between shapes. Specifically, the transfer is possible between different discretizations and deformations~\cite{DBLP:conf/sgp/SorkineA07} of the same shape. As a proof of concept, we also show artistic transfer, which yields satisfying results for shapes from the same category, but small visual artifacts for shapes from different categories. 
It is, of course, possible to generate similar results with extrinsic fields by calculating a point-to-point correspondence and mapping the coordinate values. However, functional maps are naturally low-dimensional, continuous, and differentiable. This makes them a beneficial choice in many applications, especially related to learning.

\subsection{Real-world Data \& View Dependence} \label{sec:app:real_world_view_dep}
We validate the effectiveness of intrinsic neural fields under real-world settings on the BigBIRD dataset \cite{DBLP:conf/icra/SinghSNAA14}. The dataset provides posed images and reconstructed meshes, and we apply a similar pipeline as in \autoref{sec:app:texture_representation}. However, the objects under consideration are not perfectly Lambertian, and thus, view dependence must be considered. 
This is achieved by using the viewing direction as an additional input to the network, as done in \cite{DBLP:conf/eccv/MildenhallSTBRN20}. At first glance, using the viewing direction in its extrinsic representation is in contrast to our intrinsic definition of neural fields. However, view dependence arises from the extrinsic scene of the object, such as the lighting, which cannot be represented purely intrinsically. 
Decomposing the scene into the intrinsic properties, like the BRDF, of the object and the influence of the environment, like light sources, is an interesting future application for intrinsic neural fields. Such decomposition has recently been studied in the context of neural radiance fields \cite{DBLP:conf/eccv/ChenNN20,DBLP:conf/cvpr/ZhangLWBS21,boss2021nerd,DBLP:journals/corr/abs-2201-02279,DBLP:journals/corr/abs-2110-14373}. Intrinsic neural fields can reconstruct high-quality textures from real-world data with imprecise calibration and imprecise meshes, as shown in \autoref{fig:bigbird}.

\begin{figure}
\centering
\begin{subfigure}[b]{0.16\textwidth}
\centering
\includegraphics[width=\textwidth]{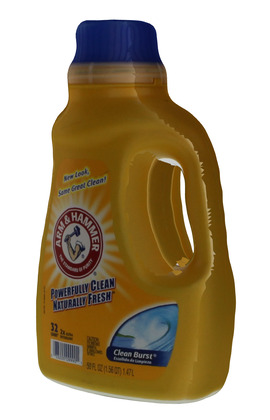}
\caption{\vspace{-0.2cm}Baseline~\cite{DBLP:conf/icra/SinghSNAA14}}
\label{fig:bigbird_det_theirs}
\end{subfigure}
\hfill
\begin{subfigure}[b]{0.16\textwidth}
\centering
\includegraphics[width=\textwidth]{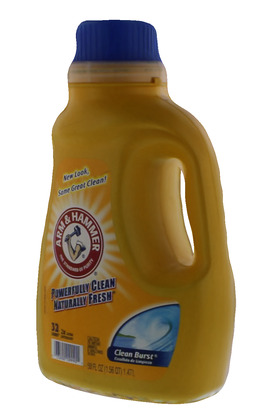}
\caption{\vspace{-0.2cm}Ours}
\label{fig:bigbird_det_ours}
\end{subfigure}
\hfill
\begin{subfigure}[b]{0.16\textwidth}
\centering
\includegraphics[width=\textwidth]{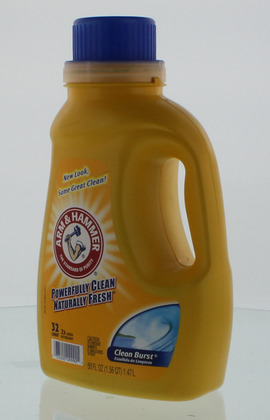}
\caption{\vspace{-0.2cm}GT Image}
\label{fig:bigbird_det_gt}
\end{subfigure}
\hfill
\begin{subfigure}[b]{0.16\textwidth}
\centering
\includegraphics[width=\textwidth]{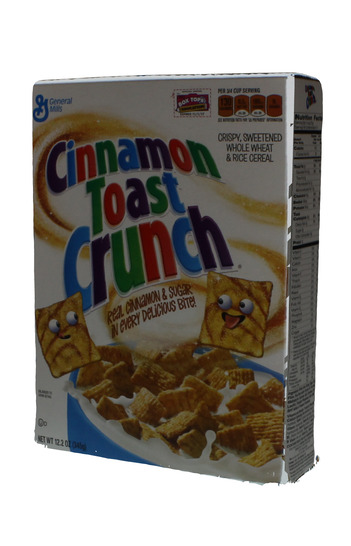}
\caption{\vspace{-0.2cm}Baseline}
\label{fig:bigbird_cin_theirs}
\end{subfigure}
\hfill
\begin{subfigure}[b]{0.16\textwidth}
\centering
\includegraphics[width=\textwidth]{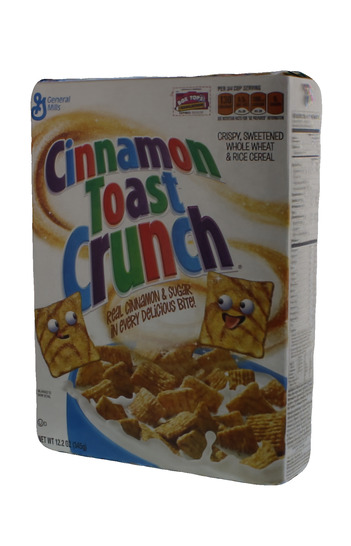}
\caption{\vspace{-0.2cm}Ours}
\label{fig:bigbird_cin_ours}
\end{subfigure}
\hfill
\begin{subfigure}[b]{0.16\textwidth}
\centering
\includegraphics[width=\textwidth]{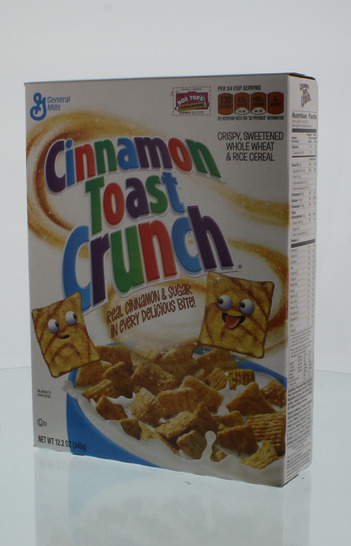}
\caption{\vspace{-0.2cm}GT Image}
\label{fig:bigbird_cin_gt}
\end{subfigure}
\caption{Texture reconstruction from real-world data. (\ref{fig:bigbird_det_ours},\ref{fig:bigbird_cin_ours}) Intrinsic neural fields can reconstruct high quality textures from the real-world BigBIRD dataset \cite{DBLP:conf/icra/SinghSNAA14} with imprecise calibration and imprecise meshes. (\ref{fig:bigbird_det_theirs},\ref{fig:bigbird_cin_theirs}) The baseline texture mapped meshes provided in the dataset show notable seams due to the non-Lambertian material, which are not present in our reconstruction that utilizes view dependence as proposed by \cite{DBLP:conf/eccv/MildenhallSTBRN20}.}
\label{fig:bigbird}
\end{figure}

\section{Conclusion}

\pseudoparagraph{Discussion}
The proposed intrinsic formulation of neural fields outperforms the extrinsic formulation in the presented experiments. However, if the data is very densely sampled from the manifold, and the kernel is thus locally limited, the extrinsic method can overcome many of its weaknesses shown before. In practice, dense sampling often leads to an increase in runtime of further processing steps, and therefore, we consider our intrinsic approach still to be superior. 
Further, we provided the proof for a stationary NTK on n-spheres. 
Our good results and intuition imply that for general manifolds, it is advantageous how the NTK takes local geometry into account. We believe this opens up an interesting direction for further theoretical analysis.

\pseudoparagraph{Conclusion}
We present intrinsic neural fields, an elegant and direct generalization of neural fields for manifolds. 
Intrinsic neural fields can represent high-frequency functions on the manifold surface independent of discretization by making use of the Laplace-Beltrami eigenfunctions. 
As a result, they also inherit beneficial properties of the LBO, like isometry invariance, a natural frequency filter, and are directly compatible with the popular functional map framework.
We introduce a new definition for stationary kernels on manifolds, and our theoretic analysis shows that the derived neural tangent kernel is stationary under specific conditions. 

We conduct experiments to investigate the capabilities of our framework on the application of texture reconstruction from a limited number of views. 
Our results show that intrinsic neural fields can represent high-frequency functions on the surface independent of sampling density and surface discretization.
Furthermore, the learned functions can be transferred to new examples using functional maps without any retraining, and view-dependent changes can be incorporated. 
Intrinsic neural fields outperform competing methods in all settings. Additionally, they add flexibility, especially in settings with deformable objects due to the intrinsic nature of our approach.

\noindent \textbf{Acknowledgements}\\

\noindent We express our appreciation to our colleagues who have supported us. Specifically we thank Simon Klenk, Tarun Yenamandra, Bj\"orn H\"afner, and Dominik Muhle for proofreading and helpful discussions. We want to thank the research community at large and contributors to open source projects for openly and freely sharing their knowledge and work.

\bibliographystyle{splncs04}
\bibliography{egbib}

\clearpage

\appendix

\section*{\Large Supplementary Material}

\noindent In this supplementary, we elaborate on the implementation details for our intrinsic neural fields (\autoref{sup:sec:impl_details}), discuss the details of our experiments (\autoref{sup:sec:details_experiments}), show that our method is not overly initialization-dependent (\autoref{sup:sec:init_dependence}), and, finally, provide further theoretical results (\autoref{sup:sec:theory}).

The high-resolution intrinsic neural texture field on the human model that we showcase in \refmainpaper{Fig.~1}{fig:teaser} is available as a textured mesh on \href{https://skfb.ly/otvFK}{sketchfab}\footnote{https://skfb.ly/otvFK}. We would like to note that the small texture seams are not due to our method, but due to the conversion from our network to a uv texture map. Intrinsic neural fields do not possess the discontinuities which are present in the uv map. The texture is created by an inverse uv lookup of each texel and an evaluation of the intrinsic neural field at the corresponding point on the manifold. We provide the textured mesh as a convenient possibility for qualitative inspection with current tools but it is never used to evaluate the proposed method qualitatively or quantitatively in the paper.

\section{Implementation Details}\label{sup:sec:impl_details}

\begin{figure}[tb]
    \centering
    \begin{overpic}[width=\linewidth]{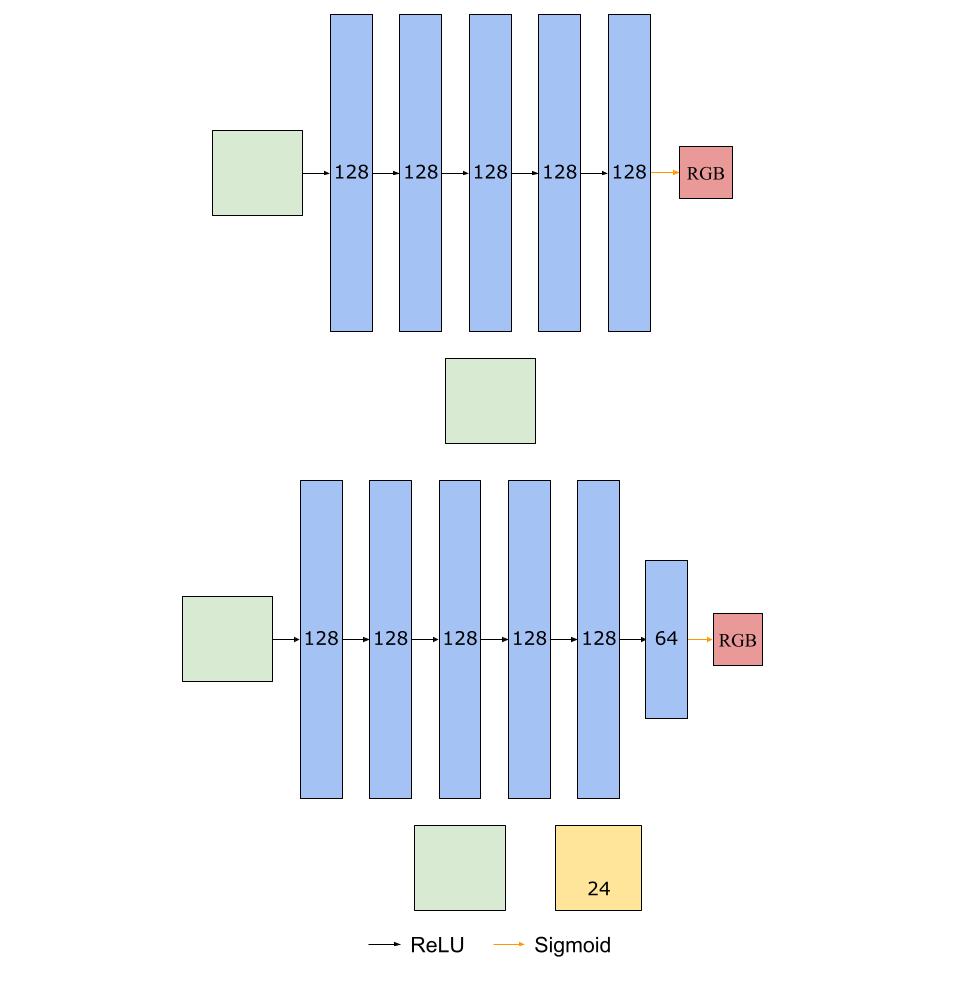}
    \put(21.95,83.5){\scriptsize{$\gamma_{\text{Intr}}(\pp)$}}
    \put(25,79.35){$d$}
    \put(45.3,61){\scriptsize{$\gamma_{\text{Intr}}(\pp)$}}
    \put(48.35,56.85){$d$}
    \put(48,64.9){$+$}
    \put(18.9,37){\scriptsize{$\gamma_{\text{Intr}}(\pp)$}}
    \put(21.95,32.85){$d$}
    \put(42.3,14.5){\scriptsize{$\gamma_{\text{Intr}}(\pp)$}}
    \put(45.35,10.35){$d$}
    \put(45, 18.2){$+$}
    \put(56.5,14.5){\scriptsize{$\gamma_{\text{FF}}(\mathbf{d})$}}
    \put(58.75, 18.2){$+$}
    \end{overpic}
    \caption{Network architecture. We use a similar network architecture as used for NeRF~\cite{DBLP:conf/eccv/MildenhallSTBRN20}. A point on the 2-manifold is described by $\pp$. The unit ray direction is represented by $\dd$. The notations $\gamma_{\text{Intr}}$ and $\gamma_{\text{FF}}$ represent the proposed eigenfunction embedding and the sine/cosine positional encoding~\cite{DBLP:conf/eccv/MildenhallSTBRN20} respectively. The $+$ sign denotes concatenation. The second architecture is used in the experiments of \refmainpaper{Sec. 5.4}{sec:app:real_world_view_dep} while we use the first architecture in all other experiments.}
    \label{fig:intrinsic_network_arch}
\end{figure}

For our method, we calculate the eigenfunctions of the Laplace-Beltrami operator of a given triangle mesh once by 
solving the generalized eigenvalue problem for the first $d$ eigenvalues using the robust Laplacian by Sharp and Crane~\cite{DBLP:journals/cgf/SharpC20}.
If the given geometry is a pointcloud, we create a local triangulation around each point which lets us perform a normal ray-mesh intersection. Additionally, the robust Laplacian~\cite{DBLP:journals/cgf/SharpC20} supports calculating eigenfunctions on pointclouds.

Depending on whether the viewing direction is taken into account, we use the respective network architectures shown in \autoref{fig:intrinsic_network_arch} for our experiments.
Both networks take as input a point $\pp$ from the surface of the discrete 2-manifold embedded into its $d$ eigenfunctions. 
Since $\pp$ might not be a vertex, we linearly interpolate the eigenfunctions of the vertices $v_i$, $v_j$, and $v_k$, which span the triangle face where $\pp$ is located, using the barycentric coordinates.
For embedding the unit viewing direction $\dd \in \mathbb{R}^3$, we use the sine/cosine positional encoding~\cite{DBLP:conf/eccv/MildenhallSTBRN20}.

During the training, we randomly sample preprocessed rays from the whole training split. 
We use a batch size of 4096 across all our experiments. 
As optimizer, we use Adam with the default parameters ($\beta_1 = 0.9$, $\beta_2 = 0.999$, $\epsilon = 10^{-8}$). 
Our loss function is the mean L1 loss over a batch of randomly sampled rays $\mathcal{B}$
\begin{align} \label{eqn:mean_L1_loss}
\mathcal{L}_1 = \frac{1}{|\mathcal{B}|} \sum_{\pp \in \mathcal{B}} \| F_{\theta}(\pp) - c_{\text{gt}}(\pp) \|_1
\end{align}
where $F_{\theta}$ is an intrinsic neural field and $c_{\text{gt}}(\pp)$ is the ground-truth RGB color.

\section{Experimental Details} \label{sup:sec:details_experiments}

\subsection{Modification of NeuTex} \label{sup:sec:neutex_mod}

\begin{figure}[tb]
    \centering
    \begin{overpic}[width=\linewidth]{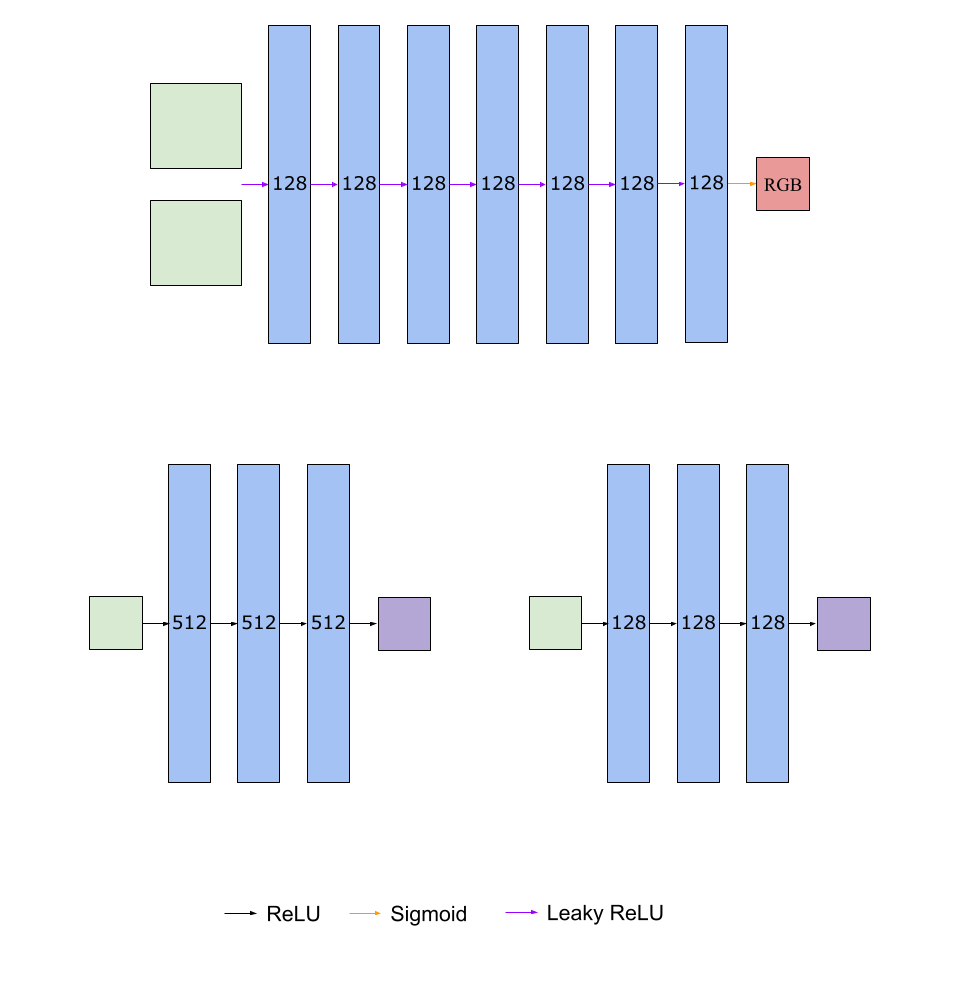}
    \put(19, 88.5){\scriptsize{$\uu$}}
    \put(19.1, 85){\scriptsize{$3$}}
    \put(19, 81){$+$}
    \put(16, 77){\scriptsize{$\gamma_{FF}(\uu)$}}
    \put(18.5, 73){\scriptsize{$60$}}
    \put(45, 61){$F_{\text{tex}}$}
    \put(24, 17){$F_{\text{uv}}^{-1}$}
    \put(68, 17){$F_{\text{uv}}$}
    \put(11, 38.5){\scriptsize{$\uu$}}
    \put(11.25, 36){\scriptsize{$3$}}
    \put(39.75, 38.5){\scriptsize{$\xx$}}
    \put(39.9, 36){\scriptsize{$3$}}
    \put(55, 38.5){\scriptsize{$\xx$}}
    \put(55.1, 36){\scriptsize{$3$}}
    \put(83.8, 38.5){\scriptsize{$\uu$}}
    \put(84, 36){\scriptsize{$3$}}
    \end{overpic}
    \caption{Modified network architecture for NeuTex~\cite{DBLP:conf/cvpr/XiangXHHS021}. We remove the volume density as well as the view dependence enabling NeuTex to learn a simpler setting because the geometry is known and the textures are diffuse in the experiment of \refmainpaper{Sec. 5.1}{sec:app:texture_representation}. A 3D coordinate on the sphere representing the uv-space is described by $\uu$. The symbol $\xx$ is a 3D world coordinate on the surface of the given 2-manifold. The notation for the sine/cosine positional encoding~\cite{DBLP:conf/eccv/MildenhallSTBRN20} is $\gamma_{\text{FF}}$. The $+$ sign denotes concatenation.}
    \label{fig:neutex_network_arch}
\end{figure}

In order to make the comparison between our method and NeuTex~\cite{DBLP:conf/cvpr/XiangXHHS021} fair, we adapt the latter one to the setting of a given geometry.
Since the geometry is known in our experimental setup, we remove the latent vector used for learning a representation of the geometry.
Additionally, we remove the volume density from NeuTex because our experiments are focused on learning a function on the surface of a given 2-manifold.
Furthermore, we do not incorporate view dependence in the experiments of \refmainpaper{Sec. 5.1}{sec:app:texture_representation}.
Hence, we use the the provided, non-view dependent MLP architecture for $F_{\text{tex}}$ from the official \href{https://github.com/fbxiang/NeuTex}{Github repository}\footnote{https://github.com/fbxiang/NeuTex}.
We, additionally, add a sigmoid non-linearity to the last linear layer to ensure that the RGB color values are in $[0,1]$.
The overall architecture is shown in \autoref{fig:neutex_network_arch}.

Due to the higher complexity of additionally learning a mapping between the surface of the manifold and the uv-space, we pretrain the mapping networks $F_{\text{uv}}$ and $F_{\text{uv}}^{-1}$.
In each training iteration, we randomly sample $N = 25,000$ points from the sphere and map them into the 3D world coordinate space of the geometry using $F_{\text{uv}}^{-1}$.
Then, we map the predicted 3D world points back onto the sphere using $F_{\text{uv}}$.
We train for $200,000$ iterations using the Adam optimizer with default parameters ($\beta_1 = 0.9$, $\beta_2 = 0.999$, $\epsilon = 10^{-8}$) and a learning rate of $0.0001$.
The loss function is a combination of the mean Chamfer distance $\mathcal{L}_{\text{chamfer}}$ between the predicted 3D world points and the vertices of the mesh and the mean 2D-3D-2D cycle loss between the sampled and predicted uv-points $u_i$
\begin{align} \label{eqn:2d-3d-2d-cycle-loss}
\mathcal{L}_{\text{cycle}} = \frac{1}{N}\sum_{i=0}^{N-1} \| F_{\text{uv}} ( F_{\text{uv}}^{-1} ( u_i ) ) - u_i \|^2_2.
\end{align}

After the pretraining, we employ a combination of the rendering loss and the 3D-2D-3D cycle loss as the loss function for learning a surface function on a given geometry:
\begin{align} \label{eqn:neutex-train-loss}
\mathcal{L}_{\text{neutex}} = \frac{1}{|\mathcal{B}|}\sum_{\mathbf{p} \in \mathcal{B}} \| F_{\text{tex}} ( F_{\text{uv}} ( \mathbf{p} ) ) - c_{gt} \|_2^2 + \|F_{\text{uv}}^{-1} ( F_{\text{uv}} ( \mathbf{p} ) ) - \mathbf{p} \|^2_2.
\end{align}

For the experiment in \refmainpaper{Sec. 5.1}{sec:app:texture_representation}, we, additionally, increase the embedding size of the sphere coordinates to $1023$ for NeuTex, so that it is similar to the other methods.
For the sine/cosine positional encoding, we select the frequency bands from $[0, 6]$ linearly spaced because it covers a range that is similar to RFF with $\sigma = 8$.

\subsection{Texture Reconstruction}

\begin{figure}[tb]
\centering
\begin{subfigure}[b]{0.19\textwidth}
\centering
\includegraphics[width=\textwidth]{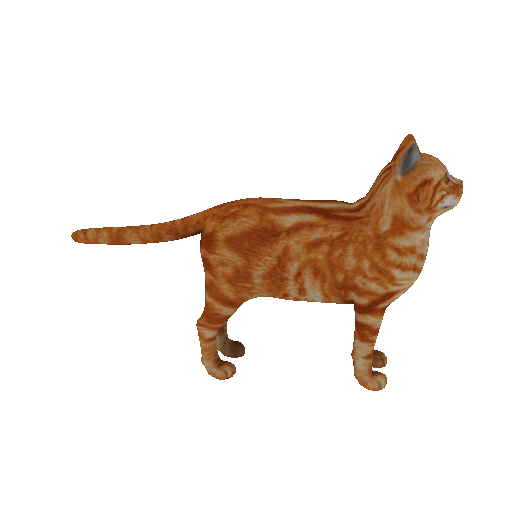}
\end{subfigure}
\hfill
\begin{subfigure}[b]{0.19\textwidth}
\centering
\includegraphics[width=\textwidth]{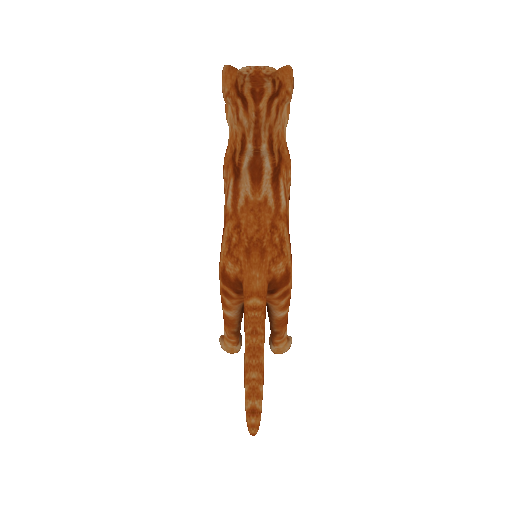}
\end{subfigure}
\hfill
\begin{subfigure}[b]{0.19\textwidth}
\centering
\includegraphics[width=\textwidth]{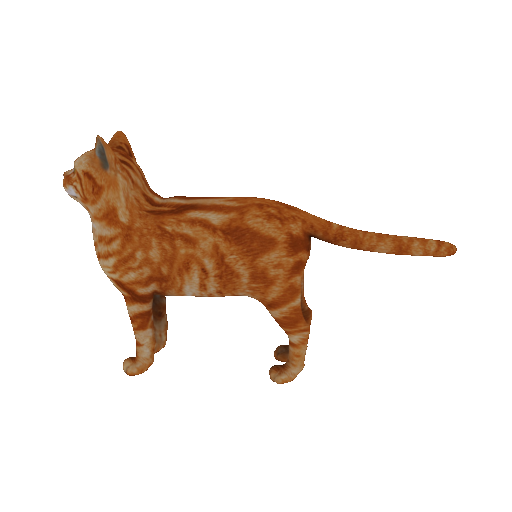}
\end{subfigure}
\hfill
\begin{subfigure}[b]{0.19\textwidth}
\centering
\includegraphics[width=\textwidth]{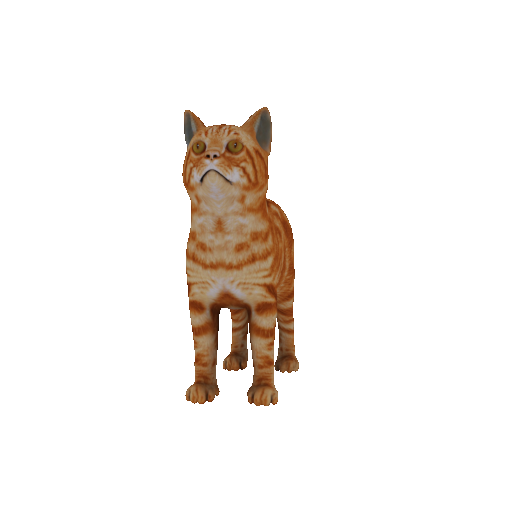}
\end{subfigure}
\hfill
\begin{subfigure}[b]{0.19\textwidth}
\centering
\includegraphics[width=\textwidth]{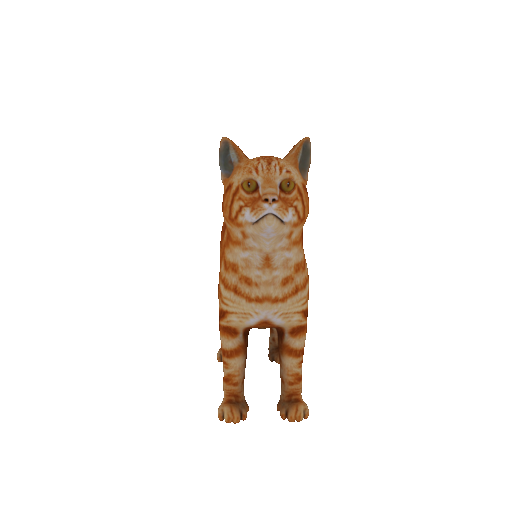}
\end{subfigure}
\\[1.1ex]
\begin{subfigure}[b]{0.19\textwidth}
\centering
\includegraphics[width=\textwidth]{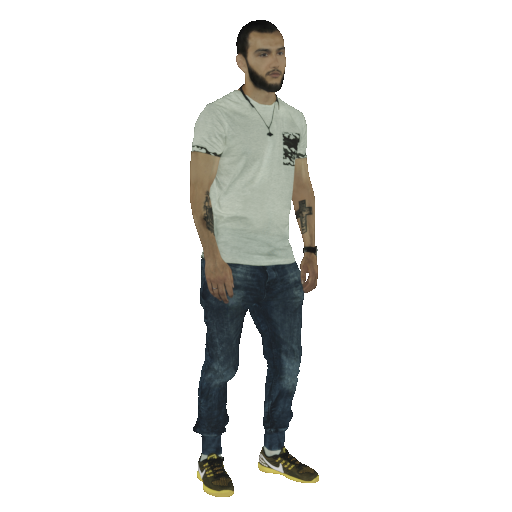}
\end{subfigure}
\hfill
\begin{subfigure}[b]{0.19\textwidth}
\centering
\includegraphics[width=\textwidth]{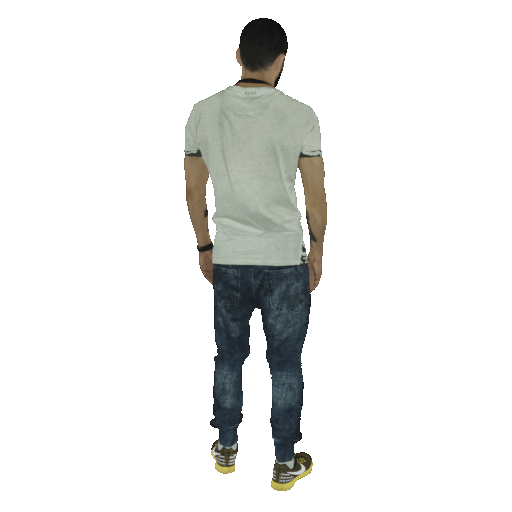}
\end{subfigure}
\hfill
\begin{subfigure}[b]{0.19\textwidth}
\centering
\includegraphics[width=\textwidth]{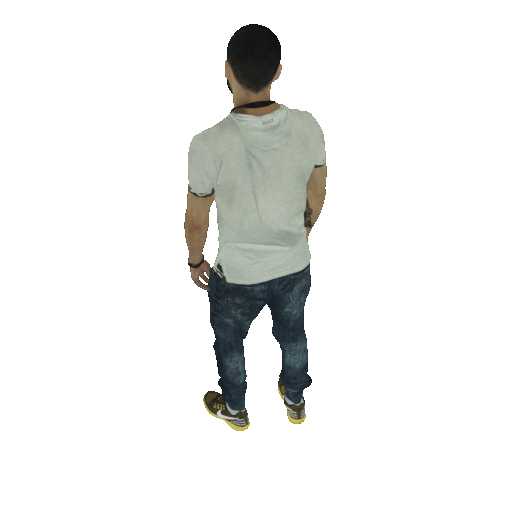}
\end{subfigure}
\hfill
\begin{subfigure}[b]{0.19\textwidth}
\centering
\includegraphics[width=\textwidth]{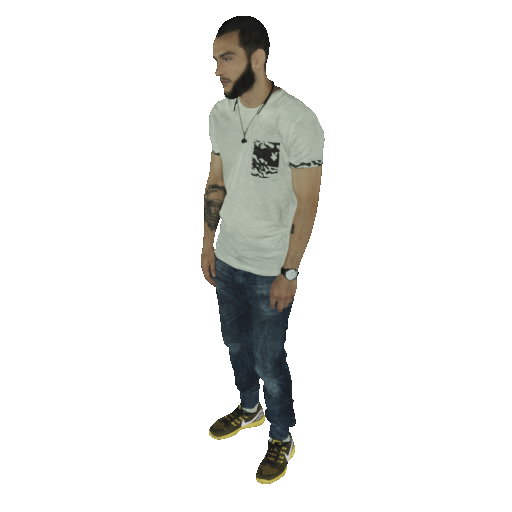}
\end{subfigure}
\hfill
\begin{subfigure}[b]{0.19\textwidth}
\centering
\includegraphics[width=\textwidth]{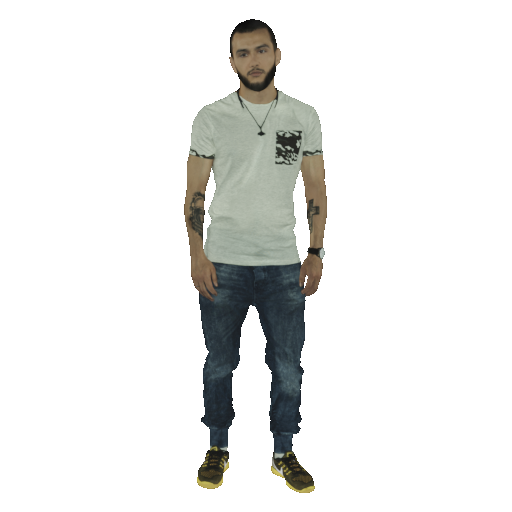}
\end{subfigure}
\caption{Training views for the cat and human datasets with 5 views used in \refmainpaper{Sec. 5.1}{sec:app:texture_representation}, \refmainpaper{Sec. 5.2}{sec:app:geometry_representation}, and \refmainpaper{Sec. 5.3}{sec:app:texture_transfer}.}
\label{fig:training_views_5_views}
\end{figure}

\begin{table}[tb]
\small
\caption{Hyperparameter table: Texture reconstruction.}
\label{tab:hyperparameters_texture_recon}
\begin{center}
\renewcommand{\arraystretch}{1.5}
\begin{tabular}{p{3.2cm} p{7cm}}
    \toprule
    Hyperparameter &
    Value \\
    
    \midrule
    \texttt{optimizer} & Adam with default parameters ($\beta_1 = 0.9$, $\beta_2 = 0.999$, $\epsilon= 10^{-8}$) \\
    \texttt{learning rate} & 0.0001 \\
    \texttt{batch size} & 4096 \\
    \texttt{image size} & $512 {\times} 512$ \\
    \texttt{random seed} & 0 \\
    \texttt{eigenfunctions} & 1 to 256, 1794 to 2304, 3841 to 4096 where 0 is the constant eigenfunction \\
    \texttt{epochs} & 1000 \\
    \bottomrule
\end{tabular}
\end{center}
\end{table}

\begin{figure}[tb]
\centering
\begin{subfigure}[b]{0.19\textwidth}
\centering
\includegraphics[width=\textwidth]{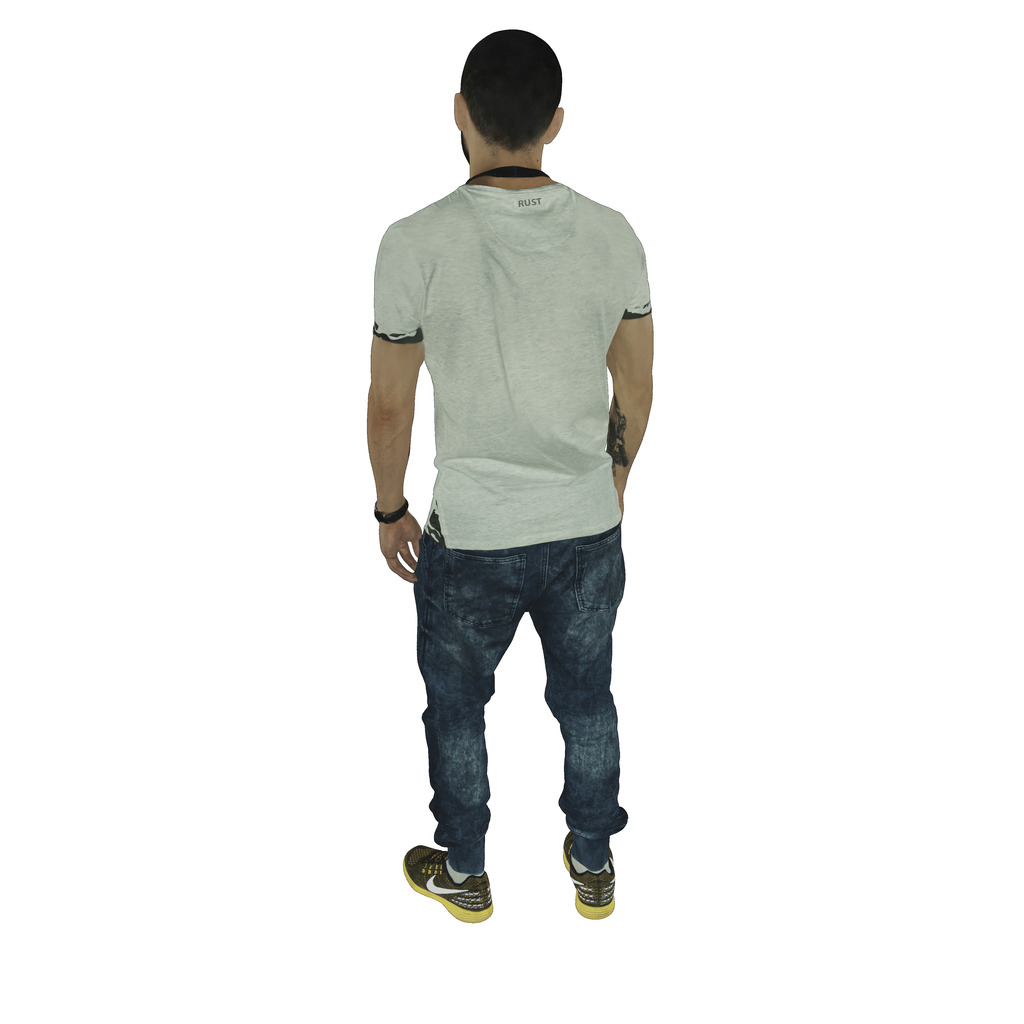}
\end{subfigure}
\hfill
\begin{subfigure}[b]{0.19\textwidth}
\centering
\includegraphics[width=\textwidth]{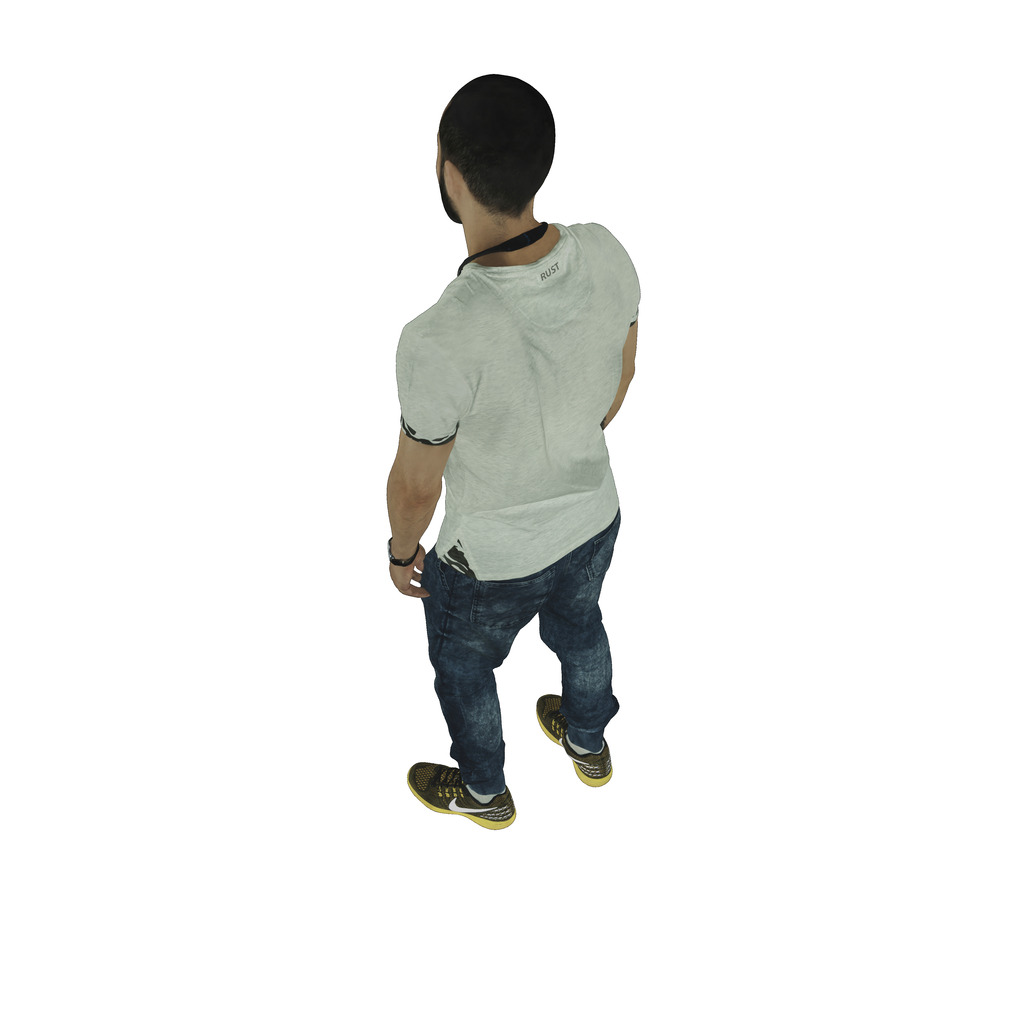}
\end{subfigure}
\hfill
\begin{subfigure}[b]{0.19\textwidth}
\centering
\includegraphics[width=\textwidth]{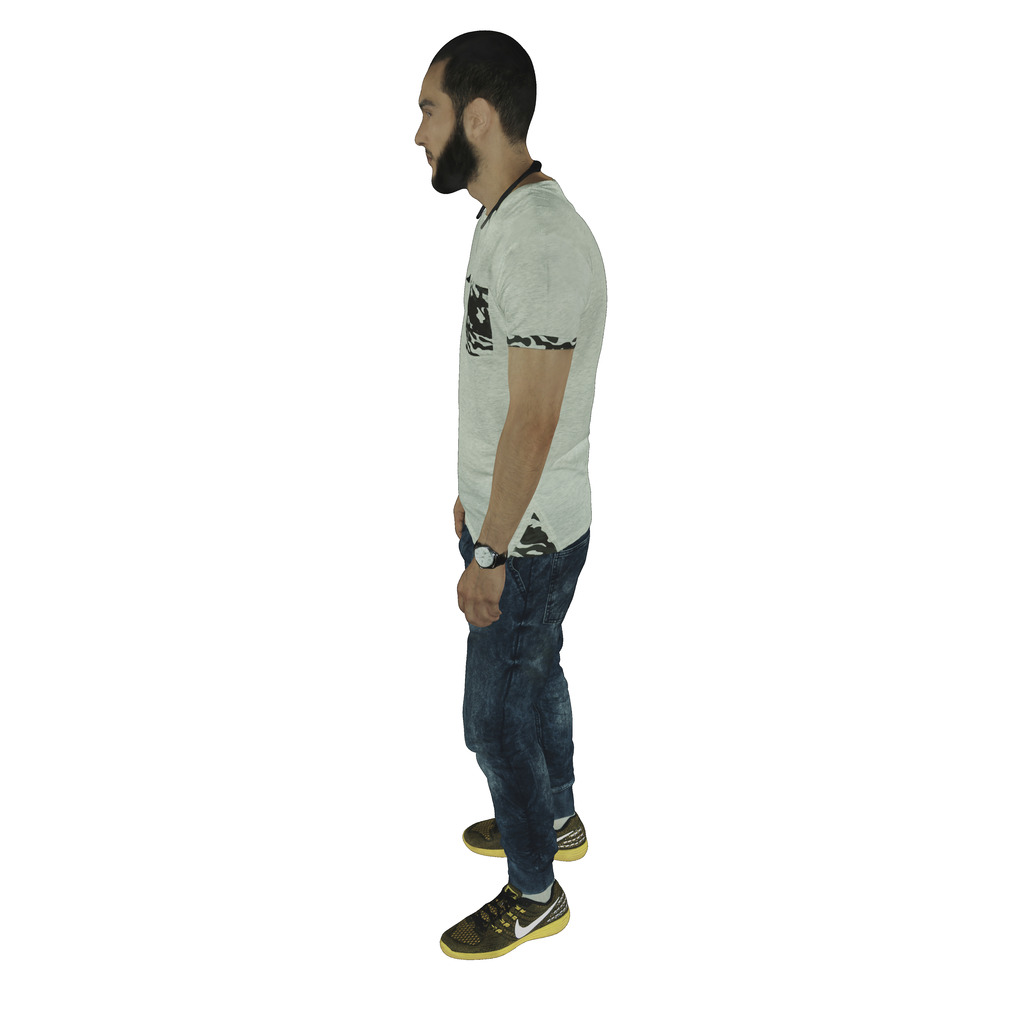}
\end{subfigure}
\hfill
\begin{subfigure}[b]{0.19\textwidth}
\centering
\includegraphics[width=\textwidth]{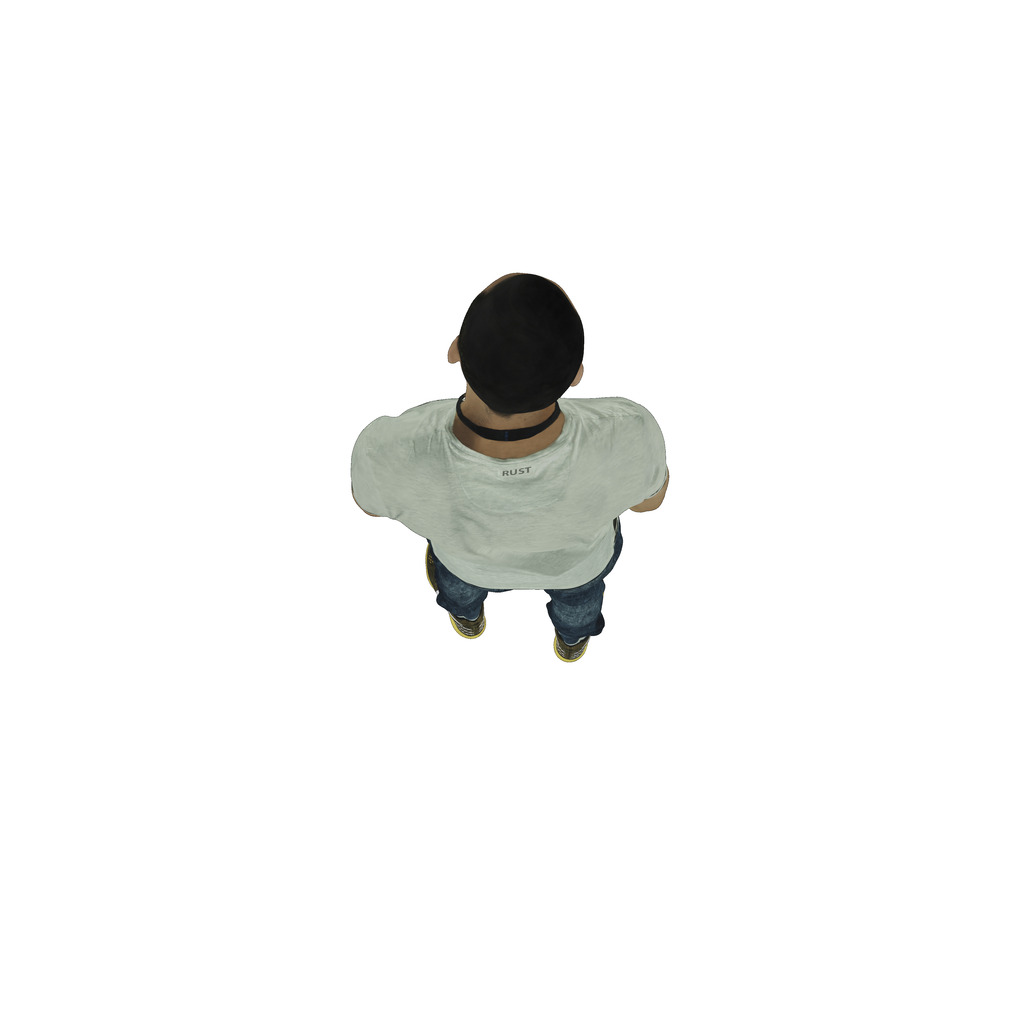}
\end{subfigure}
\hfill
\begin{subfigure}[b]{0.19\textwidth}
\centering
\includegraphics[width=\textwidth]{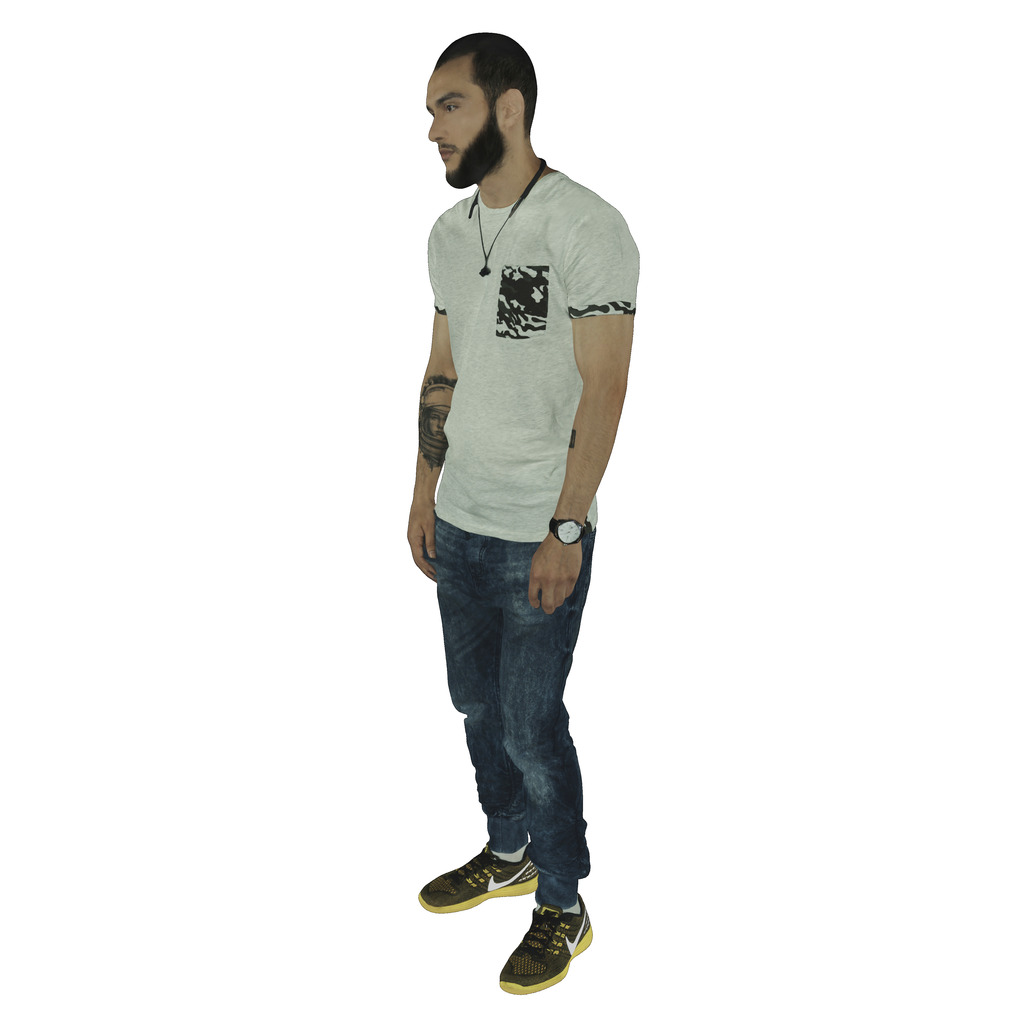}
\end{subfigure}
\\[1.1ex]
\begin{subfigure}[b]{0.19\textwidth}
\centering
\includegraphics[width=\textwidth]{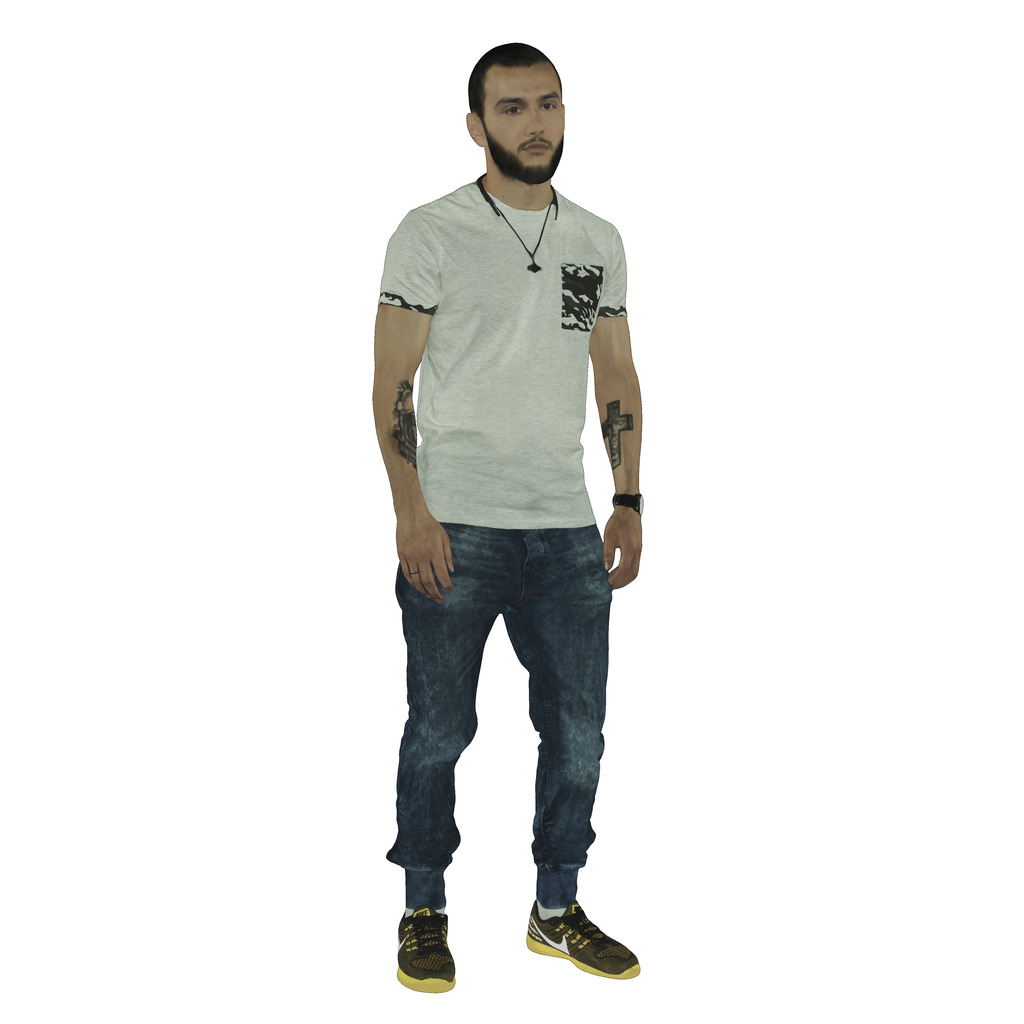}
\end{subfigure}
\hfill
\begin{subfigure}[b]{0.19\textwidth}
\centering
\includegraphics[width=\textwidth]{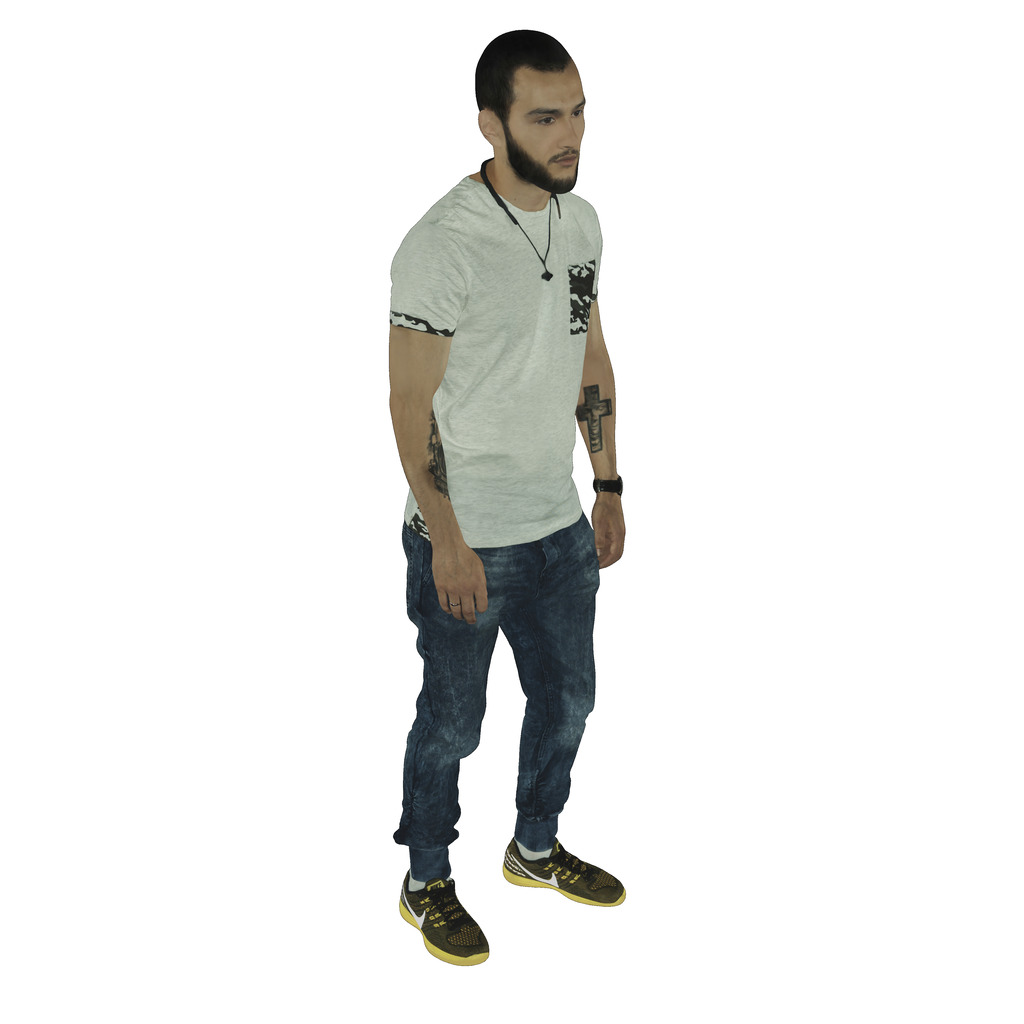}
\end{subfigure}
\hfill
\begin{subfigure}[b]{0.19\textwidth}
\centering
\includegraphics[width=\textwidth]{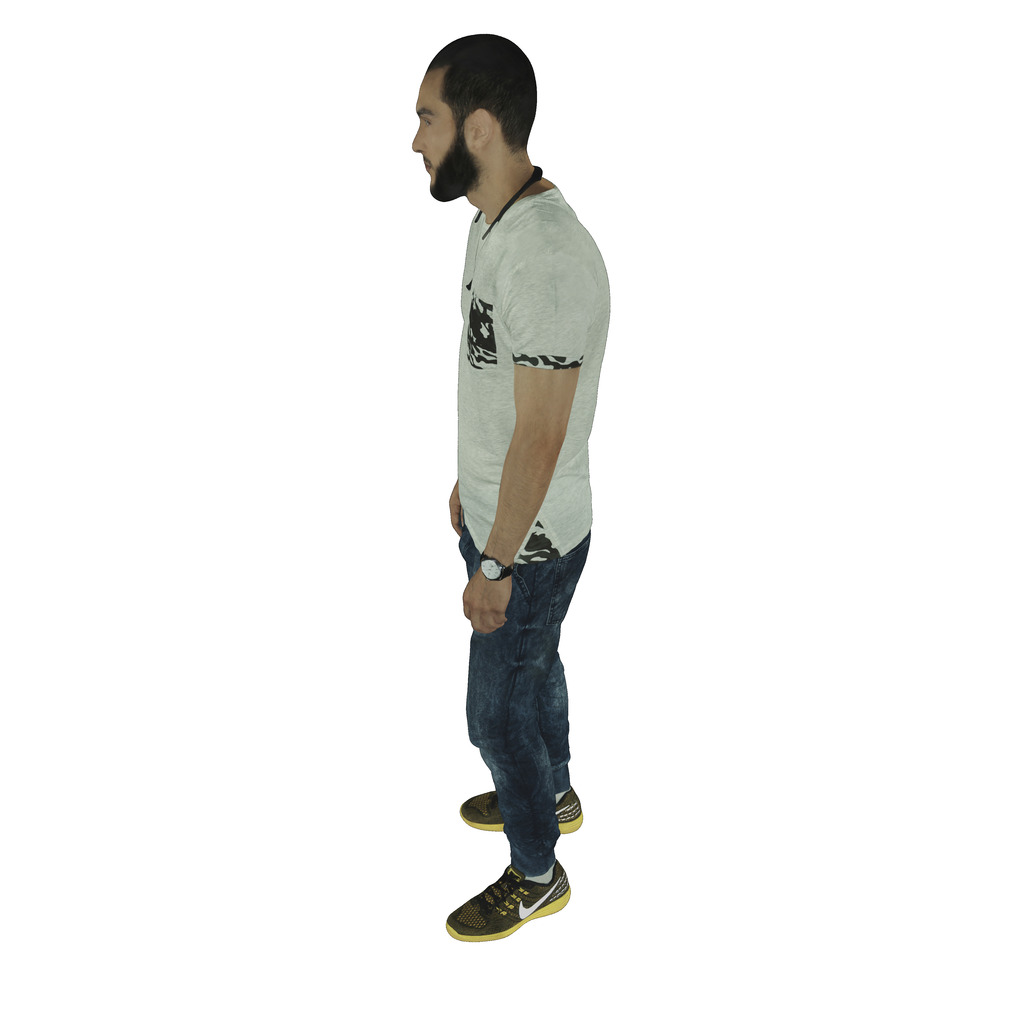}
\end{subfigure}
\hfill
\begin{subfigure}[b]{0.19\textwidth}
\centering
\includegraphics[width=\textwidth]{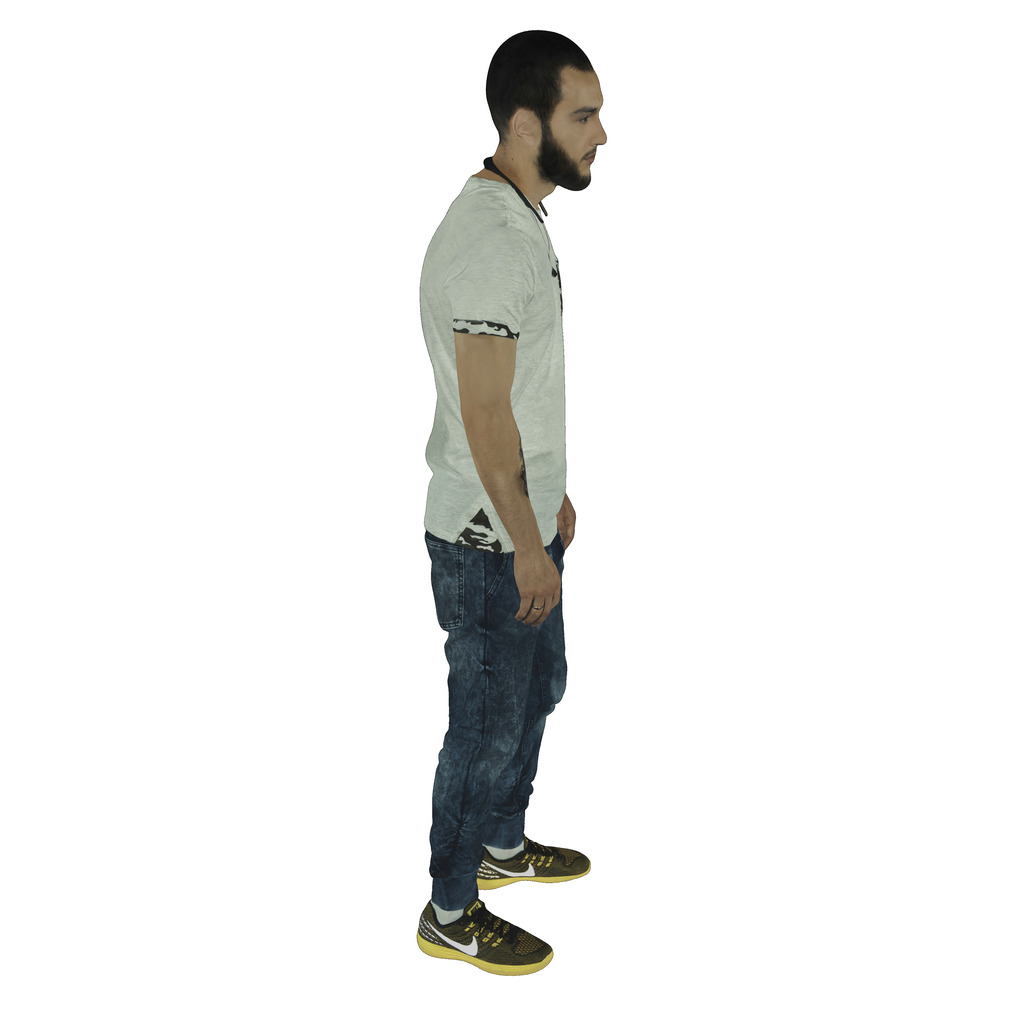}
\end{subfigure}
\hfill
\begin{subfigure}[b]{0.19\textwidth}
\centering
\includegraphics[width=\textwidth]{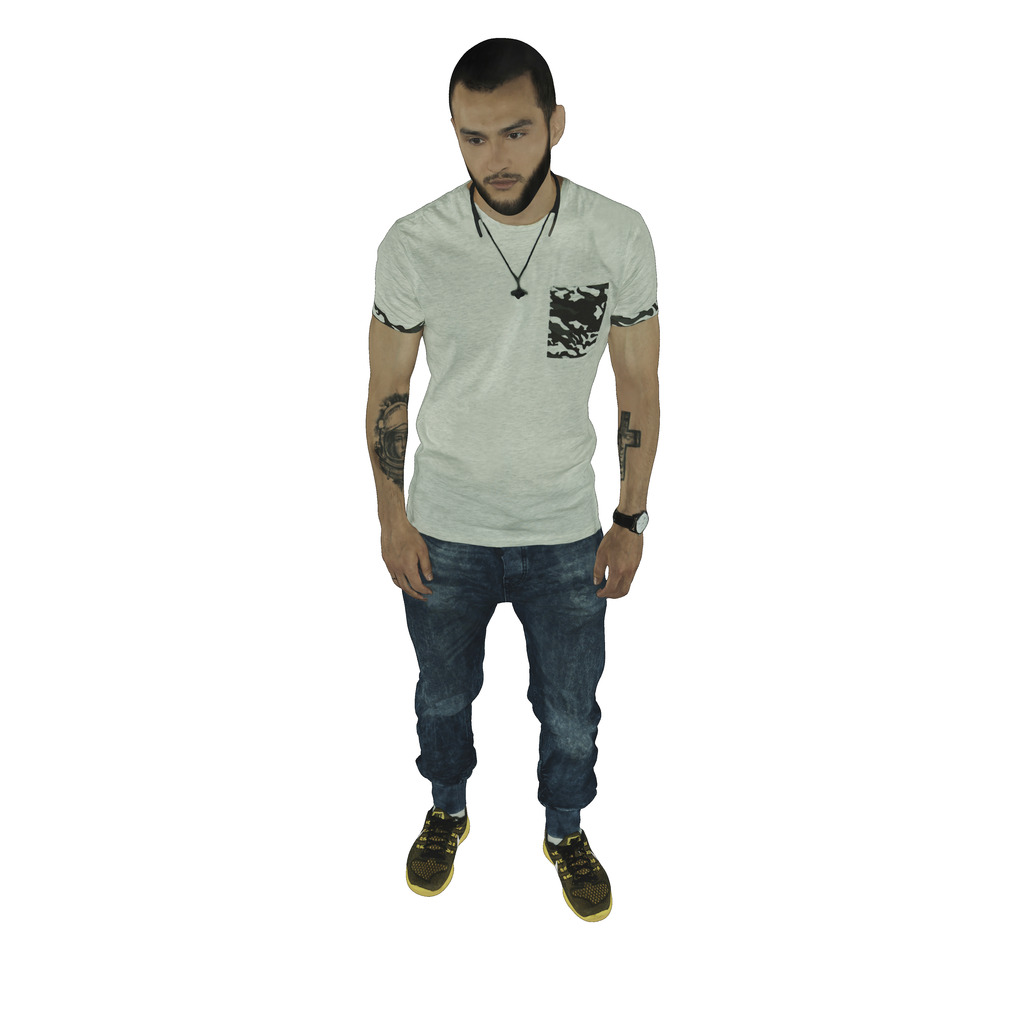}
\end{subfigure}
\\[1.1ex]
\begin{subfigure}[b]{0.19\textwidth}
\centering
\includegraphics[width=\textwidth]{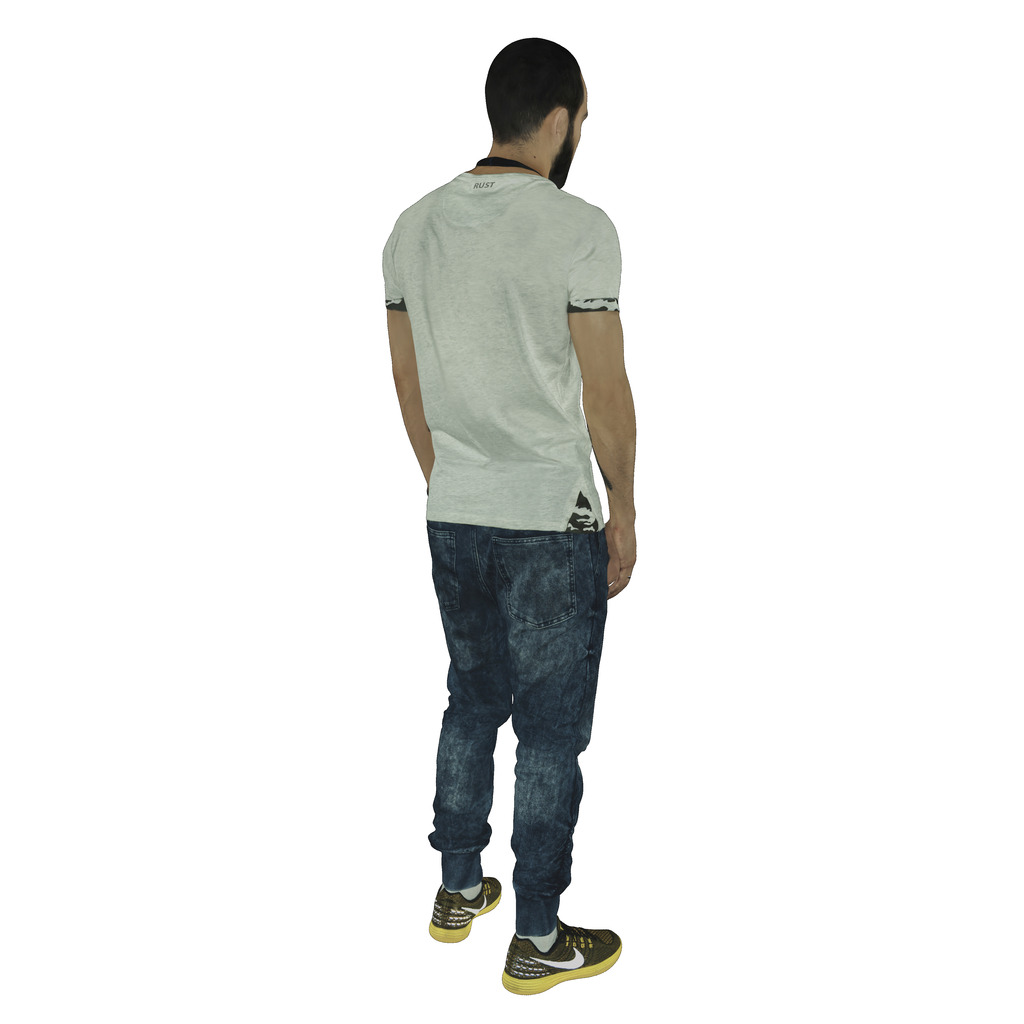}
\end{subfigure}
\hfill
\begin{subfigure}[b]{0.19\textwidth}
\centering
\includegraphics[width=\textwidth]{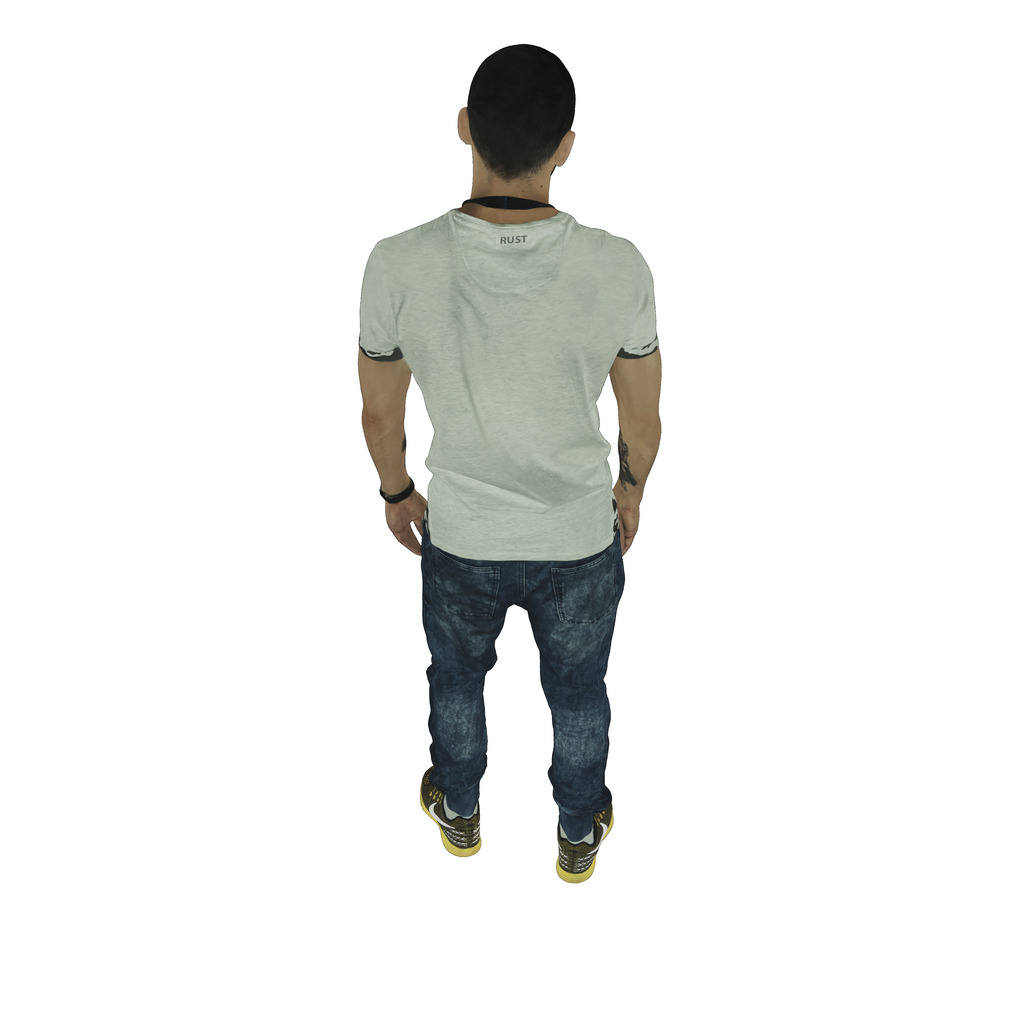}
\end{subfigure}
\hfill
\begin{subfigure}[b]{0.19\textwidth}
\centering
\includegraphics[width=\textwidth]{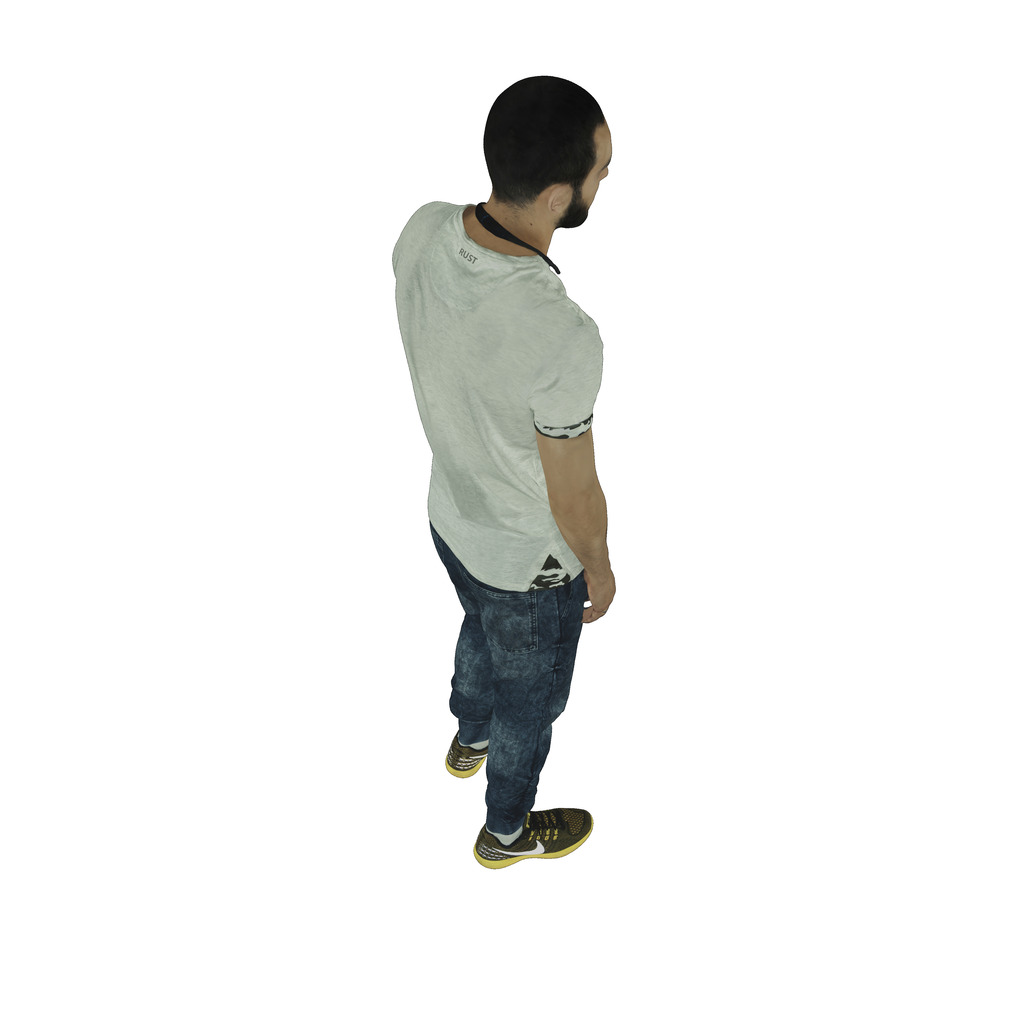}
\end{subfigure}
\hfill
\begin{subfigure}[b]{0.19\textwidth}
\centering
\includegraphics[width=\textwidth]{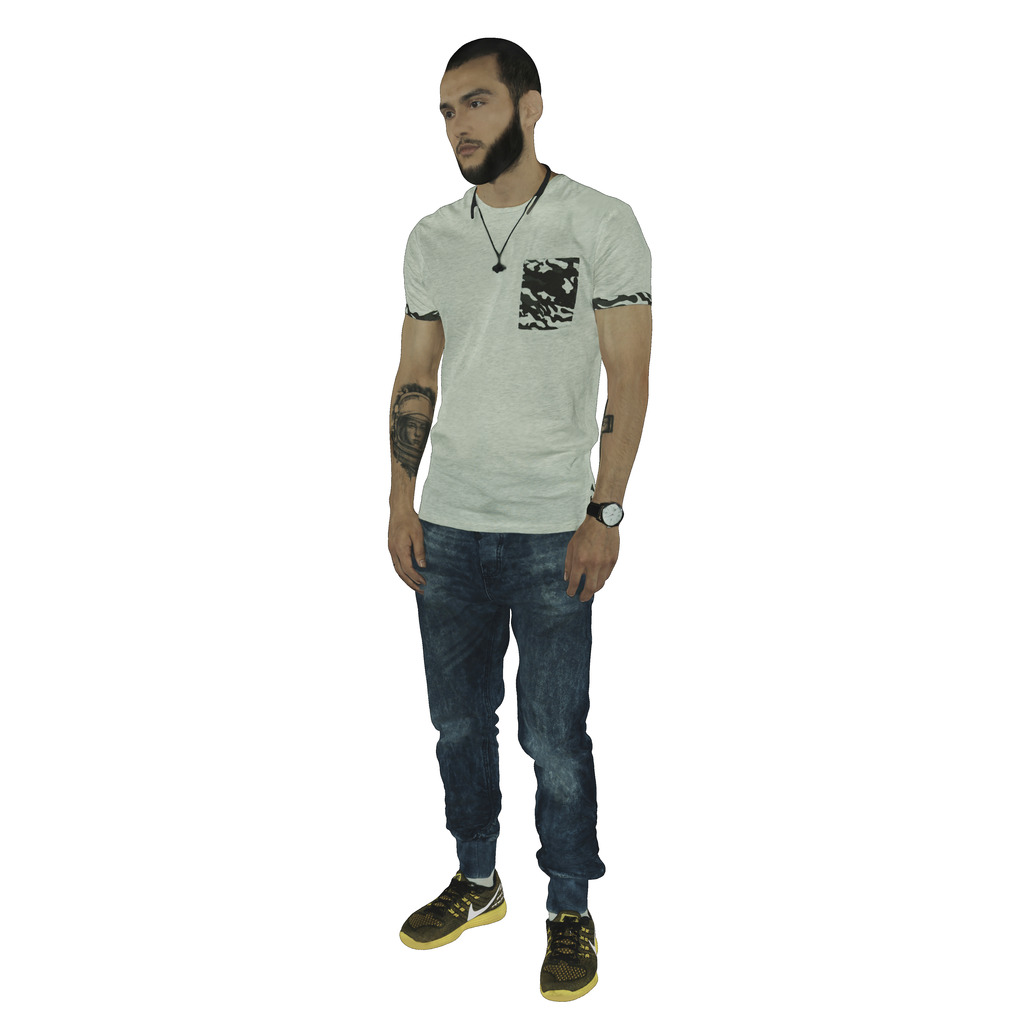}
\end{subfigure}
\hfill
\begin{subfigure}[b]{0.19\textwidth}
\centering
\includegraphics[width=\textwidth]{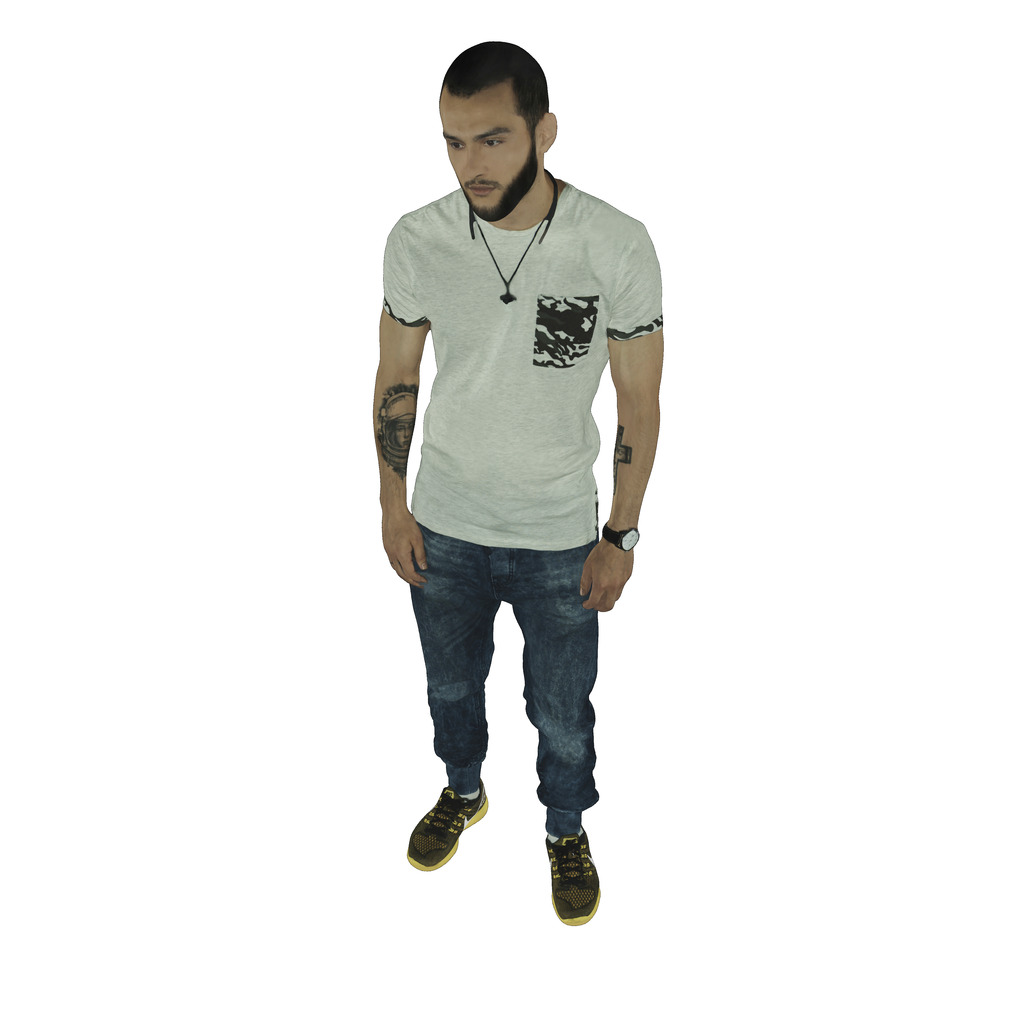}
\end{subfigure}
\\[1.1ex]
\begin{subfigure}[b]{0.19\textwidth}
\centering
\includegraphics[width=\textwidth]{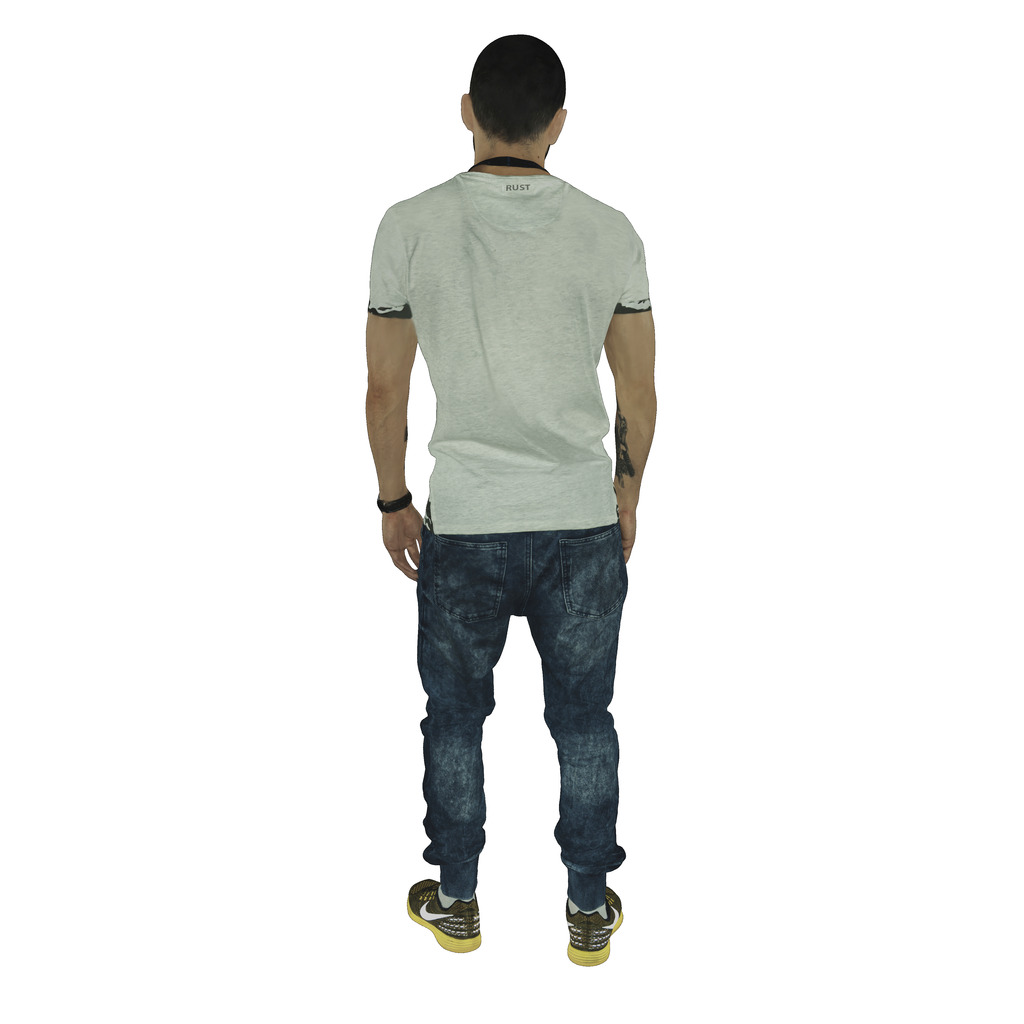}
\end{subfigure}
\hfill
\begin{subfigure}[b]{0.19\textwidth}
\centering
\includegraphics[width=\textwidth]{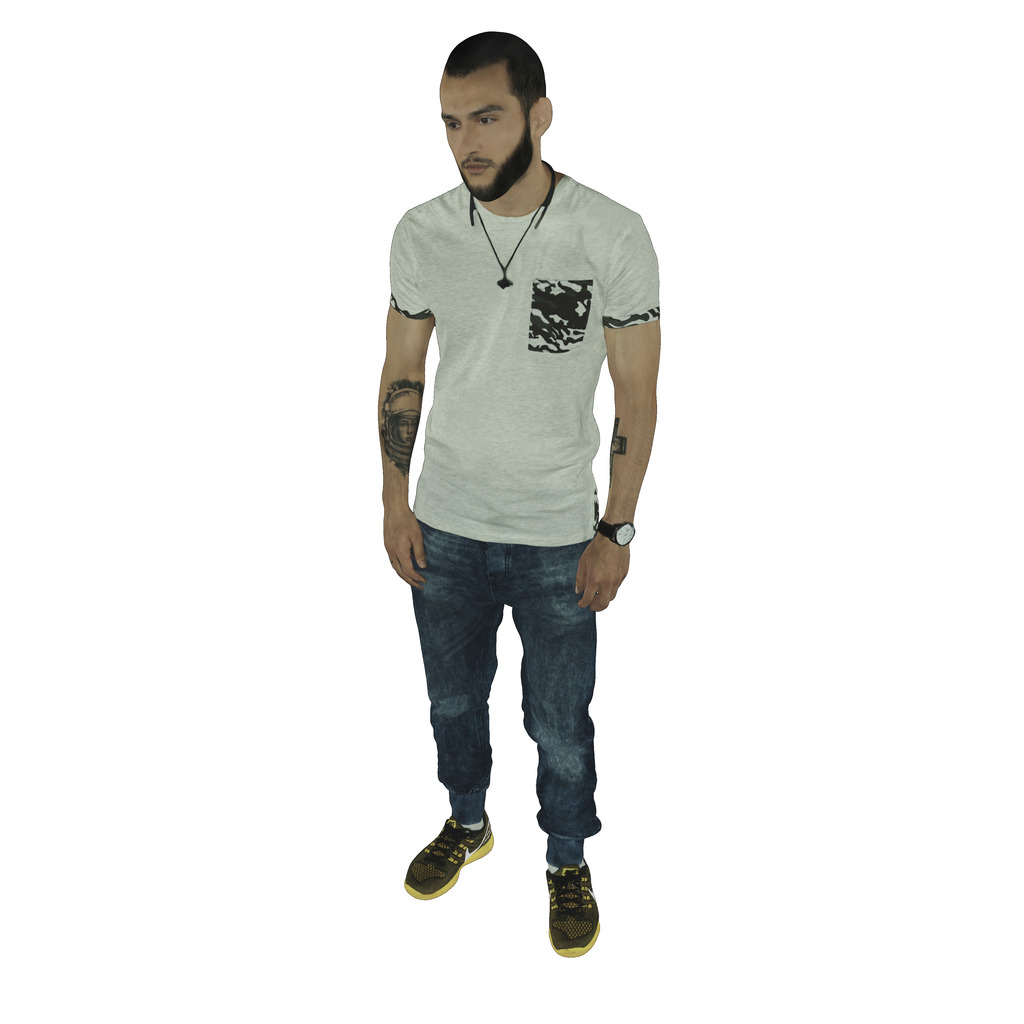}
\end{subfigure}
\hfill
\begin{subfigure}[b]{0.19\textwidth}
\centering
\includegraphics[width=\textwidth]{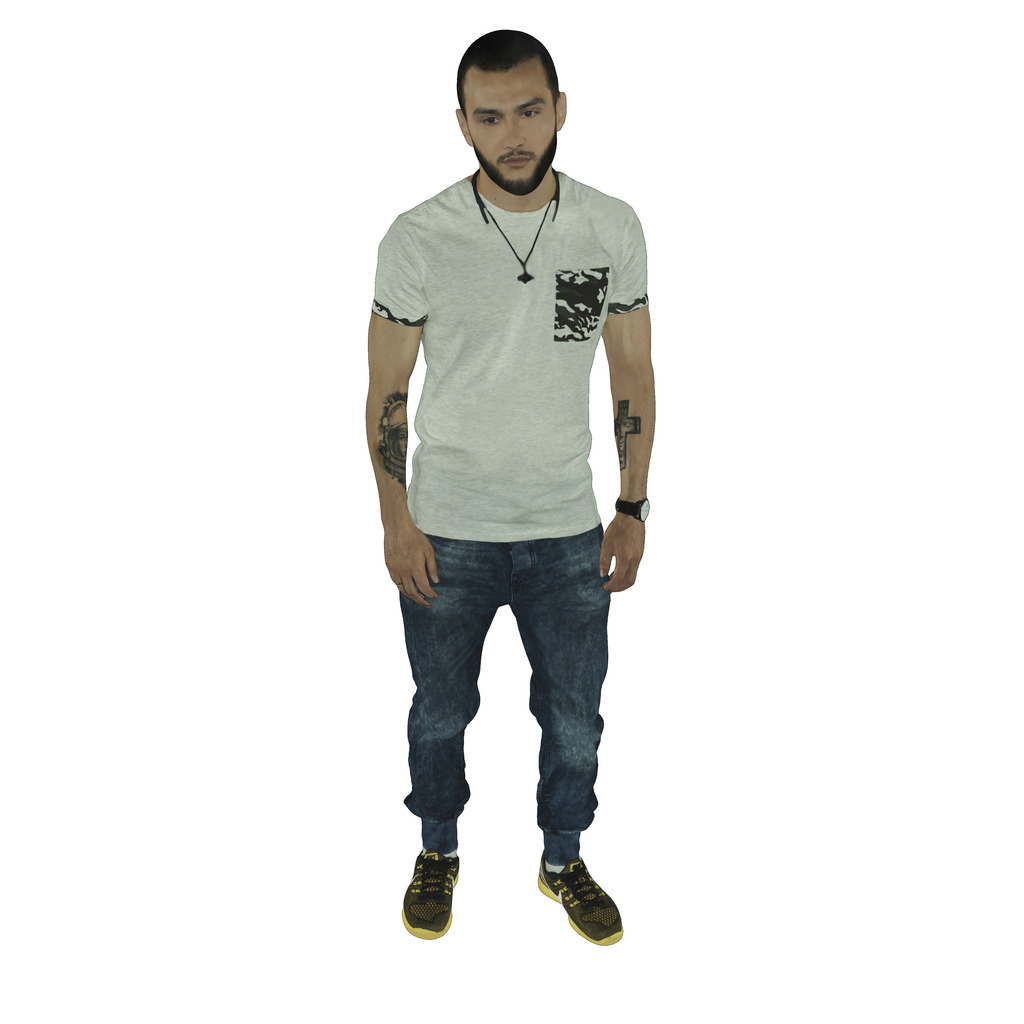}
\end{subfigure}
\hfill
\begin{subfigure}[b]{0.19\textwidth}
\centering
\includegraphics[width=\textwidth]{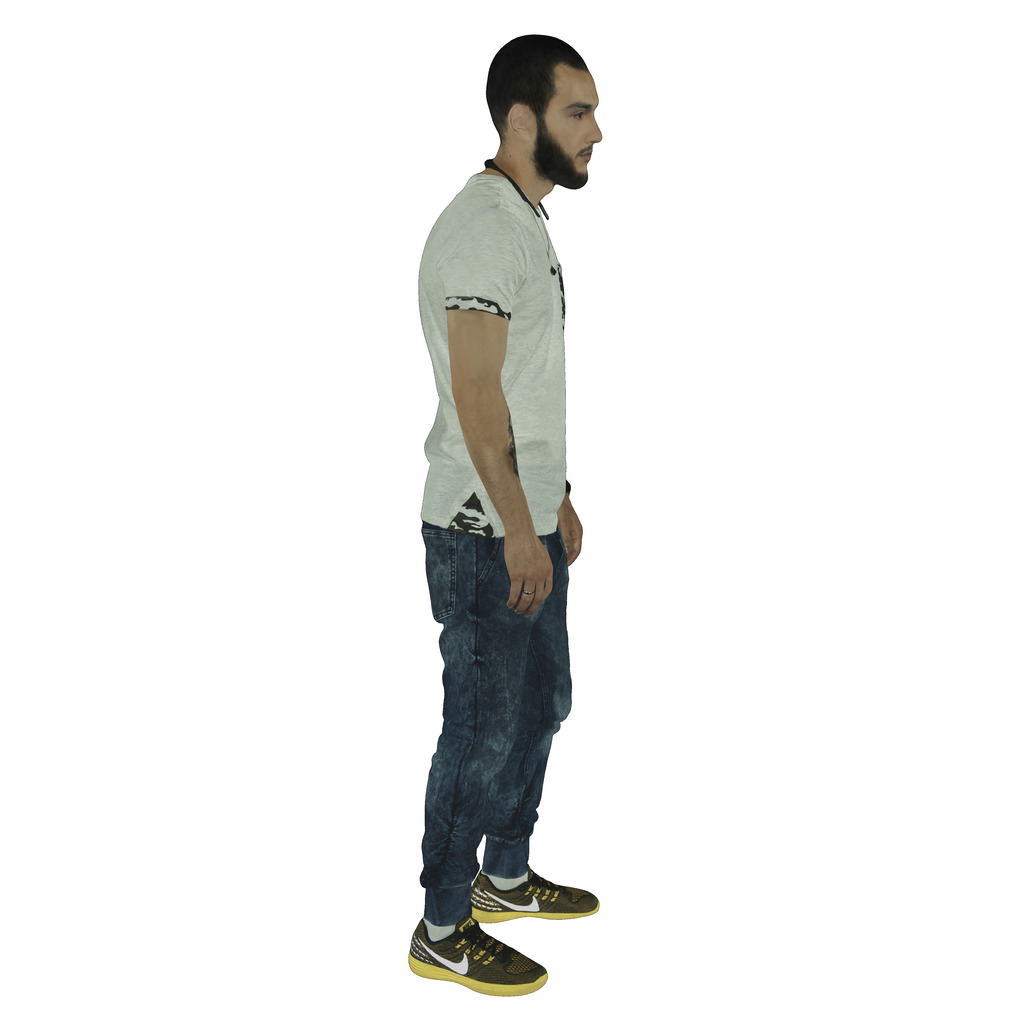}
\end{subfigure}
\hfill
\begin{subfigure}[b]{0.19\textwidth}
\centering
\includegraphics[width=\textwidth]{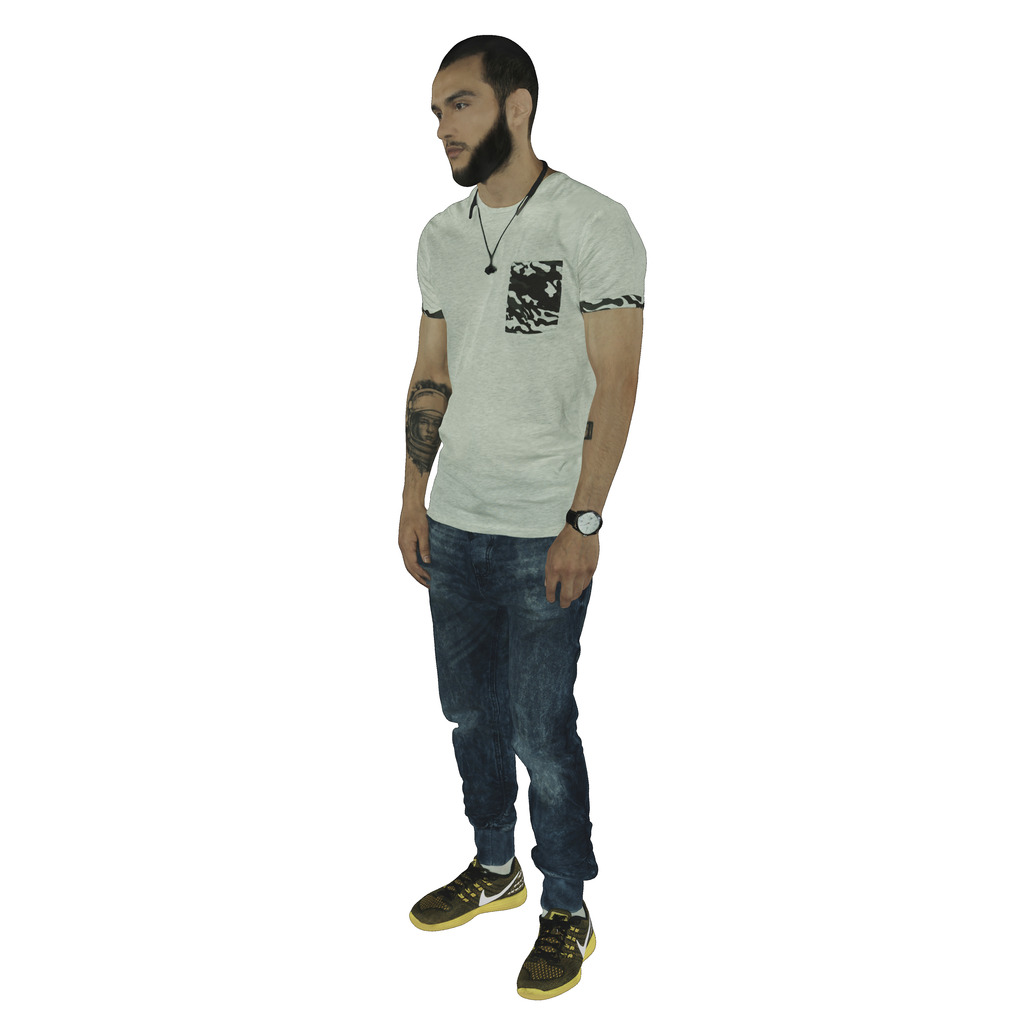}
\end{subfigure}
\caption{Training views for the human high resolution dataset. Due to the large size of the $4096 {\times} 4096$ png images, we converted them to jpg and scaled them down to $1024 {\times} 1024$ for this figure.}
\label{fig:training_views_human_high_res}
\end{figure}

\begin{table}[tb]
\small
\caption{Hyperparameter table: High-resolution texture reconstruction.}
\label{tab:hyperparameters_texture_recon_high_res}
\begin{center}
\renewcommand{\arraystretch}{1.5}
\begin{tabular}{p{3.2cm} p{7cm}}
    \toprule
    Hyperparameter &
    Value \\
    
    \midrule
    \texttt{optimizer} & Adam with default parameters ($\beta_1 = 0.9$, ${\beta_2 = 0.999}$, $\epsilon= 10^{-8}$) \\
    \texttt{learning rate} & 0.0001 \\
    \texttt{batch size} & 4096 \\
    \texttt{image size} & $4096 {\times} 4096$ \\
    \texttt{random seed} & 0 \\
    \texttt{eigenfunctions} & 1 to 4096 where 0 is the constant eigenfunction \\
    \texttt{epochs} & 500 \\
    \bottomrule
\end{tabular}
\end{center}
\end{table}

\begin{figure}[tb]
\centering
\begin{subfigure}[b]{0.24\textwidth}
\centering
\includegraphics[width=0.63\textwidth]{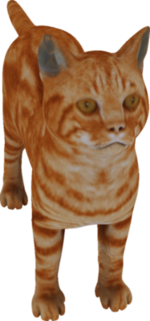}
\caption{\vspace{-0.2cm}NeuTex~\cite{DBLP:conf/cvpr/XiangXHHS021}}
\label{fig:supp:cat_texture_recon_neutex}
\end{subfigure}
\hfill
\begin{subfigure}[b]{0.24\textwidth}
\centering
\includegraphics[width=0.63\textwidth]{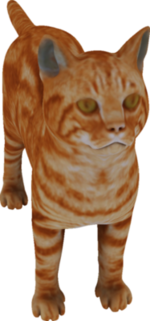}
\caption{\vspace{-0.2cm}TF ($\sigma{=}8$)~\cite{DBLP:conf/iccv/OechsleMNSG19,DBLP:conf/nips/TancikSMFRSRBN20}}
\label{fig:supp:cat_texture_recon_rff8}
\end{subfigure}
\hfill
\begin{subfigure}[b]{0.24\textwidth}
\centering
\includegraphics[width=0.63\textwidth]{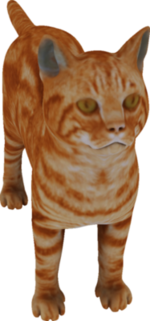}
\caption{\vspace{-0.2cm}Ours}
\label{fig:supp:cat_texture_recon_intrinsic}
\end{subfigure}
\hfill
\begin{subfigure}[b]{0.24\textwidth}
\centering
\includegraphics[width=0.63\textwidth]{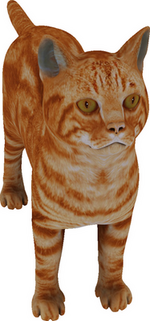}
\caption{\vspace{-0.2cm}GT Image}
\label{fig:supp:cat_texture_recon_gt}
\end{subfigure}
\\[3.0ex]
\begin{subfigure}[b]{0.24\textwidth}
\centering
\includegraphics[width=0.63\textwidth]{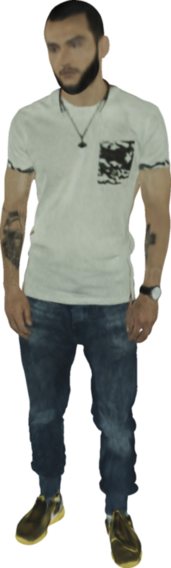}
\caption{\vspace{-0.2cm}NeuTex~\cite{DBLP:conf/cvpr/XiangXHHS021}}
\label{fig:supp:human_texture_recon_neutex}
\end{subfigure}
\hfill
\begin{subfigure}[b]{0.24\textwidth}
\centering
\includegraphics[width=0.63\textwidth]{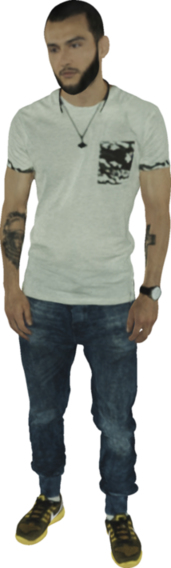}
\caption{\vspace{-0.2cm}TF ($\sigma{=}8$)~\cite{DBLP:conf/iccv/OechsleMNSG19,DBLP:conf/nips/TancikSMFRSRBN20}}
\label{fig:supp:human_texture_recon_rff8}
\end{subfigure}
\hfill
\begin{subfigure}[b]{0.24\textwidth}
\centering
\includegraphics[width=0.63\textwidth]{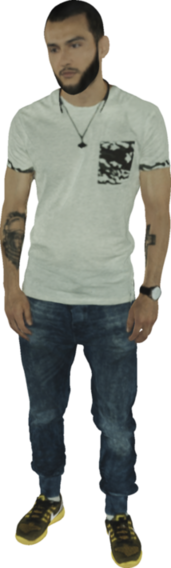}
\caption{\vspace{-0.2cm}Ours}
\label{fig:supp:human_texture_recon_intrinsic}
\end{subfigure}
\hfill
\begin{subfigure}[b]{0.24\textwidth}
\centering
\includegraphics[width=0.63\textwidth]{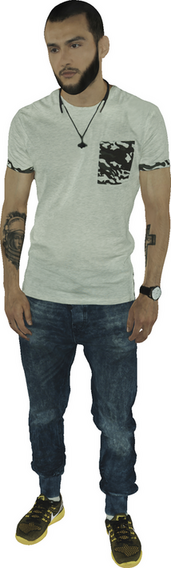}
\caption{\vspace{-0.2cm}GT Image}
\label{fig:supp:human_texture_recon_gt}
\end{subfigure}
\caption{Further qualitative comparisons of unseen views from the texture reconstruction experiment (\refmainpaper{Sec. 5.1}{sec:app:texture_representation}). All the methods from this figure use an embedding size of 1023. We want to point out that these renderings are not high-quality due to the low resolution of the training views.}
\label{fig:supp:texture_recon}
\end{figure}

In this experiment, we use the same \href{https://free3d.com/3d-model/cat-v1--522281.html}{cat}\footnote{https://free3d.com/3d-model/cat-v1--522281.html} and \href{https://www.turbosquid.com/3d-models/water-park-slides-3d-max/1093267}{human}\footnote{https://www.turbosquid.com/3d-models/water-park-slides-3d-max/1093267} mesh as in \cite{DBLP:conf/iccv/OechsleMNSG19}.
The 2D views of the meshes are rendered using Blender and blenderproc \cite{denninger2019blenderproc}. 
For the experiments in \refmainpaper{Sec. 5.1}{sec:app:texture_representation} and \refmainpaper{Sec. 5.2}{sec:app:geometry_representation}, we randomly render 5 training, 100 validation, and 200 test $512{\times}512$ views. 
The training views are visualized in \autoref{fig:training_views_5_views}.
The hyperparameters are given in \autoref{tab:hyperparameters_texture_recon}.
Further qualitative results for \refmainpaper{Sec. 5.1}{sec:app:texture_representation} can be found in \autoref{fig:supp:texture_recon}.
The human shown in \refmainpaper{Fig.~1}{fig:teaser} was trained using a $4096{\times}4096$ high-resolution dataset. 
It consists of 20 training, 20 validation, and 20 test views that were randomly generated.
All training views are visualized in \autoref{fig:training_views_human_high_res}.
In \autoref{tab:hyperparameters_texture_recon_high_res} the hyperparameters for the high-resolution human are shown.

\subsection{Discretization-agnostic Intrinsic Neural Fields}

We use the same datasets for the cat and human as in \refmainpaper{Sec. 5.1}{sec:app:texture_representation} but generate different discretizations of the meshes with the scripts from the \href{https://github.com/nmwsharp/discretization-robust-correspondence-benchmark}{Github project}\footnote{https://github.com/nmwsharp/discretization-robust-correspondence-benchmark} by Sharp et al.~\cite{sharp2021diffusion}.
The hyperparameters can be found in \autoref{tab:hyperparameters_texture_recon}.
The scripts and the meshes will be released together with the code.

\subsection{Intrinsic Neural Field Transfer}

\begin{table}[tb]
\small
\caption{Hyperparameter table: Texture transfer.}
\label{tab:hyperparameters_texture_transfer}
\begin{center}
\renewcommand{\arraystretch}{1.5}
\begin{tabular}{p{3.2cm} p{7cm}}
    \toprule
    Hyperparameter &
    Value \\
    
    \midrule
    \texttt{optimizer} & Adam with default parameters ($\beta_1 = 0.9$, $\beta_2 = 0.999$, $\epsilon= 10^{-8}$) \\
    \texttt{learning rate} & 0.0001 \\
    \texttt{batch size} & 4096 \\
    \texttt{image size} & $512 {\times} 512$ \\
    \texttt{random seed} & 0 \\
    \texttt{eigenfunctions} & 1 to 512 where 0 is the constant eigenfunction \\
    \texttt{epochs} & 500 \\
    \bottomrule
\end{tabular}
\end{center}
\end{table}

For the neural texture transfer, we train an intrinsic neural field on the cat from \refmainpaper{Sec. 5.1}{sec:app:texture_representation} with the hyperparameters shown in \autoref{tab:hyperparameters_texture_transfer}. 
Since the input to our method is only the first $d$ LBO eigenfunctions, we can reuse the network on the cat on other shapes without retraining as long as we know how to transfer the eigenfunctions.
This is exactly what functional maps \cite{ovsjanikov12funmaps} do. 
We use the method of \cite{DBLP:conf/cvpr/EisenbergerLC20}, which works with both isometric and non-isometric pairs, to calculate a correspondence $P$ between the cat $\mathcal{C}$ and a target $\mathcal{T}$, and obtain the functional map by projecting $C = \Phi_{\mathcal{T}}^\top P\Phi_{\mathcal{C}}$. 
Instead of using the eigenfunctions $\Phi_{\mathcal{T}}$ of the target shape directly, we use $\Phi_{\mathcal{T}} C$ as input to the network.

\subsection{Real-world Data and View Dependence}
\begin{table}[tb]
\small
\caption{Hyperparameter table: Real-world data and view dependence.}
\label{tab:hyperparameters_real_world}
\begin{center}
\renewcommand{\arraystretch}{1.5}
\begin{tabular}{p{3.2cm} p{7cm}}
    \toprule
    Hyperparameter &
    Value \\
    
    \midrule
    \texttt{optimizer} & Adam with default parameters ($\beta_1 = 0.9$, $\beta_2 = 0.999$, $\epsilon= 10^{-8}$) \\
    \texttt{learning rate} & 0.0002 \\
    \texttt{learning rate schedule} & plateau with $\text{factor} {=} 0.1$, $\text{patience} {=} 10$, $\text{threshold} {=} 0.0001$ \\
    \texttt{batch size} & 16384 \\
    \texttt{image size} & $2848 {\times} 4272$ \\
    \texttt{random seed} & 0 \\
    \texttt{eigenfunctions} & 1 to 4096 where 0 is the constant eigenfunction \\
    \texttt{epochs} & 500 \\
    
    \bottomrule
\end{tabular}
\end{center}
\end{table}

For experiments with real-world data, we select the objects detergent and cinnamon toast crunch from the BigBIRD dataset \cite{DBLP:conf/icra/SinghSNAA14}.
The dataset provides images from different directions, foreground-background segmentation  masks, camera calibration, and a reconstructed mesh of the geometry of each object.
The provided meshes and the object masks do not align well, which can cause color bleeding from the background into the reconstruction.
Hence, we improve the masks using intensity thresholding and morphological operations. Specifically, potential background pixels are identified based on their intensity due to the mostly white background. They are removed if they are close to the boundary of the initial mask. Finally, a small margin of the mask is eroded to limit the number of false positive mask pixels. This process could be replaced by a more advanced method for a large-scale experiment on real-world data.
For both objects, we train our method on 60 evenly-spaced views from a 360 degree perspective. The view used for qualitative comparison is centered between two training views.
The model for this experiment implements the view-dependent network architecture visualized in \autoref{fig:intrinsic_network_arch} and uses the hyperparameters shown in \autoref{tab:hyperparameters_real_world}.
We will release the preprocessing code and the training data used in this experiment along with the final publication.

\section{Initialization Dependence of Intrinsic Neural Fields} \label{sup:sec:init_dependence}

\begin{table}[tb]
\notsotiny
\caption{Initialization dependence for intrinsic neural fields. We retrain our method with different seeds on the texture reconstruction experiment from \refmainpaper{Sec.~5.1}{sec:app:texture_representation}. The results show that intrinsic neural fields are not overly dependent on the initialization. Additionally, the results presented in \refmainpaper{Sec.~5.1}{sec:app:texture_representation} with seed 0 are not cherry picked.}
\label{tab:intrinsic_init_dependence}
\begin{center}
\renewcommand{\arraystretch}{1.4}
\setlength{\tabcolsep}{2.9pt}

\begin{tabular}{p{1cm} p{1cm} c c c c c c c c c c c a}

\toprule

&
Seed &
0 &
1 &
2 &
3 &
4 &
5 &
6 &
7 &
8 &
9 &
Avg. &
Std. \\

\midrule

\multirow{3}{*}{cat} & PSNR $\uparrow$ & 34.82                          & 34.73                          & 34.80                          & 34.81                          & 34.85                          & 34.95                          & 34.87                          & 35.00                          & 34.77                          & \textBF{35.01}                 & 34.86 & 0.262\% \\
& DSSIM $\downarrow$ &  0.095                          & 0.096                          & 0.096                          & 0.096                          & 0.093                          & 0.094                          & 0.093                          & 0.092                          & 0.096                          & \textBF{0.091}                 & 0.094 & 1.846\% \\
& LPIPS $\downarrow$ & 0.153                          & 0.155                          & 0.156                          & 0.158                          & 0.153                          & 0.158                          & 0.154                          & \textBF{0.151}                 & 0.156                          & 0.156                          & 0.155 & 1.305\% \\

\midrule

\multirow{3}{*}{human} & PSNR $\uparrow$ & 32.48                          & 32.43                          & 32.35                          & 32.47                          & 32.47                          & \textBF{32.56}                 & 32.50                          & 32.50                          & 32.42                          & 32.42                          & 32.46 & 0.171\% \\
& DSSIM $\downarrow$ & 0.121                          & 0.122                          & 0.124                          & 0.121                          & 0.121                          & \textBF{0.120}                 & 0.121                          & 0.121                          & 0.121                          & 0.122                          & 0.121 & 0.843\% \\
& LPIPS $\downarrow$ & 0.306                          & 0.309                          & 0.305                          & 0.309                          & 0.305                          & 0.306                          & 0.306                          & \textBF{0.301}                 & 0.303                          & 0.305                          & 0.306 & 0.763\% \\

\bottomrule

\end{tabular}

\end{center}
\end{table}

In order to test the dependence of our method on the initialization, we repeat the experiment from \refmainpaper{Sec. 5.1}{sec:app:texture_representation} with different seeds.
The results can be found in \autoref{tab:intrinsic_init_dependence}.
Our method has a relative standard deviation of roughly 1\% for DSSIM and LPIPS and only about $0.2$\% for PSNR, which shows that intrinsic neural fields are not overly initialization-dependent.

\section{Theory} \label{sup:sec:theory}
In this section, we provide further details regarding the theory of intrinsic neural fields. In \autoref{fig:suppl_kernel_stat} we demonstrate that the composed neural tangent kernel (NTK) can be non-stationary for a general 2-manifold. In \autoref{sec:theory_thm_man1} the proof for \refmainpaper{Theorem~1}{thm:stationarity} is given specifically for 1-manifolds. Finally, in \autoref{sec:theory_ntk} we provide further theoretical results regarding the NTK, specifically \autoref{thm:ntk_of_normed}, which was used in the proof of \refmainpaper{Theorem~1}{thm:stationarity}.

\begin{figure}[tb]
\centering
\begin{subfigure}[b]{0.31\textwidth}
\centering
\includegraphics[width=\textwidth]{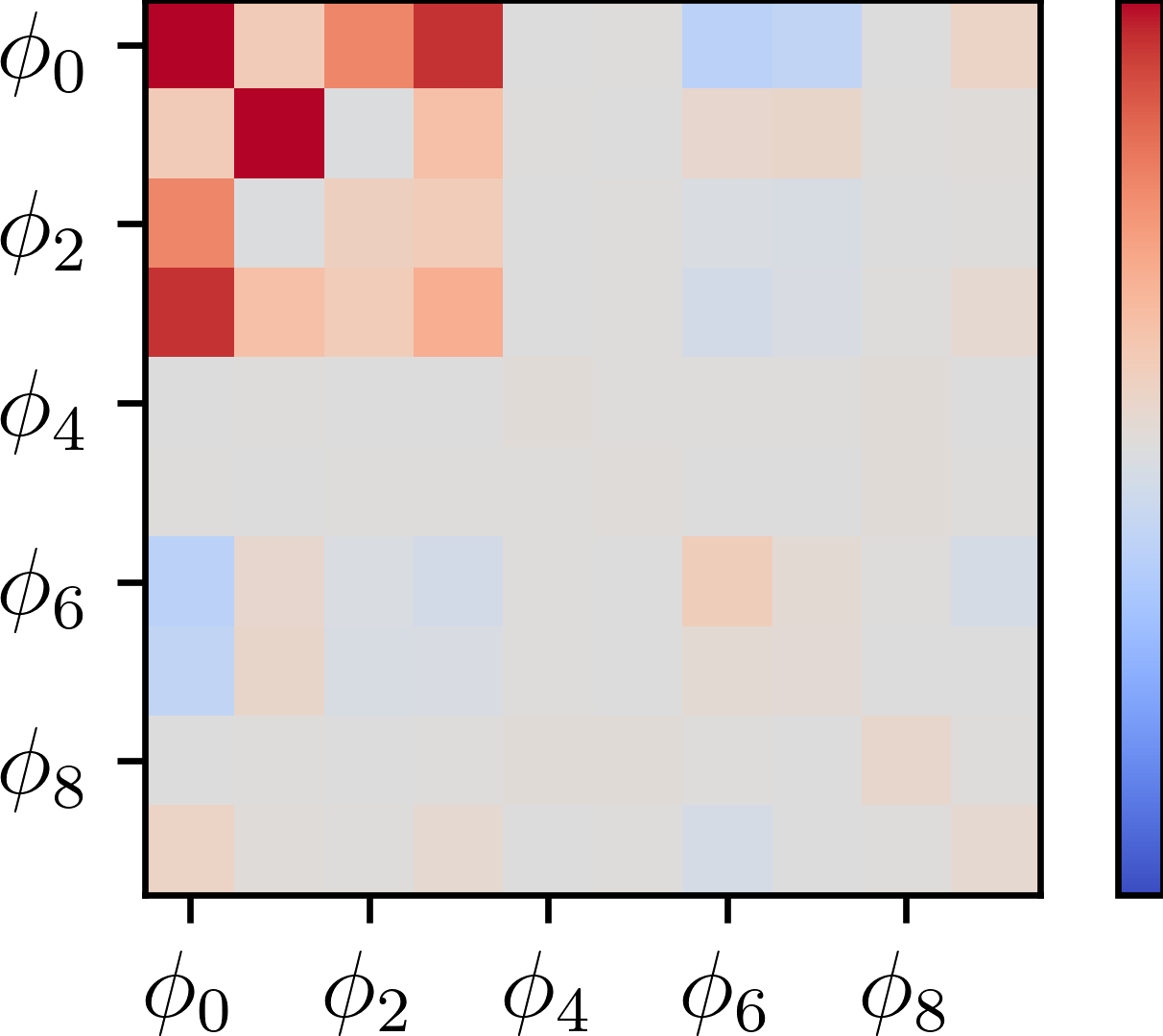}
\caption{\vspace{-0.2cm}XY}
\label{fig:suppl_kernel_stat_xy}
\end{subfigure}
\hfill
\begin{subfigure}[b]{0.31\textwidth}
\centering
\includegraphics[width=\textwidth]{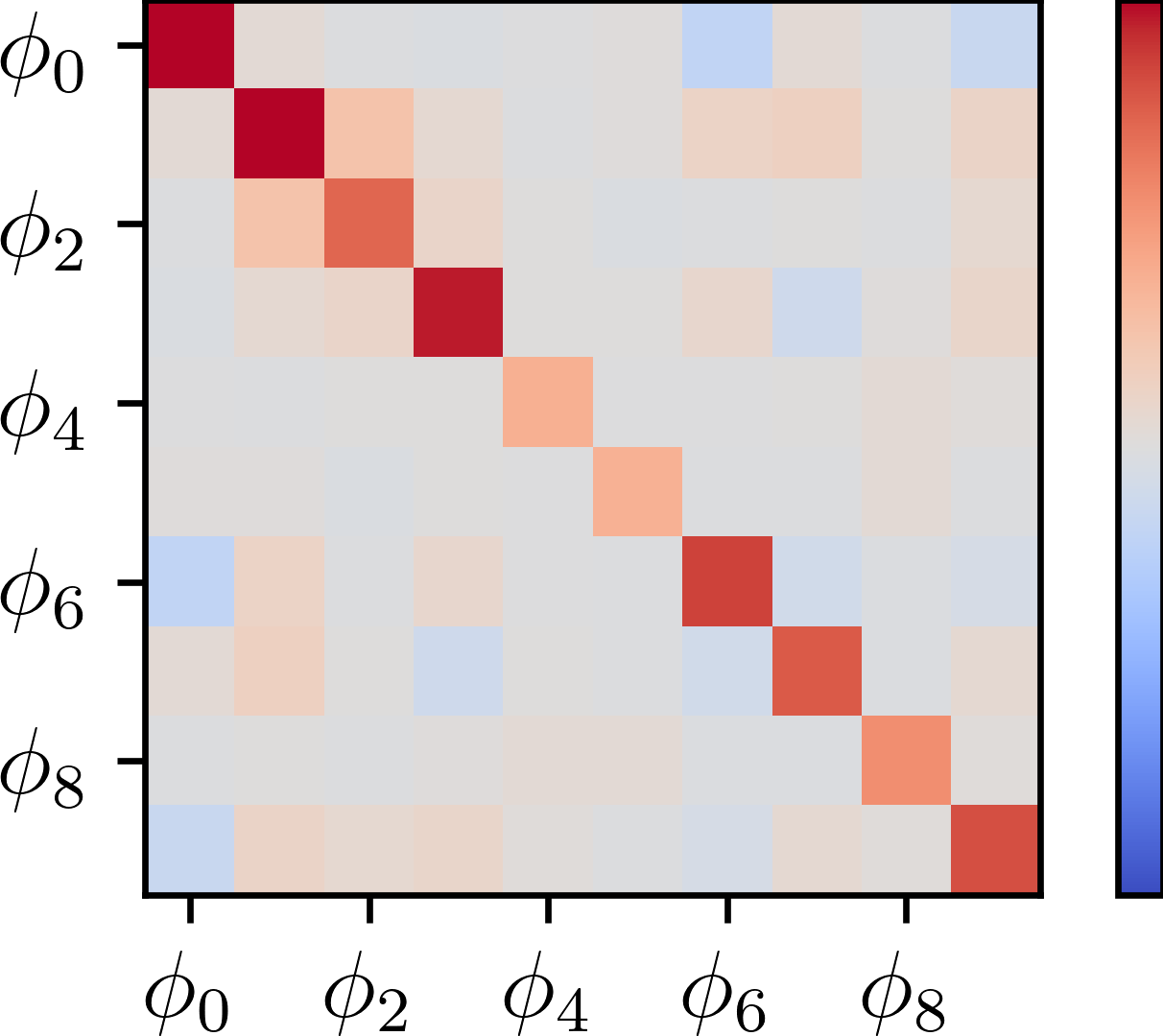}
\caption{\vspace{-0.2cm}RFF~\cite{DBLP:conf/nips/TancikSMFRSRBN20}}
\label{fig:suppl_kernel_stat_rff}
\end{subfigure}
\hfill
\begin{subfigure}[b]{0.31\textwidth}
\centering
\includegraphics[width=\textwidth]{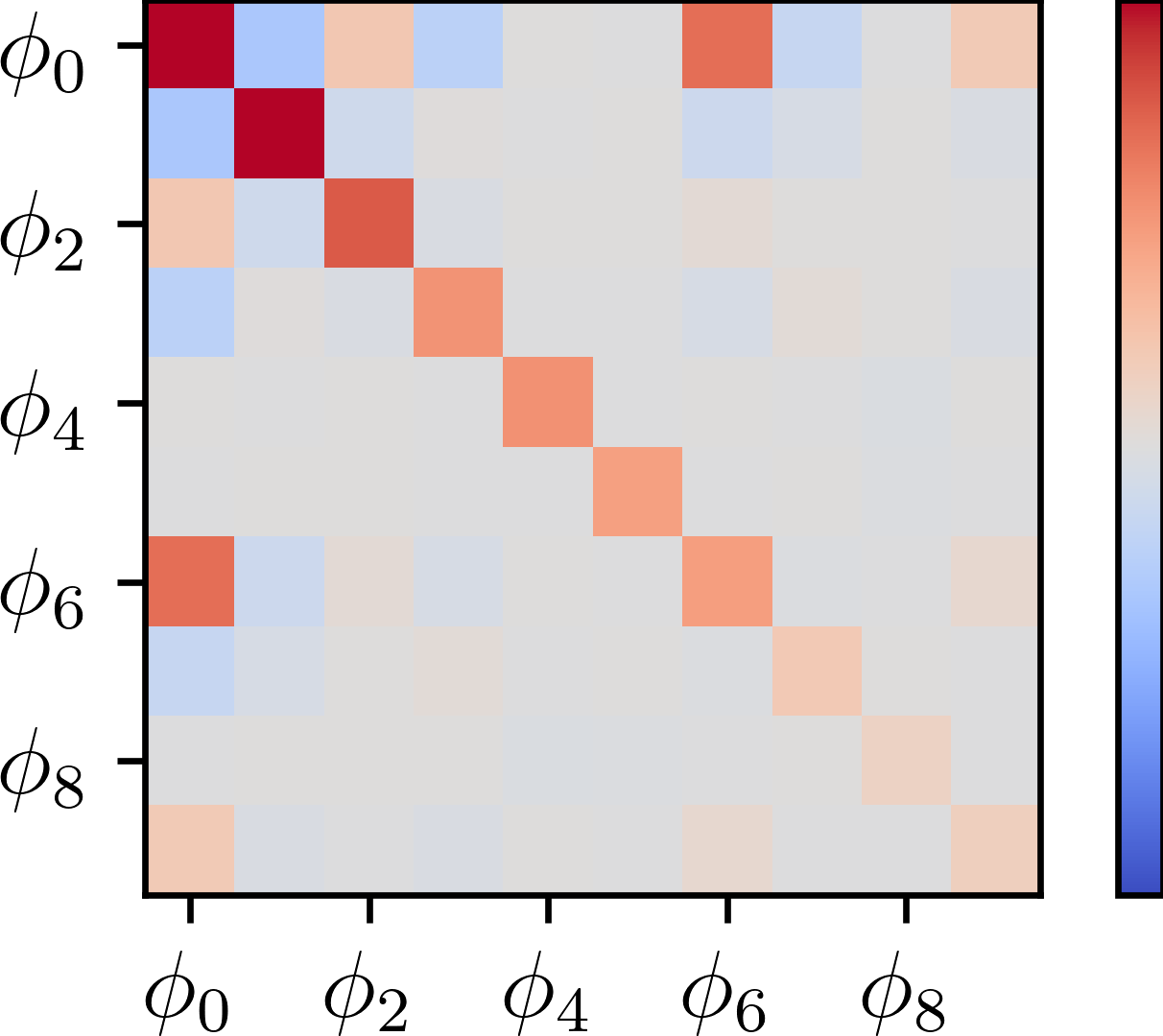}
\caption{\vspace{-0.2cm}Ours}
\label{fig:suppl_kernel_stat_intr}
\end{subfigure}
\caption{Non-stationarity of NTKs on a general 2-manifold. We investigate the non-stationarity of the neural tangent kernels (NTKs) on a general 2-manifold, namely the cat shown in \refmainpaper{Fig.~3}{fig:cat_3d_kernels}. Each small square in the image shows the coefficient $c_{ij}$ when projecting the kernel onto the LBO basis: ${c_{ij} = \int_{\man} \int_{\man} \ef_i(\pp) k(\pp, \qq) \ef_j(\qq) \dif\pp \dif \qq}$. The integral is approximated numerically as the area-weighted sum over the vertices. A stationary kernel as defined in \refmainpaper{Eqn.~4}{def:stationary} would have entries only along the main diagonal $c_{ij} = c_i \delta_{ij}$. The composed NTK is non-stationary for all features. We do not consider this a shortcoming of the proposed intrinsic neural fields because the NTK adapts to the intrinsic geometry of the underlying manifold as we show in  \refmainpaper{Fig.~3}{fig:cat_3d_kernels}.}
\label{fig:suppl_kernel_stat}
\end{figure}

\subsection{Theorem 1 on 1-Manifolds} \label{sec:theory_thm_man1}
The proof for \refmainpaper{Thm.~1}{thm:stationarity} only considered n-spheres. 
Here, we will give a short proof why it extends to general closed compact 1-manifolds. 
The proof depends only on properties of the spherical harmonics which are equivalent to the LBO eigenfunctions of closed 1-manifolds. 

Since the proof in the main paper is for n-spheres, it also holds for circles of arbitrary radius which are 1-spheres. 
Notice that all closed compact 1-manifolds $\mathcal{N}$ are isometric to the circle with radius equal to the circumference of $\mathcal{N}$. 
This is quite obvious if one considers that the geodesic distance between two points on a curve is simply the arc length between them, the geodesic distance on a closed 1-manifold is the minimum of both possible arc lengths. 
Therefore, the geodesic distances of any 1-manifolds with fixed circumference $r$ is invariant to its extrinsic embedding, and all 1-manifold with circumference $r$ are isometric to each other. 

The spherical harmonics are the eigenfunctions of the Laplace-Beltrami operator and those are invariant under isometries. Therefore, the spherical harmonics and all their properties transfer to general closed compact 1-manifolds and the proof still holds.

\subsection{Neural Tangent Kernel} \label{sec:theory_ntk}
In this section, we prove \autoref{thm:ntk_of_normed} that was used in the proof of \refmainpaper{Thm.~1}{thm:stationarity}. Additionally, we briefly discuss the positive definiteness of the NTK. As in the main paper, we consider the same setting as Jacot et al.~\cite{DBLP:conf/nips/JacotHG18}.

\begin{lemma} \label{thm:ntk_of_normed}
Let $\kntk: \R^n \times \R^n \to \R$ be the neural tangent kernel for a multilayer perceptron (MLP) with non-linearity $\sigma$. If the inputs are restricted to a \textbf{scaled} hypersphere $\| x \| = r$ then there holds
\begin{equation}
    \kntk(x, x') = \hntk(\inner{x, x'}) \,,
\end{equation}
for a scalar function $\hntk: \R \to \R$. 
\end{lemma}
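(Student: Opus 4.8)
The plan is to reduce the statement to the known $r=1$ result of Jacot et al.\ \cite{DBLP:conf/nips/JacotHG18} by a scaling argument, tracking how the NTK recursion transforms under rescaling the input. Recall that for an MLP the NTK is built from a recursion on the layerwise covariance kernels $\Sigma^{(1)}, \Sigma^{(2)}, \dots$ together with the derivative kernels $\dot\Sigma^{(\ell)}$, where $\Sigma^{(1)}(x,x') = \tfrac{1}{n}\inner{x,x'} + \beta^2$ (up to the usual normalization/bias conventions) and each subsequent $\Sigma^{(\ell+1)}(x,x')$ depends on $\Sigma^{(\ell)}$ only through the three scalars $\Sigma^{(\ell)}(x,x)$, $\Sigma^{(\ell)}(x',x')$, $\Sigma^{(\ell)}(x,x')$, via an expectation over a bivariate Gaussian whose covariance is that $2\times 2$ matrix. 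The NTK $\Theta(x,x')$ is then assembled from the $\Sigma^{(\ell)}$ and $\dot\Sigma^{(\ell)}$ by sums and products. The key observation is that on the sphere $\|x\| = \|x'\| = r$ the ``diagonal'' quantities $\Sigma^{(\ell)}(x,x)$ and $\Sigma^{(\ell)}(x',x')$ are equal to each other and, crucially, are \emph{constants} depending only on $r$ and $\ell$, not on the particular points. Hence the only remaining degree of freedom feeding the recursion is the off-diagonal entry, which at the first layer equals $\tfrac{1}{n}\inner{x,x'} + \beta^2$ — a function of $\inner{x,x'}$ alone.

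The steps I would carry out are: (1) State the NTK recursion precisely in the form used in \cite{DBLP:conf/nips/JacotHG18}, isolating the base case $\Sigma^{(1)}(x,x') = \tfrac{1}{n}\inner{x,x'}$ (plus bias term if included). (2) Show by induction on $\ell$ that, restricted to $\|x\|=\|x'\|=r$, there exist scalar functions $s_\ell, \dot s_\ell : \R \to \R$ and constants $d_\ell(r)$ such that $\Sigma^{(\ell)}(x,x') = s_\ell(\inner{x,x'})$, $\dot\Sigma^{(\ell)}(x,x') = \dot s_\ell(\inner{x,x'})$, and $\Sigma^{(\ell)}(x,x) = d_\ell(r) = s_\ell(r^2)$; the inductive step uses that the Gaussian expectation defining $\Sigma^{(\ell+1)}$ is a fixed function of the three entries of the covariance matrix, and by induction those three entries are $d_\ell(r), d_\ell(r), s_\ell(\inner{x,x'})$, a function of $\inner{x,x'}$ only. (3) Conclude that $\Theta(x,x') = \kntk(x,x')$, being a finite combination of these quantities, is likewise a function of $\inner{x,x'}$ alone; call this function $\hntk$. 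This is exactly the claimed identity $\kntk(x,x') = \hntk(\inner{x,x'})$.

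The main obstacle — really the only subtlety — is verifying cleanly that the diagonal entries $\Sigma^{(\ell)}(x,x)$ are genuinely constant over the scaled sphere and depend only on $r$. This follows because $\Sigma^{(1)}(x,x) = \tfrac1n\|x\|^2 = r^2/n$ is constant, and the recursion sends constant diagonals to constant diagonals (the diagonal of $\Sigma^{(\ell+1)}$ depends only on $\Sigma^{(\ell)}(x,x)$). One should also note that the paper's MLP may use the Jacot-style normalization without bias, in which case $\Sigma^{(1)}(x,x') = \tfrac1n\inner{x,x'}$ and the argument is unchanged; if a bias $\beta$ is present, $\inner{x,x'}$ simply gets an affine reparametrization absorbed into $\hntk$. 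A secondary point worth a remark is that $\hntk$ depends on $r$, so the statement is ``for each fixed $r$ there is such a function''; on $\sphere^n \subset \R^{n+1}$ embedded with radius $1$ this recovers the standard result, and the extra generality is precisely what is needed in the proof of \autoref{thm:stationarity} where $\|\gamma(\pp)\|$ is some constant not necessarily equal to $1$.
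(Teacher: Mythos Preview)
Your proposal is correct and follows essentially the same approach as the paper: both argue by induction over the layer index that, on the scaled sphere, the diagonal entries $\Sigma^{(\ell)}(x,x)$ are constants depending only on $r$, so the entire recursion for $\Sigma^{(\ell)}$, $\dot\Sigma^{(\ell)}$, and $\Theta^{(\ell)}$ feeds only on the off-diagonal entry, which at the base layer is an affine function of $\inner{x,x'}$. Your opening sentence about a ``scaling argument reducing to $r=1$'' is a slight misdescription of what you then actually do (a direct induction, not a rescaling), but the substance of steps (1)--(3) matches the paper's proof.
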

\begin{proof}
For this proof we adopt the notation of Jacot et al.~\cite{DBLP:conf/nips/JacotHG18} to simplify following both papers simultaneously. We give a detailed proof, as this also gives good insight into the NTK. Let all requirements be equivalent to the ones used for \cite[Prop.~1]{DBLP:conf/nips/JacotHG18}. Jacot et al.\ show that the neural network Gaussian process (NNGP) has covariance $\Sigma^{(L)}$ defined recursively
\begin{align}
    \Sigma^{(1)} (x, x') &= \frac{1}{n_0} x\T x' + \beta^2 \label{eqn:ntk_lemma_sigma1} \\
    \Sigma^{(L+1)} (x, x') &=  \expec_{(u, v) \sim N(0, \Lambda^{(L)}(x, x'))} [\sigma(u) \sigma(v)] + \beta^2 \label{eqn:ntk_lemma_expec1} \\[1.1ex]
    \Lambda^{(L)}(x, x') &= \begin{pmatrix}
\Sigma^{(L)}(x, x) & \Sigma^{(L)}(x, x') \\
\Sigma^{(L)}(x', x) & \Sigma^{(L)}(x', x')
\end{pmatrix} \, ,
\end{align}
where $\sigma: \R \to \R$ is the non-linearity of the network and $\beta$ is related to the bias of the network. We use $u = f(x)$ and $v=f(x')$ instead of the Gaussian process notation used in \cite{DBLP:conf/nips/JacotHG18}. We prove by induction that $\Sigma^{(L)} (x, x')$ depends only on $x\T x'$, which also implies that $\Sigma^{(L)} (x, x)$ does not depend on $x$ because $x\T x = r^2$. For $L=1$ this follows directly from the definition. Assume now that for $L$ we have that $\Sigma^{(L)} (x, x')$ depends only on $x\T x'$. It directly follows that $\Lambda^{(L)}(x, x')$ and thus $\Sigma^{(L+1)} (x, x')$ also only depend on $x\T x'$, which is the induction step.

Given the NNGP kernel, the neural tangent kernel (NTK) is given by Theorem~1 from Jacot et al:
\begin{align}
    \Theta_{\infty}^{(1)} (x, x') &=  \Sigma^{(1)} (x, x') \\
    \Theta_{\infty}^{(L+1)} (x, x') &=  \Theta_{\infty}^{(L)} (x, x') \dot{\Sigma}^{(L+1)} (x, x') +  \Sigma^{(L+1)} (x, x') \\
    \dot{\Sigma}^{(L+1)} (x, x') &= \expec_{(u, v) \sim N(0, \Lambda^{(L)}(x, x'))} [\dot{\sigma}(u) \dot{\sigma}(v)] + \beta^2 \,, \label{eqn:ntk_lemma_expec2}
\end{align}
where $\dot{\sigma}$ is the derivative of the non-linearity. By a similar induction argument to above we obtain that $\Theta_{\infty}^{(L)} (x, x')$ only depends on $x\T x'$ and hence that $\Theta_{\infty}^{(L)} (x, x)$ does not depend on $x$. In the notation of our \autoref{thm:ntk_of_normed} this means that $\kntk$ only depends on  $\inner{x, x'}$ and can thus be written as $\hntk(\inner{x, x'})$ for a scalar function $\hntk: \R \to \R$. \qed
\end{proof}

\pseudoparagraph{Positive-definiteness of the NTK}
In the proof of \refmainpaper{Theorem~1}{thm:stationarity}, we used the fact that the NTK is positive definite as shown by \cite[Prop.~2]{DBLP:conf/nips/JacotHG18}. Their proposition is stated for $\| x \| = 1$ and the extension to $\| x \| = r$ requires only slight changes, which we will detail in the following. In the third step of Jacot et al.'s proof when doing the change of variables to arrive at their Eqn.~1 the following changes
\begin{equation}
    \expec_{(X, Y) \sim N(0, \tilde{\Sigma})} [\sigma(X) \sigma(Y)] + \beta^2
    = \hat{\mu}\left( \frac{n_0 \beta^2 + x\T x'}{n_0 \beta^2 + \textcolor{red}{r^2}}\right) + \beta^2 \,,
\end{equation}
where $\hat{\mu}: [-1, 1] \to \R$ is the dual in the sense of \cite[Lem.~2]{DBLP:conf/nips/JacotHG18} of the function $\mu: \R \to \R$ defined by $\mu(x) = \sigma\left( x \sqrt{\textcolor{red}{r^2}/n_0 + \beta^2} \right)$. Finally, in step 5 of their proof
\begin{equation}
\nu(x\T x') = \nu(\textcolor{red}{r^2} \rho) = \beta^2 + \sum_{i=0}^\infty a_i \, \left(\frac{n_0 \beta^2 + \textcolor{red}{r^2} \rho}{n_0 \beta^2 + \textcolor{red}{r^2}} \right)^i \,.
\end{equation}

\end{document}